\pdfoutput=1
\documentclass[11pt]{article}

\usepackage[letterpaper,margin=1in]{geometry}
\usepackage[T1]{fontenc}
\usepackage[utf8]{inputenc}
\usepackage{lmodern}
\usepackage{times}
\usepackage[round,authoryear]{natbib}

\usepackage{xcolor}
\usepackage{bm}
\usepackage{mathtools}
\usepackage{enumitem}
\setlist{topsep=4pt plus 1pt minus 2pt, partopsep=1pt plus 0.5pt minus 0.5pt,
  itemsep=2pt plus 1pt minus 0.5pt, parsep=2pt plus 1pt minus 0.5pt}
\usepackage{booktabs}
\usepackage{microtype}
\usepackage{graphicx}
\usepackage{url}

\usepackage{amsmath,amssymb,amsthm,amsfonts}

\usepackage{amsmath,amsfonts,bm}

\def\1{\bm{1}}

\DeclareMathAlphabet{\mathsfit}{\encodingdefault}{\sfdefault}{m}{sl}
\SetMathAlphabet{\mathsfit}{bold}{\encodingdefault}{\sfdefault}{bx}{n}

\newcommand{\E}{\mathbb{E}}

\newcommand{\R}{\mathbb{R}}

\newtheorem{theorem}{Theorem}[section]
\newtheorem{proposition}[theorem]{Proposition}
\newtheorem{lemma}[theorem]{Lemma}
\newtheorem{corollary}[theorem]{Corollary}
\newtheorem{definition}[theorem]{Definition}
\newtheorem{assumption}{Assumption}
\newtheorem{example}[theorem]{Example}
\newtheorem{remark}[theorem]{Remark}

\newcommand{\N}{\mathbb{N}}

\newcommand{\Om}{\Omega}
\newcommand{\PO}{\mathcal{P}(\Om)}                     %
\newcommand{\PPO}{\mathcal{P}(\mathcal{P}(\Om))}       %
\newcommand{\bbP}{\mathbb{P}}
\newcommand{\bbQ}{\mathbb{Q}}

\newcommand{\dd}{\,\mathrm{d}}
\newcommand{\MMD}{\mathrm{MMD}}
\newcommand{\SW}{\mathrm{SW}}
\newcommand{\VSW}{\mathrm{VSW}}
\newcommand{\W}{\mathrm{W}}
\newcommand{\WoW}{\mathbb{W}}
\newcommand{\Hk}{\mathcal{H}_{\kappa}}
\newcommand{\ip}[2]{\langle #1,#2\rangle}
\newcommand{\norm}[1]{\left\lVert #1 \right\rVert}
\newcommand{\pf}{\#}
\newcommand{\Lip}{\mathrm{Lip}}
\newcommand{\supp}{\mathrm{supp}}
\newcommand{\law}{\mathrm{law}}
\newcommand{\Cont}{C(\Om)}

\usepackage{algorithm}
\usepackage{algpseudocode}

\usepackage{multirow}
\usepackage{wrapfig}

\definecolor{linkblue}{RGB}{0,64,160}
\usepackage[colorlinks=true,linkcolor=linkblue,citecolor=linkblue,urlcolor=linkblue,
  pdftitle={Vertical Sliced Wasserstein Distances for Meta-Measures}]{hyperref}

\title{\textbf{Vertical Sliced Wasserstein Distances\\ for Meta-Measures}}

\usepackage{authblk}

\author[1]{Elaheh Akbari}
\author[2]{Ahmadreza Moradipari}
\author[2]{Kyungtae Han}
\author[3]{Hamed Pirsiavash}
\author[4]{\protect\\{Yikun Bai}}
\author[5]{Rocio Diaz Martin}
\author[1]{Soheil Kolouri}

\affil[1]{College of Connected Computing, Vanderbilt University}
\affil[2]{Toyota Motor North America, InfoTech Labs}
\affil[3]{Department of Computer Science, University of California, Davis}
\affil[4]{Department of Mathematics, Purdue University}
\affil[5]{Department of Mathematics, Florida State University}
\affil[ ]{\vspace{0.4em} \texttt{elaheh.akbari@vanderbilt.edu}}

\date{}

\begin{document}

\maketitle

\begin{abstract}
\noindent Many modern learning problems require comparing probability distributions whose individual observations are themselves probability measures, naturally giving rise to \emph{meta-measures} in $\PPO$. However, optimal transport distances on this space can be prohibitively expensive to evaluate. We introduce the \emph{vertical sliced Wasserstein} ($\VSW$) distance, a scalable metric on $\PPO$. Rather than slicing along the transport geometry of $\mathcal{P}(\Om)$, $\VSW$ projects each inner measure using continuous test functions and compares the resulting one-dimensional distributions with one-dimensional Wasserstein distance. We prove that $\VSW$ is a valid metric on $\PPO$, metrizes weak convergence, and admits dimension-independent statistical convergence guarantees. Empirically, $\VSW$ provides high accuracy and efficiency compared with existing distances for meta-measures. We further derive a practical particle gradient for the empirical $\VSW_2^2$ objective and demonstrate its use for gradient-based transport of meta-measures. $\VSW$ provides a simple and scalable alternative for comparing and optimizing distributions of distributions.

\end{abstract}

\section{Introduction and Related Work}
\label{sec:intro}

Many learning problems naturally involve \emph{distributions of distributions} (DoDs): a dataset can be viewed as a distribution over class-conditional distributions \citep{alvarez2020otdd}, a collection of shapes as a distribution over point clouds where each point cloud is itself an empirical distribution \citep{haviv2024wasserstein}, and documents, cell populations, or mixture models can similarly be represented as collections of probability measures \citep{yurochkin2019hierarchical,nguyen2024summarizing}. Formally, these objects are \emph{meta-measures} $\bbP\in\PPO$, i.e., probability measures whose observations are themselves probability measures on $\Om$. Comparing such objects requires accounting for variation both \emph{within} each inner distribution and \emph{across} the population of distributions.

\noindent\textbf{Wasserstein distances on meta-measures.} A natural metric for this setting is Wasserstein-over-Wasserstein (WoW), which solves an outer optimal transport (OT) problem using Wasserstein distance between inner measures as its ground cost.  Related constructions appear in hierarchical OT \citep{yurochkin2019hierarchical}, generative modeling \citep{dukler2019wasserstein}, and the optimal transport dataset distance (OTDD) \citep{alvarez2020otdd}.  While geometrically appealing, WoW is expensive: comparing empirical meta-measures requires many inner OT problems and inherits the statistical and computational difficulties of Wasserstein distance at both levels \citep{dukler2019wasserstein,bonet2025flowing}. Additionally, \citet{bonet2025flowing} develop WoW gradient flows and introduce tractable MMD objectives with Sliced-Wasserstein kernels.

\noindent\textbf{Sliced and scalable meta-measure distances.}
Recent methods therefore seek scalable alternatives through slicing. Double-Sliced Wasserstein (DSW) \citep{dsw} uses one-dimensional Wasserstein projections and quantile representations, Busemann-based approaches slice along geodesic directions of Wasserstein space \citep{bonet2025busemann}, and s-OTDD uses moment-based projections for dataset comparison \citep{sotdd}. Although substantially cheaper than exact WoW, these approaches still construct their projections from the transport geometry of the inner measures or surrogates thereof.

\textbf{We introduce \emph{Vertical Sliced Wasserstein} (VSW)}, a complementary approach that slices a measure through its action on continuous test functions. For a test function $f\in\Cont$, an inner measure $\mu$ is represented by the scalar
$P_f(\mu)=\int_\Om f\,\dd\mu$.
Applying this projection to every inner measure converts a meta-measure into a one-dimensional distribution. Then, one can compare efficiently $\bbP$ and $\bbQ$ in $\mathcal{P}(\mathcal{P}(\Om))$ using the one-dimensional $p$-Wasserstein distance $\W_p$ by \(
\VSW_p^p(\bbP,\bbQ)
:=
\E_{f\sim\sigma}
[
\W_p^p\big((P_f)_\pf\bbP,(P_f)_\pf\bbQ\big)
],
\) where $\sigma$ is the law of a Gaussian process with universal covariance kernel $\kappa$. We call this construction \emph{vertical} because it acts through the dual representation of each measure, in contrast to methods that slice along its transport geometry.

This simple change has important practical consequences. Each slice compresses an entire inner distribution to one scalar, requiring only test-function evaluation and averaging before a single one-dimensional OT computation across the outer distributions. VSW therefore avoids inner OT, per-distribution quantile representations, and Gaussian surrogates. At the same time, the family of test-function projections is rich enough to distinguish meta-measures: we prove that $\VSW_p$ is a metric and metrizes weak convergence. For $p=2$, VSW is additionally Hilbertian and closely connected to kernel mean embeddings, while retaining the outer Wasserstein comparison that distinguishes distributions beyond their mean measures.

Empirically, this combination yields an accurate and efficient meta-measure distance. Across dataset comparison, point-cloud and image-patch distributions, and shape classification, VSW closely tracks transport-based reference distances while consistently reducing runtime. %
We further derive an explicit gradient formulation for the empirical $\VSW_2^2$ objective, supporting parametric
generative modeling. %
Our main contributions are:
\begin{enumerate}[leftmargin=15pt]
    \item We introduce \textbf{Vertical Sliced Wasserstein (VSW)}, a new sliced distance for meta-measures based on test-function projections, and prove that it defines a metric that metrizes weak convergence.

    \item We establish statistical and computational scalability, including dimension-independent two-stage convergence rates and efficient Monte Carlo estimation without inner OT.

    \item We show that $\VSW_2$ is Hilbertian, induces universal characteristic kernels on meta-measures, and admits, under discretization, an explicit gradient for optimization and generative modeling.

    \item We provide broad empirical validation across dataset comparison, point clouds, image patches, shape classification, discrete particle gradient flows, and hierarchical generation, where VSW is fast while maintaining strong agreement with transport-based baselines.
\end{enumerate}

\enlargethispage{\baselineskip}
The closest computational baseline is DSW  \citep{dsw}, which first projects multivariate inner measures to one dimension, represents the resulting measures through quantile functions in $L^2([0,1])$, and then applies functional slicing with Gaussian-process directions. VSW instead applies test functions directly to the original inner measures, reducing each to a single scalar per slice and avoiding both inner quantile construction and an additional Euclidean slicing stage.
A related test-functional approach is the hierarchical IPM of \citet{catalano2024hierarchical}, which, unlike VSW, requires optimization over a class of test functions.%

\section{Background and standing assumptions}
\label{sec:background}

\paragraph{Spaces.} Throughout, $(\Om,\rho)$ is a compact metric space. When gradients on $\Om$ are needed (Sections~\ref{sec:computation}--\ref{sec:flows}) we take $\Om\subset\R^d$. $\Cont$ denotes the continuous real functions with the sup norm $\norm{\cdot}_\infty$. Since $\Om$ is compact, $(\Cont,\|\cdot\|_\infty)$ is a separable Banach space. By the Riesz--Markov representation theorem its dual is the space of finite signed Radon measures $\mathcal{M}(\Om)$ with the total variation norm, and $\PO$ is a convex subset of the unit ball of $\mathcal{M}(\Om)$. The weak (narrow) topology on $\PO$ is the restriction of the weak$^*$ topology of $\mathcal{M}(\Om)=\Cont^*$.

By Prokhorov's theorem, $\PO$ is weakly compact 
and it is metrizable because $\Cont$ is separable. Its Borel $\sigma$-algebra, $\mathcal{B}(\PO)$, is generated by the maps $P_f:\PO\to \R$ 
\begin{equation}\label{eq:vert-proj}
    P_f:\mu\mapsto\ip{f}{\mu}:=\int_\Om f\dd\mu,\qquad f\in\Cont,
\end{equation}
\citep[Lem.~4.5 and Thm.~4.2]{kallenberg2017random}.
Since $\PO$ is itself a compact metric space, the same reasoning applies one level up: $\PPO$, the space of Borel probability measures on $\PO$, is compact and metrizable in its weak topology. For $\bbP\in\PPO$ we write $\mu\sim\bbP$ for a random inner measure. %

Note the two elementary but crucial algebraic properties of \emph{vertical projections} defined by \eqref{eq:vert-proj}:
\begin{lemma}%
\label{lem:algebraic}
For all $f,g\in\Cont$, $a,b\in\R$, $\mu,\mu_0,\mu_1\in\PO$, and $t\in[0,1]$,
\[
P_{af+bg}(\mu)=aP_f(\mu)+bP_g(\mu),
\qquad
P_f((1-t)\mu_0+t\mu_1)
=(1-t)P_f(\mu_0)+tP_f(\mu_1).
\]
Thus, $P_f(\mu)$ is linear in $f$ and affine in $\mu$ along mixtures.
\end{lemma}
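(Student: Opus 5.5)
Both identities follow from linearity of the Lebesgue integral, applied once in the integrand and once in the measure. The plan is to unfold the definition \eqref{eq:vert-proj} and use these two elementary facts in turn. The only preliminary point is well-definedness. Since $\Om$ is compact, every $f\in\Cont$ is bounded and Borel measurable. Hence $\int_\Om f\dd\mu$ is finite for every $\mu\in\PO$, with $|P_f(\mu)|\le\norm{f}_\infty$. All integrals below therefore exist and are finite, and no integrability issue arises.

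For the first identity, fix $\mu\in\PO$ and $f,g\in\Cont$, $a,b\in\R$. The function $af+bg$ again lies in $\Cont$, so $P_{af+bg}(\mu)$ is defined. Linearity of the integral against the fixed measure $\mu$ (valid for bounded measurable integrands) gives
\[
\int_\Om (af+bg)\dd\mu=a\int_\Om f\dd\mu+b\int_\Om g\dd\mu ,
\]
which is exactly $aP_f(\mu)+bP_g(\mu)$.

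For the second identity, first note that $\mu_t:=(1-t)\mu_0+t\mu_1$ is a probability measure. It is nonnegative and countably additive as a nonnegative combination of measures, and it has total mass $(1-t)+t=1$. So $P_f(\mu_t)$ is defined. By the definition of the mixture, $\int_\Om \1_A\dd\mu_t=(1-t)\mu_0(A)+t\mu_1(A)$ for every Borel set $A$. This extends by linearity to simple functions. It then extends to bounded measurable $f$, either by uniform approximation by simple functions together with the bound $|\int h\dd\nu|\le\norm{h}_\infty$ for probability measures $\nu$, or by monotone convergence applied to positive and negative parts. This yields $P_f(\mu_t)=(1-t)P_f(\mu_0)+tP_f(\mu_1)$.

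Alternatively, both statements follow immediately from the Riesz--Markov duality recalled in Section~\ref{sec:background}. There, $\ip{f}{\mu}$ is the bilinear pairing between $\Cont$ and $\mathcal{M}(\Om)$, which is linear in each argument, and mixtures are convex combinations inside $\mathcal{M}(\Om)$. No step is a genuine obstacle. The only point needing care is the extension of linearity in the measure from indicators to continuous integrands, and that is the standard approximation argument above.
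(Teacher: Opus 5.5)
Your proposal is correct and matches the paper's proof: linearity of the integral in the integrand gives the first identity, and for the second you verify the mixture identity on Borel sets, extend to simple functions, and pass to bounded continuous $f$ by uniform approximation. The extra well-definedness check and the Riesz--Markov remark are harmless additions.
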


\noindent\textbf{Optimal transport between one-dimensional measures.} For $p\ge1$ and $\alpha,\beta\in\mathcal{P}(\R)$, with finite $p$-moment, $1\leq p<\infty$, let  $F_\alpha,F_\beta$ denote their corresponding Cumulative Distribution Functions (CDFs), and $q_\alpha=F_\alpha^{-1},q_\beta=F_\beta^{-1}$ be their quantile functions. The optimal transport cost is:
\begin{equation}
\label{eq:1dot}
\W_p^p(\alpha,\beta)=\int_0^1\big|q_\alpha(u)-q_\beta(u)\big|^p\dd u.
\end{equation}
We use $\W_p^p(\alpha,\beta)\le(\operatorname{diam}\supp)^{p-1}\W_1(\alpha,\beta)$ for compactly supported $\alpha,\beta$, and the empirical rate: if $\alpha$ is supported in an interval of length $|I|$ and $\widehat\alpha_N$ is the empirical measure of $N$ i.i.d.\ samples, then
\begin{equation}
\label{eq:1drate}
\E\,\W_1(\widehat\alpha_N,\alpha)\;%
\le\;\tfrac{1}{\sqrt N}\int_\R\sqrt{F_\alpha(1-F_\alpha)}\;\le\;\tfrac{|I|}{2\sqrt N},
\end{equation}
where the first inequality follows from \citet[Thm 3.2]{bobkov2019one}.%

\noindent\textbf{Gaussian slicing measure.} 
We write $\sigma$ for a Borel probability measure on $\Cont$. A slicing direction is a realization $f\sim\sigma$, and $L$ {\text{iid}} slices are $f_1,\dots,f_L{\sim}\sigma$. This is akin to the sliced-Wasserstein distance with Gaussian projections introduced by \cite{nadjahi2021fast}, but for meta-measures.

\begin{assumption}%
\label{ass:kernel}
We consider a continuous, symmetric, positive definite kernel and universal ``slicing kernel'' $\kappa:\Om\times \Om\to \R$, where universal means that  the reproducing kernel Hilbert space (RKHS) $\Hk$ associated to $\kappa$ is dense in $(\Cont,\|\cdot\|_\infty)$, and such that the centered Gaussian process with covariance kernel $\kappa$, $\mathrm{GP}(0,\kappa)$, admits
a version with almost surely continuous sample paths. 
\end{assumption}

Under the above assumption, we consider the model where $f\sim\sigma=\mathrm{GP}(0,\kappa)$. For every finite collection $x_1,\dots, x_n\in \Om$, the random vector $(f(x_1),\dots,f(x_n))$ has law $\mathcal{N}(0, K)$, where 
$K_{ij}=\E_{f\sim \mathrm{GP}(0,\kappa)}[f(x_i)f(x_j)]=\kappa(x_i,x_j)$. 
 See extra
details  in Appendix~\ref{app:background_app}.

\noindent\textbf{Wasserstein-over-Wasserstein.}
For a Borel metric $D$ on $\PO$,
\[
\WoW_{p,D}^p(\bbP,\bbQ)
:=
\inf_{\Gamma\in\Pi(\bbP,\bbQ)}
\int D(\mu,\nu)^p\,\dd\Gamma(\mu,\nu),
\]
with $\Pi(\bbP,\bbQ)$ the set of couplings between $\bbP,\bbQ\in\PPO$. When $D=\W_p$, it recovers the standard
WoW distance, which we simply denote by $\WoW_p$.
Additional details of this section are available in Appendix~\ref{app:background_app}.

\section{Our method: The Vertical Sliced Wasserstein distance}
\label{sec:vsw}

\begin{definition}%
\label{def:vsw}
For $\bbP,\bbQ\in\PPO$, $p\in[1,\infty)$, and $\sigma=\mathrm{GP}(0,\kappa)$ the  Gaussian slicing measure under Assumption~\ref{ass:kernel}, we define the Vertical Sliced Wasserstein discrepancy by
\begin{equation}
\label{eq:vsw}
\VSW_p(\bbP,\bbQ)\;:=\;\Big(\int_{\Cont}\W_p^p\big((P_f)_\pf\bbP,(P_f)_\pf\bbQ\big)\dd\sigma(f)\Big)^{1/p},
\end{equation}
where, for each fixed $f\in\Cont$, the pushforward $(P_f)_\pf\bbP\in\mathcal{P}(\R)$ is the  law of the scalar vertical slice $P_f(\mu)$ when $\mu\sim\bbP$ defined precisely by $(P_f)_\pf\mathbb{P} (B)=\bbP(\{\mu\in\PO:P_f(\mu)\in B\})$ for all $B\in \mathcal{B}(\R)$, which is supported on $[-\norm{f}_\infty,\norm{f}_\infty]$ %
(similarly for $(P_f)_\pf\bbQ$).
\end{definition}

The \emph{``vertical''} viewpoint reflects the affine structure
of the space of measures: the slicers \eqref{eq:vert-proj} are
affine along mixtures $(1-t)\mu_0+t\mu_1$, as stated in
Lemma~\ref{lem:algebraic}. In contrast, \emph{``horizontal''}
or \emph{transport geometry} describes the movement of mass
within $\Om$, e.g., through
displacement (McCann) interpolation 
$
\mu_t=((1-t)\mathrm{id}+tT)_\#\mu_0$, for an optimal Monge map $T$ for the quadratic-cost.
A related distinction between spatial transport and local mass variation
appears in unbalanced OT \citep{chizat2018interpolating,liero2018optimal};
here ``vertical'' specifically refers to affinity along probability mixtures.

\subsection{Well-posedness, metricity, and topology of VSW}
We show regularity of the slicing procedure to ensure that $\VSW_p$ in \eqref{eq:vsw} is well-defined (Lemma \ref{lem:regularity}). Moreover, $\VSW_p$  defines a true metric on $\mathcal{P}(\PO)$ (Theorem \ref{thm:metric})  which metrizes weak convergence (Proposition \ref{thm:topology}), and $(\mathcal{P}(\PO),\VSW_2)$  embeds isometrically into a  Hilbert space (Theorem \ref{thm:hilbertian}).

\begin{lemma}%
\label{lem:regularity}
(i) The map $P:\Cont\times\PO\to\R$, $P(f,\mu):=\ip{f}{\mu}$, is jointly continuous. (ii) For fixed $\bbP,\bbQ\in\PPO$ the map $f\mapsto \W_p\big((P_f)_\pf\bbP,(P_f)_\pf\bbQ\big)$ is $2$-Lipschitz from $(\Cont,\norm{\cdot}_\infty)$ to $\R$; in particular it is continuous,  and the integral in \eqref{eq:vsw} is well defined and finite. %
\end{lemma}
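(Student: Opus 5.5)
For (i), I will use the bilinear structure of $P$ and split the increment as
\[
|P(f,\mu)-P(g,\nu)|\le |\ip{f-g}{\mu}|+|\ip{g}{\mu}-\ip{g}{\nu}|\le \norm{f-g}_\infty+|\ip{g}{\mu}-\ip{g}{\nu}|.
\]
The first inequality uses that $\mu$ is a probability measure. Take sequences $f_n\to f$ in sup norm and $\mu_n\to\mu$ weakly. Since $\PO$ is metrizable (Section~\ref{sec:background}), sequential continuity suffices. The first term $\norm{f_n-f}_\infty$ tends to zero, and the second, $|\ip{f}{\mu_n}-\ip{f}{\mu}|$, tends to zero by the definition of weak convergence with the fixed test function $f\in\Cont$. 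Hence $P$ is jointly continuous. In particular each $P_f$ is continuous, hence Borel, so the pushforwards $(P_f)_\pf\bbP$ are well defined. They are supported in $[-\norm{f}_\infty,\norm{f}_\infty]$, so they have finite $p$-moments.

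For (ii), I will couple the slices through the identity map on $\PO$. Fix $f,g\in\Cont$. The map $\mu\mapsto(P_f(\mu),P_g(\mu))$ pushes $\bbP$ to a coupling of $(P_f)_\pf\bbP$ and $(P_g)_\pf\bbP$. It moves each point by at most $|\ip{f-g}{\mu}|\le\norm{f-g}_\infty$. Therefore
\[
\W_p\big((P_f)_\pf\bbP,(P_g)_\pf\bbP\big)\le\Big(\int|\ip{f-g}{\mu}|^p\dd\bbP(\mu)\Big)^{1/p}\le\norm{f-g}_\infty,
\]
and the same bound holds for $\bbQ$. The reverse triangle inequality for the metric $\W_p$ on $\mathcal P(\R)$ applied twice then gives
\[
\big|\W_p(\alpha_f,\beta_f)-\W_p(\alpha_g,\beta_g)\big|\le \W_p(\alpha_f,\alpha_g)+\W_p(\beta_f,\beta_g)\le 2\norm{f-g}_\infty,
\]
with $\alpha_f=(P_f)_\pf\bbP$ and $\beta_f=(P_f)_\pf\bbQ$. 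This is the claimed $2$-Lipschitz property.

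It remains to show the integral is well defined and finite. The integrand $f\mapsto \W_p^p(\alpha_f,\beta_f)$ is a continuous function of a continuous map, so it is Borel on $(\Cont,\norm{\cdot}_\infty)$. Here $\sigma$ is understood as the law of the continuous-path version from Assumption~\ref{ass:kernel}, a Borel measure on the separable Banach space $\Cont$. For finiteness, both measures are supported in an interval of length $2\norm{f}_\infty$, so $\W_p^p(\alpha_f,\beta_f)\le (2\norm{f}_\infty)^p$. Then $\int \norm{f}_\infty^p\dd\sigma(f)<\infty$ by Fernique's theorem, since $\sup_x|f(x)|$ of a continuous centered Gaussian process on compact $\Om$ has Gaussian tails. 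This last step is the only one beyond routine. Its key points are that $\sigma$ is a genuine Borel (Radon) Gaussian measure on $\Cont$, which holds because $\Cont$ is Polish and the paths are continuous, and that Fernique's theorem gives all moments of $\norm{f}_\infty$.
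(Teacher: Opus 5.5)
Your proposal is correct and follows the paper's proof essentially step for step. It uses the same two-term split $\norm{f_n-f}_\infty+|\ip{f}{\mu_n}-\ip{f}{\mu}|$ for joint continuity, the same coupling $\mu\mapsto(P_f(\mu),P_g(\mu))$ combined with the triangle inequality for $\W_p$ to get the $2$-Lipschitz bound, and the same envelope $(2\norm{f}_\infty)^p$ integrated via Fernique's theorem for finiteness.
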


\begin{theorem}%
\label{thm:metric}
$\VSW_p$ is a metric on $\PPO$ for every $p\in[1,\infty)$.
\end{theorem}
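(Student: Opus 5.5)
Finiteness, symmetry and nonnegativity are immediate from Lemma~\ref{lem:regularity}(ii) and the corresponding properties of the one-dimensional $\W_p$. The triangle inequality follows by a Minkowski argument. For each $f$, the triangle inequality of $\W_p$ on $\mathcal{P}(\R)$ gives
\[
\W_p\big((P_f)_\pf\bbP,(P_f)_\pf\bbQ\big)\le \W_p\big((P_f)_\pf\bbP,(P_f)_\pf\mathbb{M}\big)+\W_p\big((P_f)_\pf\mathbb{M},(P_f)_\pf\bbQ\big).
\]
Taking the $L^p(\sigma)$ norm in $f$ and applying Minkowski's inequality yields $\VSW_p(\bbP,\bbQ)\le\VSW_p(\bbP,\mathbb{M})+\VSW_p(\mathbb{M},\bbQ)$. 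Measurability of the integrands is guaranteed by the continuity in Lemma~\ref{lem:regularity}(ii). The real content is the identity of indiscernibles.

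Assume $\VSW_p(\bbP,\bbQ)=0$. The integrand $f\mapsto \W_p((P_f)_\pf\bbP,(P_f)_\pf\bbQ)$ is continuous and nonnegative, so it vanishes on the topological support $\supp\sigma\subset\Cont$. The next step is to identify this support. For a centered Gaussian measure on the separable Banach space $\Cont$, the support is the closure of its Cameron--Martin space. Here the Cameron--Martin space is the RKHS $\Hk$, so the support is $\overline{\Hk}^{\|\cdot\|_\infty}$, which equals $\Cont$ by the universality part of Assumption~\ref{ass:kernel}. Alternatively, one can avoid identifying the support: the zero set is closed by continuity, and its complement is open and $\sigma$-null. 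It then suffices to know that every nonempty open set in $\Cont$ has positive $\sigma$-measure, and this is exactly the full-support statement. I expect this step to be the main technical point. I would cite the standard result (e.g.\ Bogachev, \emph{Gaussian Measures}, Thm.~3.6.1) together with the identification of the Cameron--Martin space of $\mathrm{GP}(0,\kappa)$ with $\Hk$. Either way, we conclude that $(P_f)_\pf\bbP=(P_f)_\pf\bbQ$ for every $f\in\Cont$.

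It remains to show that equality of all one-dimensional laws of $P_f$ forces $\bbP=\bbQ$. This is a Cramér--Wold type argument. Fix $f_1,\dots,f_n\in\Cont$ and $t\in\R^n$. By the linearity in Lemma~\ref{lem:algebraic}, $\sum_i t_iP_{f_i}(\mu)=P_{\sum_i t_if_i}(\mu)$. Hence the characteristic functions of the random vectors $(P_{f_1}(\mu),\dots,P_{f_n}(\mu))$ under $\bbP$ and under $\bbQ$ coincide, and so their joint laws agree. Consequently $\bbP$ and $\bbQ$ agree on the $\pi$-system of cylinder sets $\{\mu:(P_{f_1}(\mu),\dots,P_{f_n}(\mu))\in B\}$. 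These sets generate $\mathcal{B}(\PO)$ by the cited result of \citet{kallenberg2017random}. By the $\pi$--$\lambda$ theorem, $\bbP=\bbQ$.

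For an alternative to this last step, consider bounded continuous functions of the form $\mu\mapsto\phi(P_{f_1}(\mu),\dots,P_{f_n}(\mu))$. These form an algebra that separates points of the compact space $\PO$ and contains constants, so by Stone--Weierstrass they are dense in $C(\PO)$. That also determines the measure.
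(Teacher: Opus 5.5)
Your proposal is correct and follows the paper's proof essentially step for step: finiteness, symmetry and vanishing on the diagonal, the triangle inequality via the slicewise $\W_p$ inequality and Minkowski, then the upgrade from $\sigma$-a.e.\ vanishing of the continuous integrand to vanishing on $\supp\sigma=\overline{\Hk}^{\norm{\cdot}_\infty}=\Cont$ (Bogachev's support theorem plus universality), Cram\'er--Wold via linearity of $f\mapsto P_f$, and the $\pi$--$\lambda$ theorem on cylinder sets generating $\mathcal{B}(\PO)$. Your Stone--Weierstrass variant of the final step is also valid but not needed.
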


\begin{remark}[Comparison with horizontal slicing]
Busemann projections are not yet known to distinguish meta-measures
in general \citep{bonet2025busemann}, while moment projections need
additional assumptions ensuring that moments determine the measures
\citep{sotdd}. In our setting, vertical slicing
distinguishes meta-measures whenever the kernel is universal,
with no extra assumptions on the meta-measures themselves.
\end{remark}

\begin{proposition}%
\label{thm:topology}
For every $p\in[1,\infty)$, the metric $\VSW_p$ induces the weak topology on $\PPO$. In particular $(\PPO,\VSW_p)$ is a compact metric space.
\end{proposition}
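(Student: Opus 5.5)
Since $\PPO$ is compact and metrizable in the weak topology (Section~\ref{sec:background}) and $\VSW_p$ is a metric by Theorem~\ref{thm:metric}, it suffices to show that the identity map $(\PPO,\text{weak})\to(\PPO,\VSW_p)$ is continuous. A continuous bijection from a compact space onto a Hausdorff (metric) space is a homeomorphism. Then $\VSW_p$ induces the weak topology, and compactness of $(\PPO,\VSW_p)$ follows at once. Because the weak topology is metrizable, sequential continuity is enough: we must show that $\bbP_n\to\bbP$ weakly implies $\VSW_p(\bbP_n,\bbP)\to0$.

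\textbf{Pointwise convergence in $f$.} Fix $f\in\Cont$. By Lemma~\ref{lem:regularity}(i), $P_f:\PO\to\R$ is continuous. So for any bounded continuous $h:\R\to\R$, the composition $h\circ P_f$ is bounded and continuous on $\PO$. Weak convergence $\bbP_n\to\bbP$ then gives $(P_f)_\pf\bbP_n\to(P_f)_\pf\bbP$ weakly in $\mathcal{P}(\R)$ (continuous mapping theorem). All these measures are supported in the fixed compact interval $[-\norm{f}_\infty,\norm{f}_\infty]$. On a compact interval, weak convergence is equivalent to $\W_p$ convergence, so
\[
g_n(f):=\W_p^p\big((P_f)_\pf\bbP_n,(P_f)_\pf\bbP\big)\to0\qquad\text{for every } f\in\Cont.
\]

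\textbf{Domination.} Each $g_n(f)\le(2\norm{f}_\infty)^p$, since both measures live on an interval of length $2\norm{f}_\infty$. The function $f\mapsto\norm{f}_\infty^p$ is $\sigma$-integrable: by Fernique's theorem, a Gaussian measure on the separable Banach space $\Cont$ has $\int\exp(\alpha\norm{f}_\infty^2)\dd\sigma<\infty$ for some $\alpha>0$, so all moments of $\norm{f}_\infty$ are finite. This is the same integrability that underlies finiteness in Lemma~\ref{lem:regularity}(ii). Measurability of $g_n$ follows from the continuity in Lemma~\ref{lem:regularity}(ii). Dominated convergence then yields $\VSW_p^p(\bbP_n,\bbP)=\int g_n\dd\sigma\to0$.

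\textbf{Main obstacle.} The only delicate point is the domination step, namely ensuring that $\E_\sigma\norm{f}_\infty^p<\infty$. This uses the Gaussian tail bound for the supremum of the process rather than anything specific to meta-measures. If one prefers to avoid Fernique's theorem, an alternative is to truncate: on $\{\norm{f}_\infty\le R\}$ use bounded convergence, and control the tail by $2^p\int_{\{\norm{f}_\infty>R\}}\norm{f}_\infty^p\dd\sigma$. That tail integral still requires the finite $p$-th moment, so Fernique's theorem, or the integrability already implicit in the well-posedness of \eqref{eq:vsw}, is the key input.
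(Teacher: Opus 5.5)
Your proposal is correct and follows the paper's proof essentially step for step. You first obtain weak convergence of $(P_f)_\pf\bbP_n$ for each fixed $f$ via continuity of $P_f$, then use that $\W_p$ metrizes weak convergence on $[-\norm{f}_\infty,\norm{f}_\infty]$, dominate by $(2\norm{f}_\infty)^p\in L^1(\sigma)$ (Fernique, as in Lemma~\ref{lem:gaussian-facts}(i)), and finish with the continuous-bijection-from-compact-to-Hausdorff argument for the identity map.
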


\begin{theorem}%
\label{thm:hilbertian}
Let $\lambda$ be Lebesgue measure on
$(0,1)$. The map
\[
T:\PPO\to L^2(\Cont\times(0,1),\sigma\otimes\lambda), \qquad T(\bbP)(f,u):=q_{(P_f)_\pf\bbP}(u),
\]
is well defined and satisfies
$$
\norm{T(\bbP)-T(\bbQ)}_{L^2(\sigma\otimes\lambda)}
=\VSW_2(\bbP,\bbQ).$$ 
Hence $(\PPO,\VSW_2)$ embeds isometrically into a separable Hilbert space.
Moreover, $\VSW_2^\beta$ is conditionally negative definite for
$\beta\in(0,2]$, so $\exp(-\gamma\VSW_2^\beta)$ is positive definite for
all $\gamma>0$.
\end{theorem}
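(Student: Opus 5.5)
The argument has three parts: show that $T$ is jointly measurable and square integrable, identify the $L^2$ norm with $\VSW_2$ via the quantile formula \eqref{eq:1dot}, and then deduce separability and the negative-definiteness statements from general Hilbert-space facts.

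\textbf{Well-definedness.} Fix $\bbP$ and write $\alpha_f:=(P_f)_\pf\bbP$. By Definition~\ref{def:vsw}, $\alpha_f$ is supported in $[-\norm{f}_\infty,\norm{f}_\infty]$, so $|q_{\alpha_f}(u)|\le\norm{f}_\infty$ for all $u\in(0,1)$.

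For joint measurability, take the left-continuous quantile $q_{\alpha_f}(u)=\inf\{t:F_{\alpha_f}(t)\ge u\}$. For fixed $t$, the map
\[
f\mapsto F_{\alpha_f}(t)=\bbP(\{\mu:P(f,\mu)\le t\})
\]
is Borel. This follows from joint continuity of $P$ (Lemma~\ref{lem:regularity}(i)), since $\{(f,\mu):P(f,\mu)\le t\}$ is closed, and Fubini/Tonelli then gives measurability of the section measure. Restricting the infimum to rational $t$ expresses $\{(f,u):q_{\alpha_f}(u)\le s\}$ through countably many measurable sets. Alternatively, $f\mapsto\alpha_f$ is weakly continuous, by dominated convergence and continuity of $P$, and $(\alpha,u)\mapsto q_\alpha(u)$ is Borel on $\mathcal{P}(\R)\times(0,1)$.

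For square integrability, $\int\norm{f}_\infty^2\dd\sigma(f)<\infty$ by Fernique's theorem, since $\sigma$ is a centered Gaussian measure on the separable Banach space $\Cont$ (Assumption~\ref{ass:kernel}). Hence $T(\bbP)\in L^2(\sigma\otimes\lambda)$.

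\textbf{Isometry.} By Tonelli and \eqref{eq:1dot} with $p=2$,
\[
\norm{T(\bbP)-T(\bbQ)}^2=\int_{\Cont}\int_0^1|q_{\alpha_f}(u)-q_{\beta_f}(u)|^2\dd u\dd\sigma(f)=\int\W_2^2(\alpha_f,\beta_f)\dd\sigma(f)=\VSW_2^2(\bbP,\bbQ).
\]
For separability, $\Cont$ is a Polish space, so $\mathcal{B}(\Cont)\otimes\mathcal{B}((0,1))$ is countably generated and $L^2$ of a finite measure on it is separable. One could instead note that the image of the compact space $\PPO$ (Proposition~\ref{thm:topology}) is separable and take the closed linear span.

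\textbf{Kernel statements.} For $\beta=2$, the squared Hilbert distance $\norm{x-y}^2$ is conditionally negative definite: if $\sum c_i=0$, then
\[
\sum_{i,j} c_ic_j\norm{x_i-x_j}^2=-2\norm{\textstyle\sum_i c_ix_i}^2\le0.
\]
For $\beta\in(0,2)$, write $t^{\beta/2}=c_\beta\int_0^\infty(1-e^{-st})s^{-1-\beta/2}\dd s$. Since $1-e^{-s\psi}$ is CND whenever $\psi$ is CND (Schoenberg), a positive mixture of such kernels is CND, so $\psi^{\beta/2}$ is CND; apply this with $\psi=\VSW_2^2$. Schoenberg's theorem then gives positive definiteness of $\exp(-\gamma\VSW_2^\beta)$ for every $\gamma>0$.

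I expect the main obstacle to be the joint measurability of $(f,u)\mapsto q_{\alpha_f}(u)$ needed to apply Tonelli. Everything else is routine.
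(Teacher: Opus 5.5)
Your proposal is correct and follows essentially the same route as the paper: Borel measurability of the quantile via joint continuity of $P$, Tonelli, and a rational infimum; the bound $|q_{(P_f)_\pf\bbP}(u)|\le\norm{f}_\infty$ together with Fernique for square integrability; and the one-dimensional quantile formula with Tonelli for the isometry. The conditional negative definiteness and positive definiteness part also matches the paper's Schoenberg/Bernstein-function argument (your explicit integral representation of $t^{\beta/2}$ just makes it concrete), and your separability argument supplies a step the paper only asserts.
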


Further details
are provided in Appendices \ref{sec:comparisons} and \ref{sec: stats}, and all proofs  of this section are in Appendix ~\ref{sec:proofs:vsw}.

\subsection{Computation}
\label{sec:computation}

\paragraph{Sampling slices:} For translation-invariant $\kappa$ on $\Om\subset\R^d$ with spectral measure $\tau_\kappa$ (Bochner), an $R$-feature slice is
\begin{equation}
\label{eq:rff}
f_R(x)=\sqrt{\tfrac{2}{R}}\sum_{r=1}^{R}a_r\cos\big(\ip{\omega_r}{x}+b_r\big),
\qquad a_r\sim\mathcal N(0,1),\ \omega_r\sim\tau_\kappa,\ b_r\sim\mathrm{Unif}[0,2\pi],
\end{equation}
where we normalize $\kappa(x,x)=\kappa(0)=1$ %
so that the spectral measure $\tau_\kappa$ is a probability distribution. %
Conditionally on $(\omega,b)$, $f_R$ is a Gaussian process whose covariance converges uniformly to $\kappa$ as $R\to\infty$ \citep{rahimi2007random}. %
Alternatively, a truncated Karhunen--Lo\`eve expansion may be used (see uniform approximation guarantees in Proposition~\ref{prop:stability} and Lemma~\ref{lem:kl}). Gradients have the closed form $\nabla f_R(x)=-\sqrt{2/R}\sum_r a_r\sin(\ip{\omega_r}{x}+b_r)\,\omega_r$.

\noindent\textbf{Cost:} The full VSW algorithm is shown in Algorithm~\ref{alg:vsw}. Let $N$ denote the maximum number of inner measures in either empirical meta-measure, and $n$ the maximum number of points per inner measure. Evaluating $L$ independent $R$ feature slices costs $O\!\left(LNnRd+LN\log N\right)$. In practice, we share the Fourier basis $(\omega_r,b_r)_{r=1}^R$ across slices and resample only the Gaussian coefficients (conditional on the basis, this estimates Gaussian-sliced Wasserstein between RFF mean embeddings in $\R^R$; see Appendix~\ref{app:finite-features} for the scaling identity and pseudometric caveat). Computing and averaging the $R$ basis features once for each inner measure, and reusing them across slices, reduces the cost to $O\!\left(NnRd+LNR+LN\log N\right)$. Unlike prior sliced WoW constructions, VSW requires no per inner-measure sorting or quantile representation. Its runtime advantages are verified empirically in Section~\ref{sec:experiments}.

\subsection{Statistical properties}
\label{sec:statistics}

We consider the two-stage observation model of \citet{szabo2016learning}: inner measures $\mu_1,\dots,\mu_N\stackrel{\text{iid}}{\sim}\bbP$; for each $i$, points $x_{i1},\dots,x_{in}\stackrel{\text{iid}}{\sim}\mu_i$ (conditionally independent given $\mu_i$), empirical inner measures $\widehat\mu_i=\frac1n\sum_j\delta_{x_{ij}}$, and the plug-in meta-measures \( \widehat\bbP_N=\frac1N\sum_{i=1}^N\delta_{\mu_i}, ~ \widetilde\bbP_{N,n}=\frac1N\sum_{i=1}^N\delta_{\widehat\mu_i}. \)

\begin{theorem}
\label{thm:sample}
For every $p\in[1,\infty)$, there exists
$C_p<\infty$, depending only on $p$ (with $C_p=1$ for $p\in[1,2]$), such that
\[
\E\,\left[\VSW_p^p(\bbP,\widetilde\bbP_{N,n})\right]
\le
2^{p-1}\Big(
\tfrac12\,2^p\Lambda_p N^{-1/2}
+
C_p\,2^p\Lambda_p n^{-p/2}
\Big),
\]
where $\Lambda_p=\E_\sigma\norm{f}_\infty^p<\infty$. %
The rate exponents are dimension-free.
Dependence on $(\Om,\kappa)$ enters only through the sup-norm moment, which is finite %
but may grow with the size of $\Om$ relative to the kernel bandwidth.
\end{theorem}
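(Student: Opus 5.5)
The argument splits the error along the intermediate meta-measure $\widehat\bbP_N$. Since $\VSW_p$ is a metric (Theorem~\ref{thm:metric}), we have $(a+b)^p\le 2^{p-1}(a^p+b^p)$, so
\[
\VSW_p^p(\bbP,\widetilde\bbP_{N,n})\le 2^{p-1}\Big(\VSW_p^p(\bbP,\widehat\bbP_N)+\VSW_p^p(\widehat\bbP_N,\widetilde\bbP_{N,n})\Big).
\]
This produces the outer factor $2^{p-1}$. It remains to show that the expectation of the first term is at most $\tfrac12 2^p\Lambda_p N^{-1/2}$ and that of the second is at most $C_p2^p\Lambda_p n^{-p/2}$. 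In both terms I will fix the slice $f$, bound the conditional expectation by a quantity depending only on $\norm{f}_\infty$, and then integrate against $\sigma$ using Tonelli. Tonelli applies because the integrand is nonnegative and jointly measurable, by Lemma~\ref{lem:regularity}.

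\textbf{Outer (sampling) term.} Fix $f$. Then $(P_f)_\pf\widehat\bbP_N$ is exactly the empirical measure of the $N$ i.i.d.\ scalars $P_f(\mu_i)$, whose common law $(P_f)_\pf\bbP$ is supported in $I=[-\norm{f}_\infty,\norm{f}_\infty]$, of length $2\norm{f}_\infty$. The support diameter bound gives $\W_p^p\le(2\norm{f}_\infty)^{p-1}\W_1$, and the empirical rate \eqref{eq:1drate} gives $\E\W_1\le 2\norm{f}_\infty/(2\sqrt N)$. Multiplying, the conditional expectation is at most $\tfrac12 2^p\norm{f}_\infty^p N^{-1/2}$, and integrating over $\sigma$ gives the first term.

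\textbf{Inner (within-measure) term.} Fix $f$. Couple $(P_f)_\pf\widehat\bbP_N$ and $(P_f)_\pf\widetilde\bbP_{N,n}$ through the index map $i\mapsto i$; this coupling is admissible, and it gives
\[
\W_p^p\le \tfrac1N\sum_i |P_f(\mu_i)-P_f(\widehat\mu_i)|^p .
\]
Conditionally on $\mu_i$, the difference $P_f(\widehat\mu_i)-P_f(\mu_i)=\tfrac1n\sum_j\big(f(x_{ij})-\E_{\mu_i}f\big)$ is an average of $n$ i.i.d.\ centered variables, each bounded by $2\norm{f}_\infty$.
\begin{itemize}
\item For $p\in[1,2]$: Jensen gives $\E|\cdot|^p\le(\E|\cdot|^2)^{p/2}$. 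The variance is at most $\norm{f}_\infty^2/n$, which yields $\norm{f}_\infty^p n^{-p/2}\le 2^p\norm{f}_\infty^p n^{-p/2}$, so $C_p=1$.
\item For $p>2$: I will use the Marcinkiewicz--Zygmund (or Rosenthal) inequality, with constant $B_p$, to get $\E|\tfrac1n\sum_j Y_j|^p\le B_p n^{-p/2}(2\norm{f}_\infty)^p$. Alternatively, Hoeffding's sub-Gaussian tail can be integrated to the same effect. Either way $C_p$ is absorbed from $B_p$.
\end{itemize}
Averaging over $i$ and then integrating over $\sigma$ gives the second term.

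The finiteness of $\Lambda_p$ comes from Fernique's theorem, since $f$ is a centered Gaussian element of the separable Banach space $\Cont$ under Assumption~\ref{ass:kernel}. The main technical care is the measurability and Tonelli interchange together with the joint randomness of $(f,\mu_i,x_{ij})$. Once $f$ is fixed and $f$ is independent of the data, each bound is elementary. The only place that needs care about constants is the moment inequality for $p>2$.
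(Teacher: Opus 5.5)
Your proposal is correct and follows the paper's proof essentially step for step. Both arguments use the split through $\widehat\bbP_N$ with $(a+b)^p\le 2^{p-1}(a^p+b^p)$, bound the outer term via $\W_p^p\le(2\norm{f}_\infty)^{p-1}\W_1$ and the one-dimensional empirical rate \eqref{eq:1drate}, bound the inner term via the index coupling with Jensen plus the variance bound for $p\in[1,2]$ and Marcinkiewicz--Zygmund for $p>2$, and integrate against $\sigma$ by Tonelli, arriving at the same constants $\tfrac12 2^p\Lambda_p N^{-1/2}$ and $C_p2^p\Lambda_p n^{-p/2}$ with $C_p=1$ for $p\le 2$.
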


\begin{remark}%
\label{rem:sample-rates}
Unlike $\WoW_{p,\W_p}$, which inherits the dimensional dependence of
Wasserstein estimation at both levels \citep{fournier2015rate,catalano2024hierarchical},
Theorem~\ref{thm:sample}, by Jensen, gives dimension-free rate exponents for
$\E\,\VSW_p(\bbP,\widetilde\bbP_{N,n})$:
$N^{-1/(2p)}$ at the outer level and $n^{-1/2}$ at the inner level,
consistent with sliced-Wasserstein and two-stage sampling rates
\citep{nadjahi2020statistical,szabo2016learning}.
For $p=2$, the worst-case outer distance rate $N^{-1/4}$ is sharp and
slower than HIPM's parametric $N^{-1/2}$ rate; it improves to $N^{-1/2}$
under the projected-density condition in Remark~\ref{rem:outer-rate}.
\end{remark}

Finally, the Monte Carlo slice complexity reads as follows:
\begin{proposition}
\label{prop:mc}
Let $f_1,\dots,f_L\stackrel{\text{iid}}{\sim}\sigma$ and $\widehat{\VSW}^p_L(\bbP,\bbQ):=\frac1L\sum_{l}\W_p^p\big((P_{f_l})_\pf\bbP,(P_{f_l})_\pf\bbQ\big)$. Then
\[
\E\Big|\widehat{\VSW}^p_L-\VSW_p^p\Big|\;\le\;\sqrt{\tfrac{\operatorname{Var}_\sigma\big[\W_p^p\big]}{L}}\;\le\;2^p\sqrt{\Lambda_{2p}/L} .
\]
\end{proposition}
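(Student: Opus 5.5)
The plan is to read $\widehat{\VSW}^p_L$ as an ordinary Monte Carlo average. Set $g(f):=\W_p^p\big((P_f)_\pf\bbP,(P_f)_\pf\bbQ\big)$. By Lemma~\ref{lem:regularity}(ii), the map $f\mapsto \W_p(\cdots)$ is continuous on $(\Cont,\norm{\cdot}_\infty)$, so $g$ is Borel measurable. Hence $g(f_1),\dots,g(f_L)$ are i.i.d.\ real random variables with mean $\E_\sigma g=\VSW_p^p(\bbP,\bbQ)$, and $\widehat{\VSW}^p_L$ is their empirical mean. Applying Cauchy--Schwarz (equivalently Jensen) to the absolute error gives
\[
\E\big|\widehat{\VSW}^p_L-\VSW_p^p\big|\le\Big(\E\big|\widehat{\VSW}^p_L-\VSW_p^p\big|^2\Big)^{1/2}=\sqrt{\operatorname{Var}_\sigma[g]/L}.
\]
Here the last equality uses independence: the variance of a mean of $L$ i.i.d.\ terms is $\operatorname{Var}_\sigma[g]/L$. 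This proves the first inequality, provided $\operatorname{Var}_\sigma[g]<\infty$, which the second step supplies.

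For the second inequality, bound $\operatorname{Var}_\sigma[g]\le\E_\sigma[g^2]$ and then bound $g$ pointwise. By Definition~\ref{def:vsw}, both pushforwards $(P_f)_\pf\bbP$ and $(P_f)_\pf\bbQ$ are supported in $[-\norm f_\infty,\norm f_\infty]$, because $|P_f(\mu)|\le\norm f_\infty$ for every probability measure $\mu$. Consequently any coupling moves mass a distance of at most $2\norm f_\infty$. Equivalently, in the quantile formula \eqref{eq:1dot} we have $|q_\alpha(u)-q_\beta(u)|\le 2\norm f_\infty$. This yields $g(f)\le 2^p\norm f_\infty^p$, and therefore
\[
\E_\sigma[g^2]\le 2^{2p}\E_\sigma\norm f_\infty^{2p}=2^{2p}\Lambda_{2p}.
\]
Taking square roots gives $2^p\sqrt{\Lambda_{2p}/L}$.

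The only nontrivial input is finiteness of $\Lambda_{2p}=\E_\sigma\norm f_\infty^{2p}$. Theorem~\ref{thm:sample} already asserts that $\Lambda_p<\infty$ for all $p$. If an independent justification is needed, I would invoke Fernique's theorem. Under Assumption~\ref{ass:kernel}, $f$ is a centered Gaussian random element of the separable Banach space $\Cont$, so $\E\exp(\alpha\norm f_\infty^2)<\infty$ for some $\alpha>0$, and all moments of $\norm f_\infty$ are then finite. With that in hand, the argument involves no real obstacle; the only points needing care are the measurability of $g$ and this moment bound.
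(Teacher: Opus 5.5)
Your proposal is correct and follows essentially the same route as the paper. The paper also treats the summands $\W_p^p\big((P_{f_l})_\pf\bbP,(P_{f_l})_\pf\bbQ\big)$ as i.i.d.\ with mean $\VSW_p^p$, applies $\E|X-\E X|\le\sqrt{\operatorname{Var}X}$ to the empirical mean, and bounds the variance by the second moment using the pointwise bound $(2\norm f_\infty)^p$, which gives $2^{2p}\Lambda_{2p}$. Your remarks on measurability (via Lemma~\ref{lem:regularity}(ii)) and on finiteness of $\Lambda_{2p}$ (Lemma~\ref{lem:gaussian-facts}(i), i.e.\ Fernique) just make explicit details the paper leaves implicit.
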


Following \citet{nadjahi2020statistical}, we examine empirical scaling on
Gaussian-mixture meta-measures for $d\in\{2,10,20,50,100\}$ using finite
Fourier features. Figure~\ref{fig:stats} shows an $L^{-1/2}$ trend for
Monte Carlo \emph{distance} error against a finite-slice reference, and
$\VSW_2^2$ trends of $N^{-1/2}$ and $n^{-1}$ at the outer and inner levels,
with nearly unchanged slopes across dimensions. This is an empirical scaling
study: Theorem~\ref{thm:sample} assumes a compact domain, and
Proposition~\ref{prop:mc} bounds the powered-distance estimator.

\begin{figure}[t!]
  \centering
  \includegraphics[width=.9\textwidth]{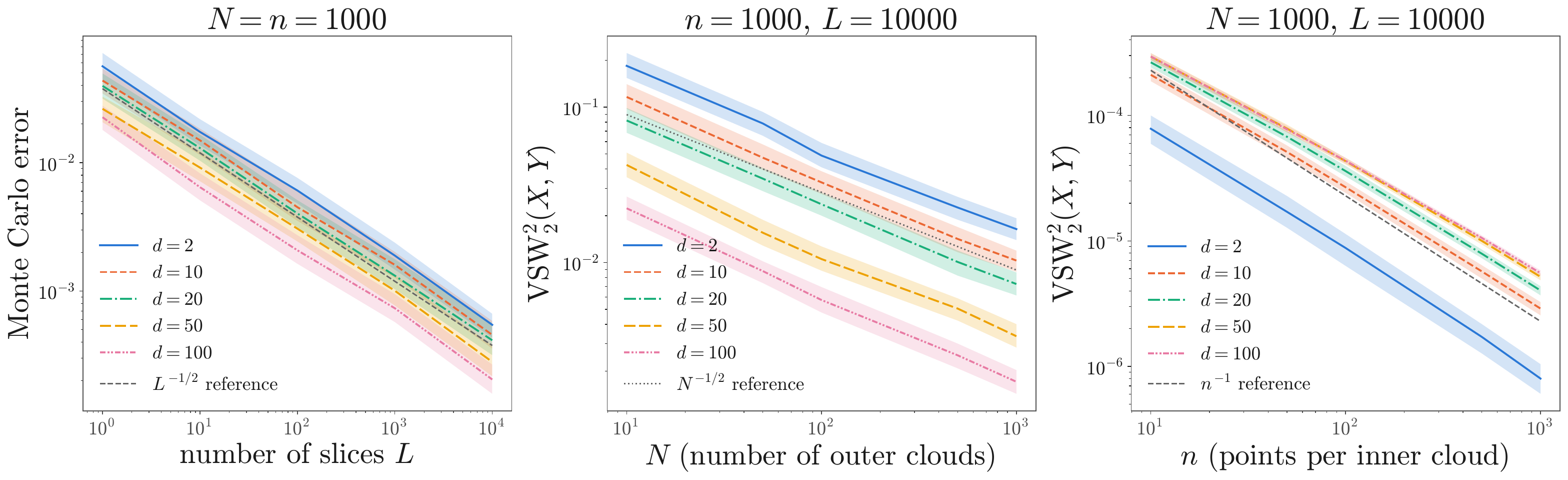}
  \caption{Statistical complexity analysis for empirical meta-measures $X,Y$. Setup in Appendix~\ref{app:impl}. The outer law has three atoms, so the $N^{-1/2}$ reference for $\VSW_2^2$ corresponds to the atomic-outer rate (Remark~\ref{rem:outer-rate}). Left: distance error against a $10^5$-slice reference. Right: $X,Y$ share component labels to isolate inner sampling error.}
  \label{fig:stats}
\end{figure}

\begin{algorithm}[t]
\caption{$\widehat{\VSW}_p^p$ between empirical meta-measures}
\label{alg:vsw}
{\begin{algorithmic}[1]
\Require clouds $\{X_i\}_{i\le N}$, $\{Y_k\}_{k\le M}$; kernel $\kappa$; slices $L$; features $R$; order $p$
\For{$l=1,\dots,L$}
  \State sample $f_l$ via \eqref{eq:rff} (or KL truncation)
  \State $s_i\gets\frac{1}{|X_i|}\sum_{x\in X_i}f_l(x)$ for all $i$;\quad $t_k\gets\frac{1}{|Y_k|}\sum_{y\in Y_k}f_l(y)$ for all $k$
  \State $d_l\gets\W_p^p\big(\tfrac1N\sum_i\delta_{s_i},\,\tfrac1M\sum_k\delta_{t_k}\big)$ \Comment{sort + quantile matching, closed form \eqref{eq:1dot}}
\EndFor
\State \Return $\frac1L\sum_l d_l$
\end{algorithmic}}
\end{algorithm}

\subsection{Gradient flows: Practical implementation}
\label{sec:flows}

We optimize the empirical $\VSW_2^2$ objective by moving the support points of its inner measures, thereby obtaining a finite-dimensional particle-level gradient flow.  Throughout this calculation, the target and the slicing
law are fixed. Fix a target $\bbQ\in\PPO$ and define
$\mathcal{F}:\PPO\to\R$ by
$\mathcal{F}(\bbP)\;:=\;\VSW_2^2(\bbP,\bbQ)\;=\;\E_{f\sim\sigma}[\W_2^2\big((P_f)_\pf\bbP,(P_f)_\pf\bbQ\big)]$.

The discretization used in practice is as follows:
Let
$\bbQ=\frac1N\sum_{k=1}^N\delta_{\widehat\nu_k}$,
$\widehat\nu_k=\frac1m\sum_l\delta_{y_{kl}}$, and
$\bbP_X=\frac1N\sum_{i=1}^N\delta_{\widehat\mu_i}$,
$\widehat\mu_i=\frac1n\sum_j\delta_{x_{ij}}$,
with $X=(x_{ij})\in(\R^d)^{N\times n}$.
For simplicity we take equal outer sizes. The quantile formulation also
handles $N\ne M$. Define
\begin{equation}
\label{eq:objective}
\mathcal F(X)\;:=\;\VSW_2^2\big(\bbP_X,\bbQ\big)
=\E_{f\sim\sigma}\Big[\frac1N\sum_{i=1}^N\big(s_{(i)}(f)-t_{(i)}(f)\big)^2\Big],
\end{equation}
where $s_i(f)=\frac1n\sum_jf(x_{ij})$, $t_k(f)=\frac1m\sum_lf(y_{kl})$,
and $(\cdot)_{(i)}$ denotes increasing rearrangement. 
Here the projected laws
$\alpha_f=\frac1N\sum_i\delta_{s_i(f)}$ and
$\beta_f=\frac1N\sum_k\delta_{t_k(f)}$ are atomic, so transport is
determined by the sorting permutation %
rather than a continuous
monotone map. Lemma~\ref{lem:ties} ensures the required uniqueness.

\begin{proposition}%
\label{prop:gradient}
Assume Assumptions~\ref{ass:kernel}--\ref{ass:smooth}, with the smoothness
and moment conditions holding on a compact neighborhood of the particle
configurations. If the source clouds and the target clouds are each pairwise
distinct as measures, as in Lemma~\ref{lem:ties}, then $\mathcal F$ is differentiable at $X$, and
\begin{equation}
\label{eq:gradient}
\nabla_{x_{ij}}\mathcal F(X)\;=\;\frac{2}{Nn}\;\E_{f\sim\sigma}\Big[\big(s_i(f)-t_{\pi_f(i)}(f)\big)\,\nabla f(x_{ij})\Big].
\end{equation}
\end{proposition}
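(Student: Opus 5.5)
The plan is to differentiate under the expectation: first compute the derivative of the integrand for a fixed slice $f$, then justify exchanging $\nabla_X$ and $\E_{f\sim\sigma}$ by dominated convergence. Fix $f\in\Cont$ that is $C^1$ near the configuration. Write $g_f(X):=\frac1N\sum_i(s_{(i)}(f)-t_{(i)}(f))^2$. In one dimension with equal weights, the optimal coupling between $\alpha_f$ and $\beta_f$ is the monotone matching. So $g_f(X)=\min_{\pi\in S_N}\frac1N\sum_i(s_i(f)-t_{\pi(i)}(f))^2$, a minimum of finitely many smooth functions of $X$.

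Let $\pi_f$ denote the sorting-induced matching. Suppose the values $s_1(f),\dots,s_N(f)$ are pairwise distinct, and likewise the $t_k(f)$. Then the minimizing permutation is unique and the inequality is strict against every other $\pi$. By continuity of $X\mapsto s_i(f)$, the same $\pi_f$ remains the unique minimizer on a neighborhood of $X$. Hence $g_f$ is locally equal to the smooth function $\frac1N\sum_i(s_i-t_{\pi_f(i)})^2$. Since $\partial s_i/\partial x_{ij}=\frac1n\nabla f(x_{ij})$, the chain rule gives $\nabla_{x_{ij}}g_f(X)=\frac{2}{Nn}(s_i(f)-t_{\pi_f(i)}(f))\nabla f(x_{ij})$.

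This is where Lemma~\ref{lem:ties} enters. Because the source clouds are pairwise distinct as measures, $s_i(f)-s_{i'}(f)=\ip{f}{\widehat\mu_i-\widehat\mu_{i'}}$ is a centered Gaussian with variance $\norm{\widehat\mu_i-\widehat\mu_{i'}}_{\Hk^*}^2$, where the norm is taken in the kernel mean embedding. This variance is positive because $\kappa$ is universal, so its mean embedding is characteristic. Hence ties occur with $\sigma$-probability zero, and the same holds for the targets. The set of $f$ with ties is therefore $\sigma$-null. The argument above thus applies for $\sigma$-a.e. $f$, for each fixed $X$.

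The main obstacle is the exchange of derivative and expectation, because the neighborhood on which $\pi_f$ is constant depends on $f$ and may shrink. I would avoid relying on local constancy uniformly. Instead, I would use that $g_f$ is a minimum of smooth functions, hence locally Lipschitz in $X$ with an explicit constant. On the compact neighborhood $K$ of the particle configurations, each candidate function has gradient bounded by $\frac{2}{Nn}\cdot 2\norm{f}_\infty\sup_K|\nabla f|$. Consequently, for $X,X'\in K$,
\[
\frac{|g_f(X')-g_f(X)|}{|X'-X|}\le \frac{4}{\sqrt{Nn}}\,\norm{f}_\infty\sup_K|\nabla f| =: G(f).
\]
Assumption~\ref{ass:smooth}'s moment condition should supply $\E_\sigma G(f)<\infty$, using Cauchy--Schwarz with finite second moments of $\norm{f}_\infty$ and $\sup_K|\nabla f|$.

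Difference quotients of $g_f$ then converge $\sigma$-a.e. to the pointwise gradient computed above and are dominated by $G$. By dominated convergence, $\mathcal F=\E_\sigma g_f$ has directional derivatives given by the expectation of the pointwise gradient, and that expectation is linear in the direction. For full (Fréchet) differentiability rather than just Gateaux, I would note that the resulting gradient is continuous in $X$: $\pi_f$ is a.e. locally constant and the integrand is dominated by $G$. A locally Lipschitz function whose Gateaux gradient is continuous is $C^1$, and this yields \eqref{eq:gradient}.
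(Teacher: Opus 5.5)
Your proposal is correct and follows essentially the same route as the paper's proof. The steps match in order:
\begin{enumerate}
\item $\sigma$-a.e.\ absence of ties (Lemma~\ref{lem:ties}) makes the sorting permutation locally constant in $X$, which gives the pointwise gradient of the integrand.
\item Since a minimum of finitely many Lipschitz functions is Lipschitz, every difference quotient is bounded by the envelope $\norm{f}_\infty\sup\norm{\nabla f}$, which is integrable by Cauchy--Schwarz, Fernique and Borell--TIS; dominated convergence then gives the derivative of $\mathcal F$.
\item Continuity of the resulting integrand in $X$ upgrades the partial (Gateaux) derivatives to differentiability.
\end{enumerate}
The differences are cosmetic: you bound the full gradient in $X$ rather than one coordinate at a time, and you re-derive the no-ties lemma inline. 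Like the paper, your $C^1$ step implicitly uses that pairwise distinctness of the clouds persists near $X$; this holds because $\MMD_\kappa$ between clouds is continuous in the particle positions.
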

Consequently, the associated Euclidean particle gradient flow is
$\dot x_{ij}(t)=-\nabla_{x_{ij}}\mathcal F(X(t))$, with the
right-hand side given by \eqref{eq:gradient}.
The proof and corresponding algorithm are given in Appendix \ref{sec:proofs:vsw}. In the ideal setting, replacing the expectation in Proposition~\ref{prop:gradient} by an average over independent slices \(f_1,\ldots,f_L\sim\sigma\) gives a Monte Carlo estimate of the particle gradient. In practice, we represent the slices using finitely many random Fourier features, yielding the corresponding finite-feature approximation of this gradient. When the particles are produced by a differentiable generator, the particle gradients can be backpropagated through the generator by the chain rule.

\section{Experiments}
\label{sec:experiments}
We evaluate VSW across several tasks, demonstrating both performance and runtime advantages. For each experiment, we also optimize the baseline hyperparameters and implementations to ensure a fair comparison. Additional ablations and implementation details are provided in Appendices~\ref{app:ablation} and~\ref{app:impl}.

\noindent\textbf{Optimal Transport Dataset Distance.} Dataset distances provide a model-agnostic way to quantify similarity between
datasets, with applications such as transfer learning and dataset
selection~\citep{alvarez2020otdd,liu2025wasserstein}. Following previous work~\citep{sotdd, dsw}, we evaluate VSW as an efficient
approximation to the optimal transport dataset distance (OTDD), comparing
against s-OTDD~\citep{sotdd} and DSW~\citep{dsw}, using OTDD~\citep{alvarez2020otdd} as the reference. We
construct random pairs of sub-datasets from MNIST~\citep{lecun1998mnist}, FashionMNIST~\citep{xiao2017fashion}, and CIFAR-10~\citep{krizhevsky2009cifar10}
and represent each dataset through its class-conditional empirical
distributions. Following \citet{dsw}, we use identical dataset pairs and feature representations
across methods, remove the feature cost from OTDD so that it reduces to a
Wasserstein-over-Wasserstein distance between class-conditional distributions,
and use s-OTDD with its default implementation.

We report Pearson and Spearman correlations with OTDD while varying the number of slices. For DSW, the total projection budget is $S=PK$, where $P$ and $K$ denote the numbers of outer and inner projections, respectively. For each dataset, we use $30$ pairs of disjoint sub-datasets drawn
from the same training set. The two sub-datasets in each pair have equal size,
with that size varying from \(5{,}000\) to \(10{,}000\) across pairs.
Figure~\ref{fig:corr} shows that VSW achieves substantially
stronger correlation in the low-slice regime across all three datasets and
remains competitive at larger slice budgets under this protocol.
Figure~\ref{fig:otdd_runtime} compares runtime as the dataset size increases
for matched budgets of \(1\mathrm{k}\), \(5\mathrm{k}\), and
\(10\mathrm{k}\) slices. VSW is consistently the fastest method across all
three datasets and projection budgets under this resampling protocol.

\begin{figure*}[t!]
  \centering
  \includegraphics[width=.85\textwidth]{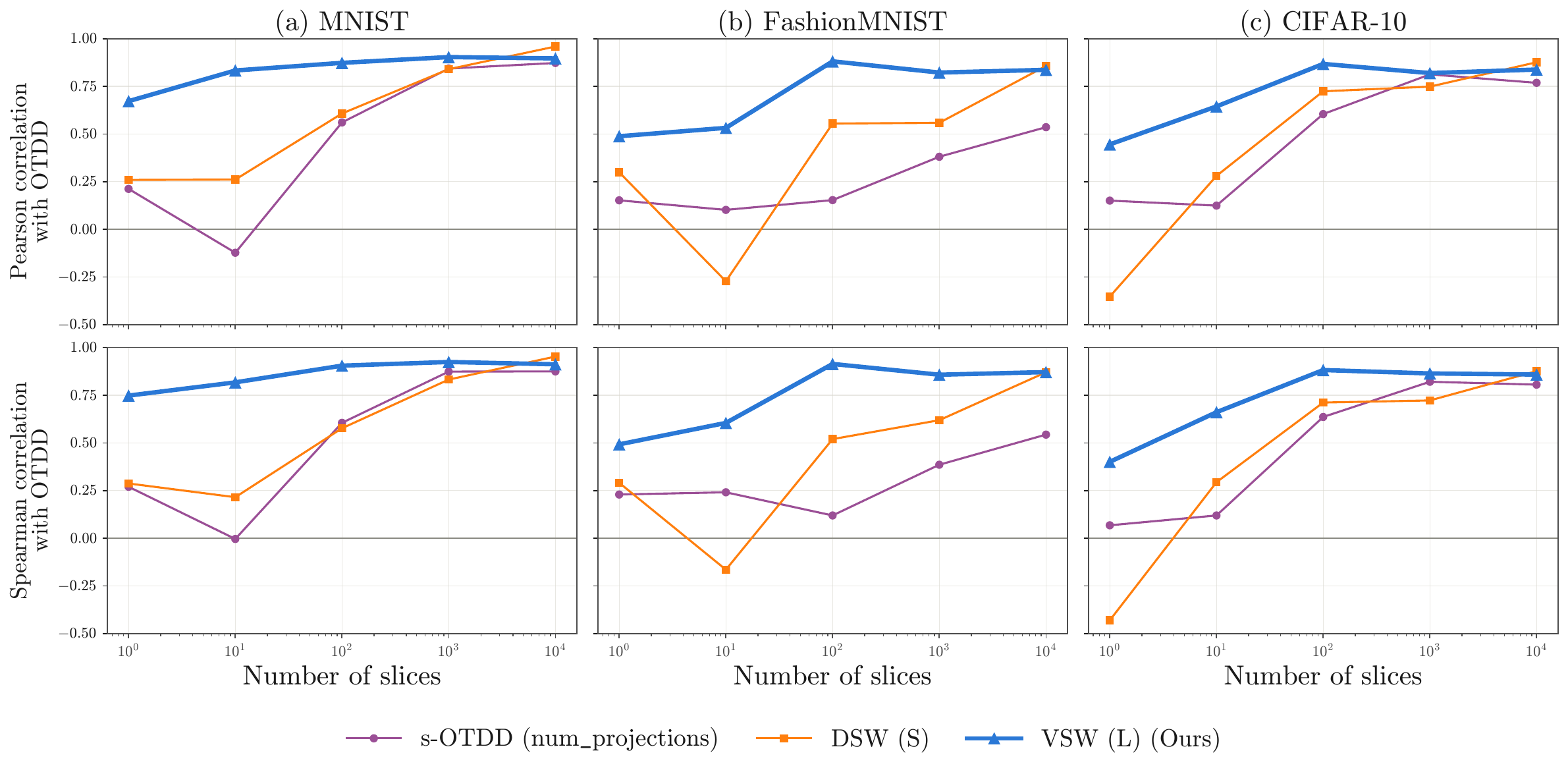}
  \caption{Pearson and Spearman correlations with exact OTDD versus the number of slices, computed over $30$ pairs per dataset. Both sub-datasets in a pair are drawn from the same training set and have equal size ($5{,}000$--$10{,}000$ samples); sizes vary across pairs.}
  \label{fig:corr}
\end{figure*}

\noindent\textbf{Comparing Distributions of Point Clouds.} Following the point-cloud distribution experiment of DSW~\citep{dsw}, we evaluate whether VSW captures changes in distributions of 3D point clouds from ModelNet-10~\citep{wu20153modelnet}. Starting from a fixed reference distribution, we independently vary (i) the number of shapes in the target distribution, (ii) the amount of Gaussian noise applied to the target points, and (iii) the number of points per target shape. We compare VSW with OT-NNA, exact WoW, and DSW. To facilitate direct comparison between methods with different scales, we report the normalized metrics together in Figure~\ref{fig:pc_toy}, with the reference configuration indicated by the dashed line.

VSW exhibits trends consistent with the OT-based baselines across all three perturbations. In particular, the distance decreases as the target contains more shapes, increases with stronger Gaussian perturbations, and remains comparatively stable once the point-cloud resolution is sufficiently high. These results indicate that VSW preserves the behavior of established distribution-comparison metrics while being substantially more efficient. We additionally report the wall-clock time in a realistic point cloud evaluation setting, comparing 662 ground-truth and 662 generated ShapeNet chairs~\citep{Chang2015ShapeNetAI} from PVD~\citep{zhou20213pvd}, each containing 2048 points. The results are shown in Table~\ref{tab:pc_time}. VSW requires only $0.0125$\,s per distance evaluation, compared with $0.0709$\,s for DSW, corresponding to approximately a $5.7\times$ reduction in runtime. OT-NNA and WoW are both substantially slower.

\begin{figure*}[t!]
  \centering
  \includegraphics[width=.9\textwidth]{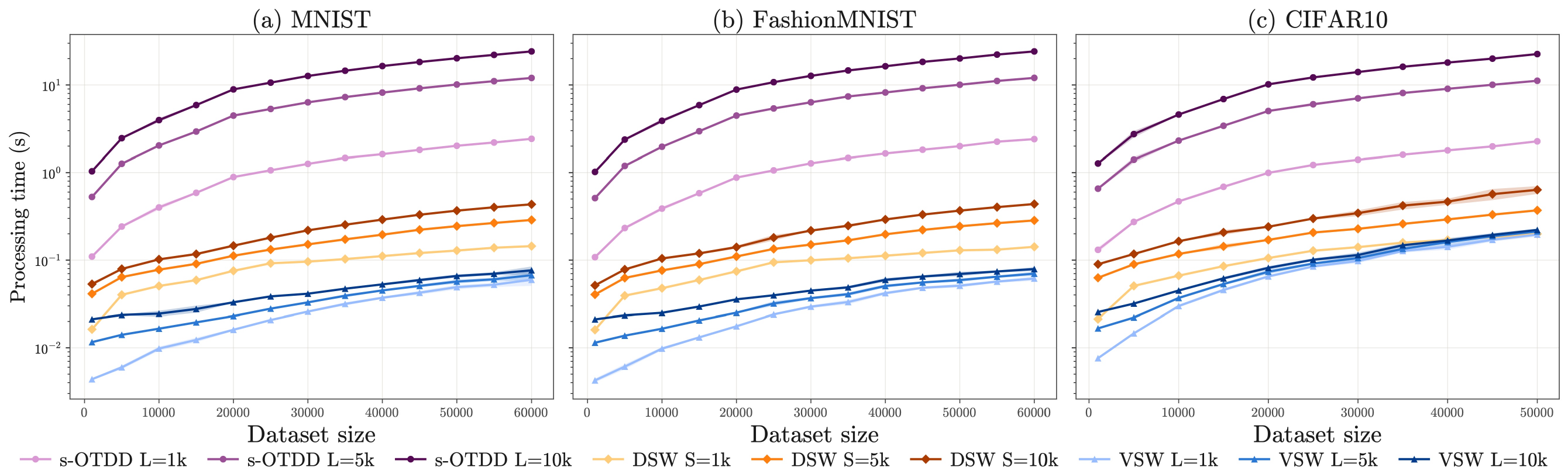}
  \caption{Runtime (s) vs the dataset size (each of 2 sets). VSW remains fastest among all methods.}
  \label{fig:otdd_runtime}  
\end{figure*}

\begin{table*}[t!]
\centering
\caption{Wall-clock time of a single distance evaluation between the 662
ground-truth ShapeNet chairs and 662 generated samples ($2048$ points per cloud). $^{\dagger}$ denotes a single run.
}
\label{tab:shapenet-chair-runtime}
\label{tab:pc_time}
{\begin{tabular}{lcccc}
\toprule
 & OT-NNA & WoW & DSW & VSW (Ours) \\
\midrule
runtime (s) & $6077.65^{\dagger}$ & $3094.22^{\dagger}$ & $0.0709 \pm 0.00017$ & $\mathbf{0.0125 \pm 0.00003}$ \\
\bottomrule
\end{tabular}}
\end{table*}

\begin{figure*}[t!]
  \centering
  \includegraphics[width=.9\textwidth]{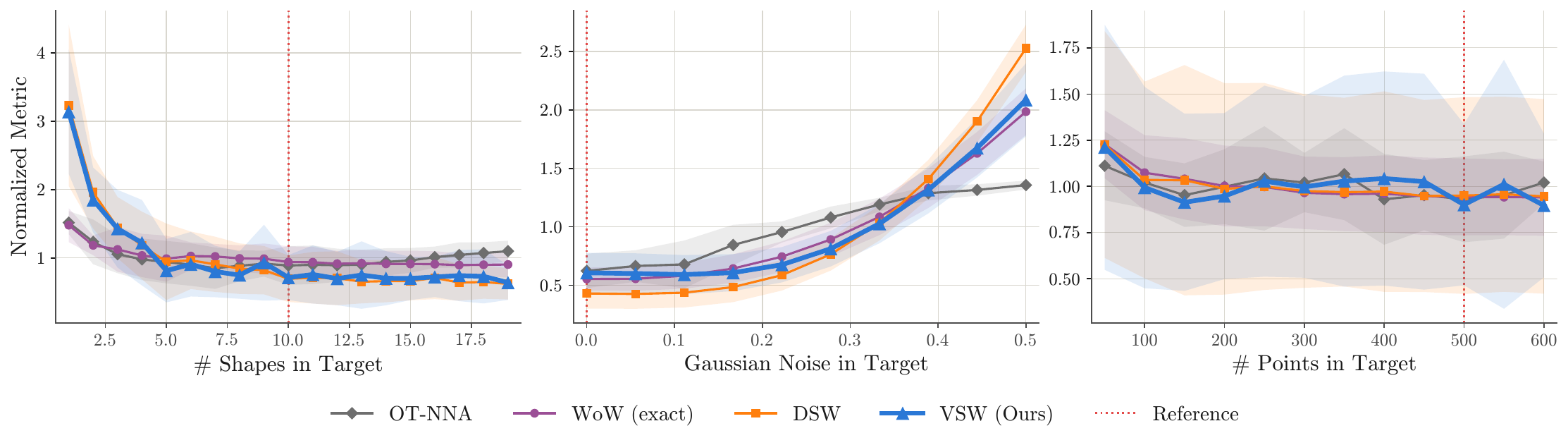}
  \caption{Point cloud comparison metric trends.}
  \label{fig:pc_toy}
\end{figure*}

\begin{figure*}[t!]
  \centering
  \includegraphics[width=.75\textwidth]{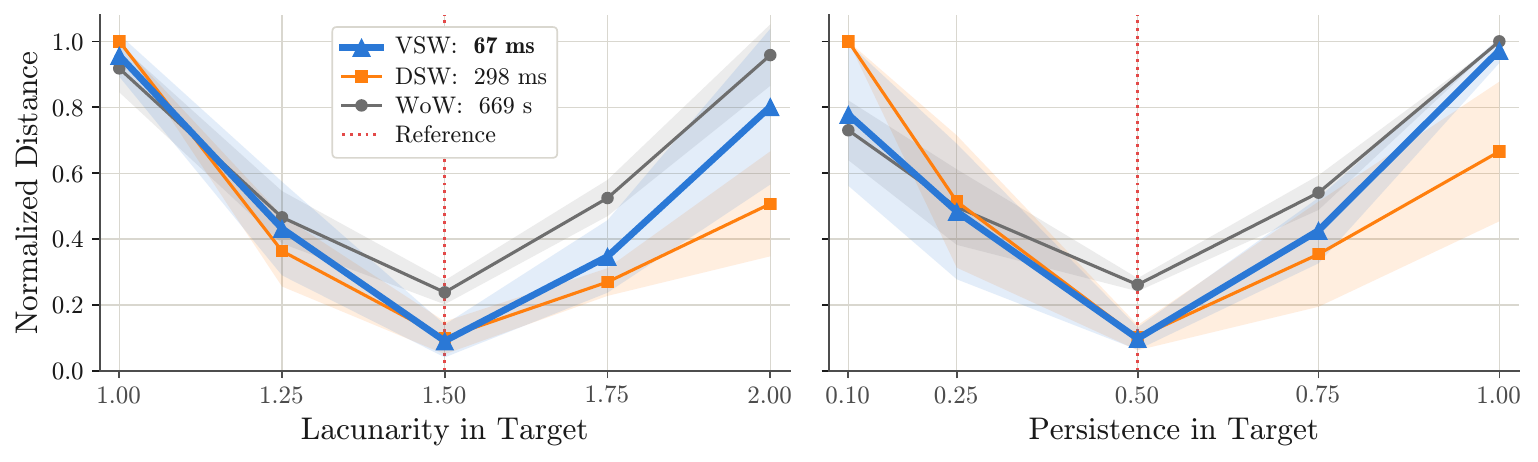}
  \caption{Comparing image distributions via patch distributions. VSW remains the fastest method, while achieving the same trend as baselines.}
  \label{fig:patch}
\end{figure*}

\noindent\textbf{Comparing Image Distributions via Patch Distributions.} Following DSW~\citep{dsw}, we evaluate VSW for comparing image distributions through their patch distributions. We use synthetic Perlin-noise textures and independently vary the \emph{lacunarity} and \emph{persistence} of the target distribution relative to a fixed reference. We compare VSW against DSW and exact Wasserstein-over-Wasserstein (WoW), and normalize each metric for visualization.

As shown in Figure ~\ref{fig:patch}, all three methods correctly attain their minimum near the reference parameters, while increasing as the target distribution deviates from the reference. VSW closely follows the trends of DSW and WoW, indicating that it preserves meaningful sensitivity to changes in image-patch distributions. At the same time, VSW requires only $67$ ms per comparison, compared with $298$ ms for DSW and $669$ s for exact WoW, making it approximately $4.4\times$ faster than DSW and nearly four orders of magnitude faster than exact WoW.

\noindent\textbf{Gradient Flow on Natural Images.}
We use ImageNet~\citep{imagenet} to evaluate the VSW particle gradient on a high-dimensional dataset. We optimize five clouds of twenty noise-initialized
particles against a fixed target containing twenty training images from
each of five classes. Figure~\ref{fig:grad_flow} shows the particle
trajectories in pixel space and in a two-dimensional PCA projection of
CLIP ViT-L/14 embeddings~\citep{radford2021learning}, fitted to the target
images. The particles progressively resemble target images, with projected
trajectories moving toward the target clusters. Nearest-neighbor
ground-truth images provide a qualitative reference for the final
particles. These visualizations illustrate optimization toward the fixed
empirical target.

\begin{figure*}[t!]
  \centering
  \includegraphics[width=\textwidth]{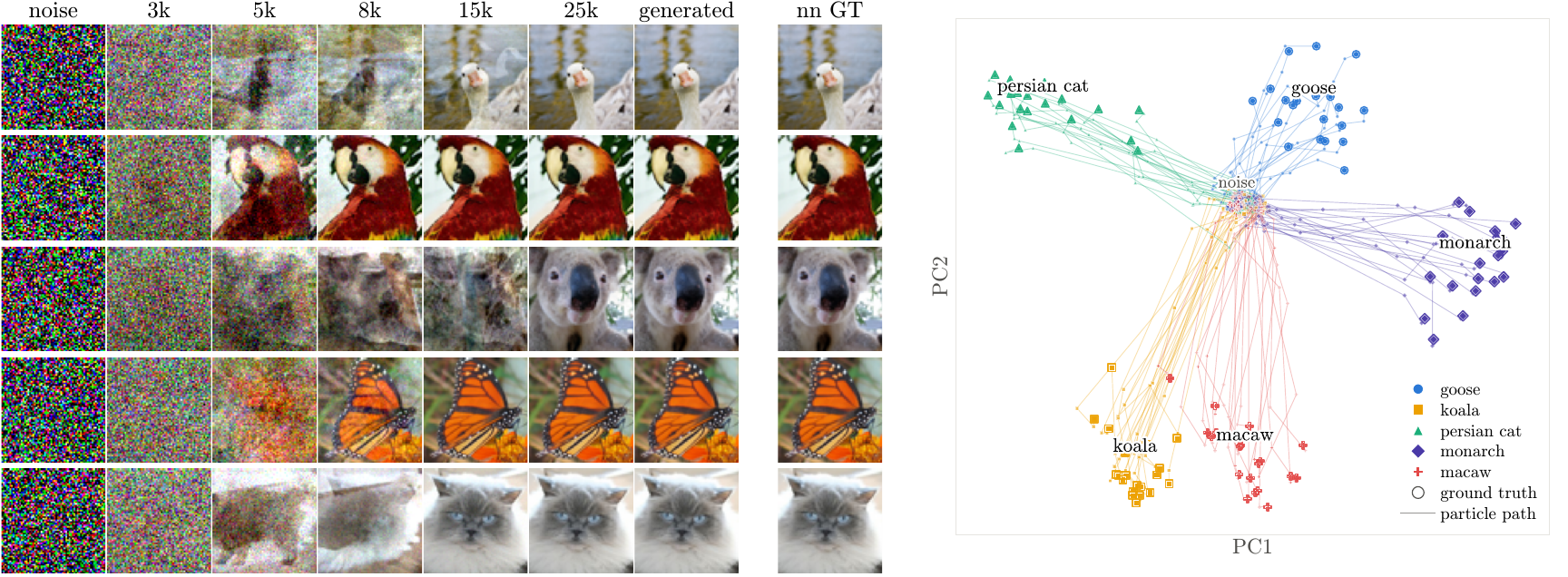}
  \caption{Particle optimization toward fixed ImageNet targets in pixel space \textbf{(Left)}, and in a 2D PCA projection of CLIP ViT-L/14 embeddings \textbf{(Right)}.}
  \label{fig:grad_flow}
\end{figure*}

\noindent\textbf{Parametric Generative Modeling.} We next evaluate VSW in a parametric generative modeling setting on two
2D hierarchical datasets: a checkerboard distribution and an
eight-component Gaussian mixture. Each dataset is represented as a
distribution of inner distributions, with an outer latent variable selecting
the component and inner noise generating samples within it. We train a
lightweight MLP using the corresponding objective. We compare VSW with DSW~\citep{dsw} and MMDSW~\citep{bonet2025flowing}. As shown in
Figure~\ref{fig:cb_gen}, VSW and MMDSW recover the main structure of both
targets, and VSW closely matches the held-out distributions. VSW also
reduces the Wasserstein error substantially faster in wall-clock time,
reaching a low error earlier than DSW and MMDSW on both datasets.

\begin{figure*}[t!]
  \centering
  \includegraphics[width=\textwidth]{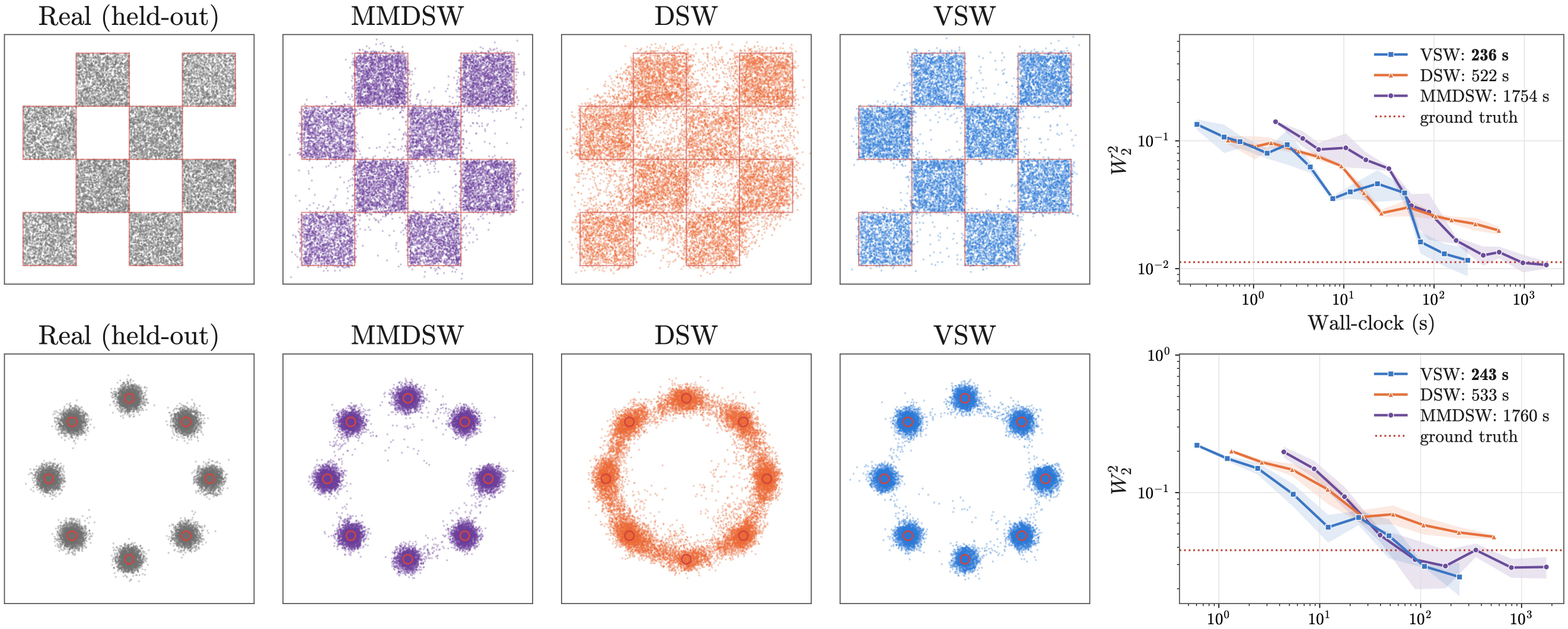}
  \caption{Parametric generation on Checkerboard and 8-Gaussians datasets.}
  \label{fig:cb_gen}
\end{figure*}

\noindent\textbf{Shape Classification.}
Following the protocol of DSW~\citep{dsw}, we evaluate VSW on
2D Shapes~\citep{beier2022gromov}, Animals~\citep{sumner2004meshes}, FAUST-500/1000\newline~\citep{faust} ($500$ and $1000$ vertices per shape), and MNIST-2000~\citep{dsw} using local distance distributions. For each distance, we precompute the pairwise distance matrix and perform 3-NN classification over 1000 random 25\%/75\% train/test splits, reporting mean accuracy and CPU time per distance. VSW, SQW, and STLB share one random projection basis across the distance matrix, while AE caches its per-shape energy self-terms. TLB and GW are deterministic and do not include such amortization.

Table~\ref{tab:shape_classification} shows competitive mean accuracies across these datasets, with VSW consistently faster than the other sliced baselines. Compared with SQW, VSW is approximately
$2.3$--$2.8\times$ faster across all datasets. The advantage is especially clear for larger shapes, requiring only $3.76$ ms on FAUST-1000 and
$7.48$ ms on MNIST-2000. VSW provides a favorable accuracy--runtime trade-off for this task.

\begin{table*}[t]
    \centering
    \caption{Shape classification with KNN (Acc., $\uparrow$) and runtime per distance computation (Time).}
    \label{tab:shape_classification}
    \scriptsize
    \setlength{\tabcolsep}{4.5pt}
    \renewcommand{\arraystretch}{1.08}
    \resizebox{\textwidth}{!}{%
    \begin{tabular}{l cc cc cc cc cc}
        \toprule
        \multirow{2}{*}{Distance}
        & \multicolumn{2}{c}{2D Shapes}
        & \multicolumn{2}{c}{Animals}
        & \multicolumn{2}{c}{FAUST-500}
        & \multicolumn{2}{c}{FAUST-1000}
        & \multicolumn{2}{c}{MNIST-2000} \\
        \cmidrule(lr){2-3}
        \cmidrule(lr){4-5}
        \cmidrule(lr){6-7}
        \cmidrule(lr){8-9}
        \cmidrule(lr){10-11}
        & Acc. (\%) & Time (ms)
        & Acc. (\%) & Time (ms)
        & Acc. (\%) & Time (ms)
        & Acc. (\%) & Time (ms)
        & Acc. (\%) & Time (ms) \\
        \midrule

        VSW
        & $97.7 \pm 2.6$ & $0.35$
        & $99.0 \pm 1.3$ & $0.34$
        & $39.0 \pm 5.5$ & $2.80$
        & $41.2 \pm 6.0$ & $3.76$
        & $96.5 \pm 3.4$ & $7.48$ \\

        \midrule

        SQW
        & $99.4 \pm 1.2$ & $0.93$
        & $99.3 \pm 1.0$ & $0.95$
        & $37.5 \pm 5.5$ & $6.51$
        & $40.8 \pm 6.0$ & $10.15$
        & $86.0 \pm 4.3$ & $18.46$ \\

        TLB
        & $100.0 \pm 0.2$ & $0.91$
        & $100.0 \pm 0.0$ & $0.97$
        & $36.7 \pm 5.6$ & $40.96$
        & $40.2 \pm 6.0$ & $163.06$
        & $95.5 \pm 3.3$ & $653.84$ \\

        STLB
        & $98.5 \pm 2.0$ & $0.99$
        & $99.0 \pm 1.3$ & $1.00$
        & $35.2 \pm 5.5$ & $10.01$
        & $38.4 \pm 5.9$ & $10.65$
        & $82.2 \pm 3.8$ & $18.36$ \\

        AE
        & $99.7 \pm 0.8$ & $0.13$
        & $97.9 \pm 1.8$ & $0.13$
        & $37.5 \pm 5.7$ & $2.63$
        & $42.0 \pm 5.9$ & $4.46$
        & $93.2 \pm 4.0$ & $19.79$ \\

        GW
        & $99.7 \pm 0.6$ & $3.1$
        & $100.0 \pm 0.0$ & $6.2$
        & $29.2 \pm 4.4$ & $346.3$
        & $33.0 \pm 5.3$ & $1824.7$
        & \multicolumn{2}{c}{\textemdash{} timeout \textemdash{}} \\

        \bottomrule
        
    \end{tabular}%
    }
\end{table*}

\section{Conclusion and Future work}

We introduced Vertical Sliced Wasserstein (VSW), a scalable distance for meta-measures based on slicing through continuous test-function projections rather than the transport geometry of the inner measures. We showed that VSW is a valid metric that metrizes weak convergence, and that $\mathrm{VSW}_2$ admits a Hilbertian embedding, yielding characteristic kernels on meta-measures. We further established dimension-independent two-stage convergence rates and derived a practical particle gradient for optimization. Across dataset comparison, shape classification, point-cloud and image-distribution comparison, gradient flow and parametric generation, VSW provides competitive agreement with transport-based distances while substantially reducing computation.

Our analysis also leaves several limitations and open directions. Promising directions include extending parametric generative modeling beyond low-dimensional examples to higher-dimensional and structured domains, such as point cloud and image generation. For instance, a continuum WoW-flow interpretation (where, based on the differential geometry of
\citet{bonet2025flowing},  VSW defines the objective,
 while $(\mathcal{P}(\PO),\WoW_2)$ defines the geometry of the descent) together with a convergence of the discrete optimization schemes requires a separate theoretical analysis.

\section*{Acknowledgments}

This work was supported in part by NSF CAREER Award No. 233989, NSF DMS Award No. 2603773, and a generous gift from Toyota Infotech Lab. The authors also gratefully acknowledge the computing resources provided by the NVIDIA Academic Grant Program.

\bibliographystyle{plainnat}
\bibliography{vsw_arxiv}

\begin{thebibliography}{49}
\providecommand{\natexlab}[1]{#1}
\providecommand{\url}[1]{\texttt{#1}}
\expandafter\ifx\csname urlstyle\endcsname\relax
  \providecommand{\doi}[1]{doi: #1}\else
  \providecommand{\doi}{doi: \begingroup \urlstyle{rm}\Url}\fi

\bibitem[Adler and Taylor(2007)]{adler2007random}
Robert~J Adler and Jonathan~E Taylor.
\newblock \emph{Random fields and geometry}.
\newblock Springer, 2007.

\bibitem[Alvarez-Melis and Fusi(2020)]{alvarez2020otdd}
David Alvarez-Melis and Nicolo Fusi.
\newblock Geometric dataset distances via optimal transport.
\newblock \emph{Advances in Neural Information Processing Systems},
  33:\penalty0 21428--21439, 2020.

\bibitem[Beier et~al.(2022)Beier, Beinert, and Steidl]{beier2022gromov}
Florian Beier, Robert Beinert, and Gabriele Steidl.
\newblock On a linear {G}romov–{W}asserstein distance.
\newblock \emph{IEEE Transactions on Image Processing}, 31:\penalty0
  7292--7305, 2022.
\newblock \doi{10.1109/TIP.2022.3221286}.

\bibitem[Berg et~al.(1984)Berg, Christensen, and Ressel]{berg1984harmonic}
Christian Berg, Jens Peter~Reus Christensen, and Paul Ressel.
\newblock \emph{Harmonic analysis on semigroups: theory of positive definite
  and related functions}, volume 100.
\newblock Springer, 1984.

\bibitem[Bobkov and Ledoux(2019)]{bobkov2019one}
Sergey~G. Bobkov and Michel Ledoux.
\newblock One-dimensional empirical measures, order statistics, and
  {K}antorovich transport distances.
\newblock \emph{Memoirs of the American Mathematical Society}, 2019.
\newblock URL \url{https://api.semanticscholar.org/CorpusID:209912842}.

\bibitem[Bogachev(1998)]{bogachev1998gaussian}
Vladimir~I. Bogachev.
\newblock \emph{{G}aussian Measures}.
\newblock American Mathematical Society, 1998.

\bibitem[Bogo et~al.(2014)Bogo, Romero, Loper, and Black]{faust}
Federica Bogo, Javier Romero, Matthew Loper, and Michael~J. Black.
\newblock {FAUST}: Dataset and evaluation for {3D} mesh registration.
\newblock In \emph{2014 IEEE Conference on Computer Vision and Pattern
  Recognition}, pages 3794--3801, 2014.
\newblock \doi{10.1109/CVPR.2014.491}.

\bibitem[Bonet et~al.(2025{\natexlab{a}})Bonet, Cazelles, Drumetz, and
  Courty]{bonet2025busemann}
Cl{\'e}ment Bonet, Elsa Cazelles, Lucas Drumetz, and Nicolas Courty.
\newblock Busemann functions in the wasserstein space: existence, closed-forms,
  and applications to slicing.
\newblock \emph{arXiv preprint arXiv:2510.04579}, 2025{\natexlab{a}}.

\bibitem[Bonet et~al.(2025{\natexlab{b}})Bonet, Vauthier, and
  Korba]{bonet2025flowing}
Cl{\'e}ment Bonet, Christophe Vauthier, and Anna Korba.
\newblock Flowing datasets with wasserstein over wasserstein gradient flows.
\newblock In \emph{Forty-second International Conference on Machine Learning},
  2025{\natexlab{b}}.
\newblock URL \url{https://openreview.net/forum?id=I1OHPb4zWo}.

\bibitem[Catalano and Lavenant(2024)]{catalano2024hierarchical}
Marta Catalano and Hugo Lavenant.
\newblock Hierarchical integral probability metrics: A distance on random
  probability measures with low sample complexity.
\newblock In \emph{Forty-first International Conference on Machine Learning},
  2024.
\newblock URL \url{https://openreview.net/forum?id=cmy38XZlJu}.

\bibitem[Chang et~al.(2015)Chang, Funkhouser, Guibas, Hanrahan, Huang, Li,
  Savarese, Savva, Song, Su, Xiao, Yi, and Yu]{Chang2015ShapeNetAI}
Angel~X. Chang, Thomas~A. Funkhouser, Leonidas~J. Guibas, Pat Hanrahan, Qi-Xing
  Huang, Zimo Li, Silvio Savarese, Manolis Savva, Shuran Song, Hao Su,
  Jianxiong Xiao, L.~Yi, and Fisher Yu.
\newblock {ShapeNet}: An information-rich {3D} model repository.
\newblock \emph{ArXiv}, abs/1512.03012, 2015.
\newblock URL \url{https://api.semanticscholar.org/CorpusID:2554264}.

\bibitem[Chizat et~al.(2018)Chizat, Peyr{\'e}, Schmitzer, and
  Vialard]{chizat2018interpolating}
L{\'e}na{\"i}c Chizat, Gabriel Peyr{\'e}, Bernhard Schmitzer, and
  Fran{\c{c}}ois-Xavier Vialard.
\newblock An interpolating distance between optimal transport and {Fisher--Rao}
  metrics.
\newblock \emph{Foundations of Computational Mathematics}, 18\penalty0
  (1):\penalty0 1--44, 2018.
\newblock \doi{10.1007/s10208-016-9331-y}.
\newblock URL \url{https://doi.org/10.1007/s10208-016-9331-y}.

\bibitem[Christmann and Steinwart(2010)]{christmann2010universal}
Andreas Christmann and Ingo Steinwart.
\newblock Universal kernels on non-standard input spaces.
\newblock In J.~Lafferty, C.~Williams, J.~Shawe-Taylor, R.~Zemel, and
  A.~Culotta, editors, \emph{Advances in Neural Information Processing
  Systems}, volume~23. Curran Associates, Inc., 2010.
\newblock URL
  \url{https://proceedings.neurips.cc/paper_files/paper/2010/file/4e0cb6fb5fb446d1c92ede2ed8780188-Paper.pdf}.

\bibitem[Deng et~al.(2009)Deng, Dong, Socher, Li, Li, and Fei-Fei]{imagenet}
Jia Deng, Wei Dong, Richard Socher, Li-Jia Li, Kai Li, and Li~Fei-Fei.
\newblock {ImageNet}: A large-scale hierarchical image database.
\newblock In \emph{2009 IEEE Conference on Computer Vision and Pattern
  Recognition}, pages 248--255, 2009.
\newblock \doi{10.1109/CVPR.2009.5206848}.

\bibitem[Dudley(1967)]{dudley1967sizes}
Richard~M Dudley.
\newblock The sizes of compact subsets of {H}ilbert space and continuity of
  {G}aussian processes.
\newblock \emph{Journal of Functional Analysis}, 1\penalty0 (3):\penalty0
  290--330, 1967.

\bibitem[Dukler et~al.(2019)Dukler, Li, Lin, and
  Montufar]{dukler2019wasserstein}
Yonatan Dukler, Wuchen Li, Alex Lin, and Guido Montufar.
\newblock {W}asserstein of {W}asserstein loss for learning generative models.
\newblock In Kamalika Chaudhuri and Ruslan Salakhutdinov, editors,
  \emph{Proceedings of the 36th International Conference on Machine Learning},
  volume~97 of \emph{Proceedings of Machine Learning Research}, pages
  1716--1725. PMLR, 09--15 Jun 2019.
\newblock URL \url{https://proceedings.mlr.press/v97/dukler19a.html}.

\bibitem[Fernique(1970)]{fernique1970integrabilite}
Xavier Fernique.
\newblock Int{\'e}grabilit{\'e} des vecteurs gaussiens.
\newblock \emph{CR Acad. Sci. Paris S{\'e}r. AB}, 270:\penalty0 A1698--A1699,
  1970.

\bibitem[Flamary et~al.(2021)Flamary, Courty, Gramfort, Alaya, Boisbunon,
  Chambon, Chapel, Corenflos, Fatras, Fournier, Gautheron, Gayraud, Janati,
  Rakotomamonjy, Redko, Rolet, Schutz, Seguy, Sutherland, Tavenard, Tong, and
  Vayer]{flamary2021pot}
R{\'e}mi Flamary, Nicolas Courty, Alexandre Gramfort, Mokhtar~Z. Alaya,
  Aur{\'e}lie Boisbunon, Stanislas Chambon, Laetitia Chapel, Adrien Corenflos,
  Kilian Fatras, Nemo Fournier, L{\'e}o Gautheron, Nathalie~T.H. Gayraud,
  Hicham Janati, Alain Rakotomamonjy, Ievgen Redko, Antoine Rolet, Antony
  Schutz, Vivien Seguy, Danica~J. Sutherland, Romain Tavenard, Alexander Tong,
  and Titouan Vayer.
\newblock {POT}: {Python} optimal transport.
\newblock \emph{J. Mach. Learn. Res.}, 22\penalty0 (1), January 2021.
\newblock ISSN 1532-4435.

\bibitem[Fournier and Guillin(2015)]{fournier2015rate}
Nicolas Fournier and Arnaud Guillin.
\newblock On the rate of convergence in {W}asserstein distance of the empirical
  measure.
\newblock \emph{Probability theory and related fields}, 162\penalty0
  (3):\penalty0 707--738, 2015.

\bibitem[Gretton et~al.(2012)Gretton, Borgwardt, Rasch, Sch{{\"o}}lkopf, and
  Smola]{gretton2012kernel}
Arthur Gretton, Karsten~M. Borgwardt, Malte~J. Rasch, Bernhard Sch{{\"o}}lkopf,
  and Alexander Smola.
\newblock A kernel two-sample test.
\newblock \emph{Journal of Machine Learning Research}, 13\penalty0
  (25):\penalty0 723--773, 2012.
\newblock URL \url{http://jmlr.org/papers/v13/gretton12a.html}.

\bibitem[Han(2023)]{han2023sliced}
Ruiyu Han.
\newblock Sliced wasserstein distance between probability measures on hilbert
  spaces.
\newblock \emph{arXiv preprint arXiv:2307.05802}, 2023.

\bibitem[Haviv et~al.(2025)Haviv, Pooladian, Pe'er, and
  Amos]{haviv2024wasserstein}
Doron Haviv, Aram-Alexandre Pooladian, Dana Pe'er, and Brandon Amos.
\newblock Wasserstein flow matching: Generative modeling over families of
  distributions, 2025.
\newblock URL \url{https://openreview.net/forum?id=HB4lr0ykTi}.

\bibitem[Kallenberg(2017)]{kallenberg2017random}
Olav Kallenberg.
\newblock \emph{Random Measures, Theory and Applications}.
\newblock Springer, 2017.

\bibitem[Krizhevsky and Hinton(2009)]{krizhevsky2009cifar10}
Alex Krizhevsky and Geoffrey Hinton.
\newblock Learning multiple layers of features from tiny images.
\newblock Technical report, University of Toronto, 2009.

\bibitem[LeCun et~al.(1998)LeCun, Bottou, Bengio, and Haffner]{lecun1998mnist}
Yann LeCun, L{\'e}on Bottou, Yoshua Bengio, and Patrick Haffner.
\newblock Gradient-based learning applied to document recognition.
\newblock \emph{Proceedings of the IEEE}, 86\penalty0 (11):\penalty0
  2278--2324, 1998.

\bibitem[Ledoux and Talagrand(1991)]{ledoux1991probability}
Michel Ledoux and Michel Talagrand.
\newblock \emph{Probability in {B}anach Spaces: isoperimetry and processes},
  volume~23.
\newblock Springer, 1991.

\bibitem[Liero et~al.(2018)Liero, Mielke, and Savar{\'e}]{liero2018optimal}
Matthias Liero, Alexander Mielke, and Giuseppe Savar{\'e}.
\newblock Optimal entropy-transport problems and a new {Hellinger--Kantorovich}
  distance between positive measures.
\newblock \emph{Inventiones mathematicae}, 211\penalty0 (3):\penalty0
  969--1117, 2018.
\newblock \doi{10.1007/s00222-017-0759-8}.
\newblock URL \url{https://doi.org/10.1007/s00222-017-0759-8}.

\bibitem[Liu et~al.(2025)Liu, Bai, Lu, Soltoggio, and
  Kolouri]{liu2025wasserstein}
Xinran Liu, Yikun Bai, Yuzhe Lu, Andrea Soltoggio, and Soheil Kolouri.
\newblock {W}asserstein task embedding for measuring task similarities.
\newblock \emph{Neural Networks}, 181:\penalty0 106796, 2025.

\bibitem[Micchelli et~al.(2006)Micchelli, Xu, and
  Zhang]{micchelli2006universal}
Charles~A. Micchelli, Yuesheng Xu, and Haizhang Zhang.
\newblock Universal kernels.
\newblock \emph{Journal of Machine Learning Research}, 7\penalty0
  (95):\penalty0 2651--2667, 2006.
\newblock URL \url{http://jmlr.org/papers/v7/micchelli06a.html}.

\bibitem[Nadjahi et~al.(2020)Nadjahi, Durmus, Chizat, Kolouri, Shahrampour, and
  Simsekli]{nadjahi2020statistical}
Kimia Nadjahi, Alain Durmus, L\'{e}na\"{\i}c Chizat, Soheil Kolouri, Shahin
  Shahrampour, and Umut Simsekli.
\newblock Statistical and topological properties of sliced probability
  divergences.
\newblock In H.~Larochelle, M.~Ranzato, R.~Hadsell, M.F. Balcan, and H.~Lin,
  editors, \emph{Advances in Neural Information Processing Systems}, volume~33,
  pages 20802--20812. Curran Associates, Inc., 2020.
\newblock URL
  \url{https://proceedings.neurips.cc/paper_files/paper/2020/file/eefc9e10ebdc4a2333b42b2dbb8f27b6-Paper.pdf}.

\bibitem[Nadjahi et~al.(2021)Nadjahi, Durmus, Jacob, Badeau, and
  Simsekli]{nadjahi2021fast}
Kimia Nadjahi, Alain Durmus, Pierre~E Jacob, Roland Badeau, and Umut Simsekli.
\newblock Fast approximation of the sliced-{W}asserstein distance using
  concentration of random projections.
\newblock \emph{Advances in Neural Information Processing Systems},
  34:\penalty0 12411--12424, 2021.

\bibitem[Naslidnyk et~al.(2025)Naslidnyk, Chau, Briol, and
  Muandet]{naslidnyk2025kernel}
Masha Naslidnyk, Siu~Lun Chau, Francois-Xavier Briol, and Krikamol Muandet.
\newblock Kernel quantile embeddings and associated probability metrics.
\newblock In \emph{Proceedings of the 42nd International Conference on Machine
  Learning}, volume 267 of \emph{Proceedings of Machine Learning Research},
  pages 45770--45800. PMLR, 2025.
\newblock URL \url{https://proceedings.mlr.press/v267/naslidnyk25a.html}.

\bibitem[Nguyen and Mueller(2026)]{nguyen2024summarizing}
Khai Nguyen and Peter Mueller.
\newblock Summarizing nonparametric bayesian mixture posteriors—sliced
  optimal transport metrics for gaussian mixtures.
\newblock \emph{Journal of Computational and Graphical Statistics}, 0\penalty0
  (0):\penalty0 1--12, 2026.
\newblock \doi{10.1080/10618600.2026.2634831}.

\bibitem[Nguyen et~al.(2025)Nguyen, Nguyen, Pham, and Ho]{sotdd}
Khai Nguyen, Hai Nguyen, Tuan Pham, and Nhat Ho.
\newblock Lightspeed geometric dataset distance via sliced optimal transport.
\newblock In \emph{Proceedings of the 42nd International Conference on Machine
  Learning}, ICML'25. JMLR.org, 2025.

\bibitem[Piening and Beinert(2026)]{dsw}
Moritz Piening and Robert Beinert.
\newblock Slicing {W}asserstein over {W}asserstein via functional optimal
  transport.
\newblock In \emph{International Conference on Learning Representations},
  volume 2026, pages 84708--84740, 2026.
\newblock URL
  \url{https://proceedings.iclr.cc/paper_files/paper/2026/hash/88549904b871acbd676fb7c81581b0c2-Abstract-Conference.html}.

\bibitem[Radford et~al.(2021)Radford, Kim, Hallacy, Ramesh, Goh, Agarwal,
  Sastry, Askell, Mishkin, Clark, et~al.]{radford2021learning}
Alec Radford, Jong~Wook Kim, Chris Hallacy, Aditya Ramesh, Gabriel Goh,
  Sandhini Agarwal, Girish Sastry, Amanda Askell, Pamela Mishkin, Jack Clark,
  et~al.
\newblock Learning transferable visual models from natural language
  supervision.
\newblock In \emph{International conference on machine learning}, pages
  8748--8763. PmLR, 2021.

\bibitem[Rahimi and Recht(2007)]{rahimi2007random}
Ali Rahimi and Benjamin Recht.
\newblock Random features for large-scale kernel machines.
\newblock \emph{Advances in neural information processing systems}, 20, 2007.

\bibitem[Rustamov and Majumdar(2023)]{rustamov2023intrinsic}
Raif~M. Rustamov and Subhabrata Majumdar.
\newblock Intrinsic sliced {W}asserstein distances for comparing collections of
  probability distributions on manifolds and graphs.
\newblock In \emph{Proceedings of the 40th International Conference on Machine
  Learning}, volume 202 of \emph{Proceedings of Machine Learning Research},
  pages 29388--29415. PMLR, 2023.
\newblock URL \url{https://proceedings.mlr.press/v202/rustamov23a.html}.

\bibitem[Schoenberg(1938)]{schoenberg1938metric}
Isaac~J Schoenberg.
\newblock Metric spaces and positive definite functions.
\newblock \emph{Transactions of the American Mathematical Society}, 44\penalty0
  (3):\penalty0 522--536, 1938.

\bibitem[Sriperumbudur et~al.(2011)Sriperumbudur, Fukumizu, and
  Lanckriet]{sriperumbudur2011universality}
Bharath~K. Sriperumbudur, Kenji Fukumizu, and Gert~R.G. Lanckriet.
\newblock Universality, characteristic kernels and rkhs embedding of measures.
\newblock \emph{Journal of Machine Learning Research}, 12\penalty0
  (70):\penalty0 2389--2410, 2011.
\newblock URL \url{http://jmlr.org/papers/v12/sriperumbudur11a.html}.

\bibitem[Steinwart(2001)]{steinwart2001influence}
Ingo Steinwart.
\newblock On the influence of the kernel on the consistency of support vector
  machines.
\newblock \emph{Journal of machine learning research}, 2\penalty0
  (Nov):\penalty0 67--93, 2001.

\bibitem[Sumner and Popovi{\'c}(2004)]{sumner2004meshes}
Robert~W. Sumner and Jovan Popovi{\'c}.
\newblock Deformation transfer for triangle meshes.
\newblock \emph{ACM Trans. Graph.}, 23\penalty0 (3):\penalty0 399–405, August
  2004.
\newblock ISSN 0730-0301.
\newblock \doi{10.1145/1015706.1015736}.
\newblock URL \url{https://doi.org/10.1145/1015706.1015736}.

\bibitem[Szab{{\'o}} et~al.(2016)Szab{{\'o}}, Sriperumbudur, P{{\'o}}czos, and
  Gretton]{szabo2016learning}
Zolt{{\'a}}n Szab{{\'o}}, Bharath~K. Sriperumbudur, Barnab{{\'a}}s
  P{{\'o}}czos, and Arthur Gretton.
\newblock Learning theory for distribution regression.
\newblock \emph{Journal of Machine Learning Research}, 17\penalty0
  (152):\penalty0 1--40, 2016.
\newblock URL \url{http://jmlr.org/papers/v17/14-510.html}.

\bibitem[van~der Vaart and van Zanten(2008)]{vaart2008reproducing}
Aad~W. van~der Vaart and J.~Harry van Zanten.
\newblock Reproducing kernel {H}ilbert spaces of {G}aussian priors.
\newblock In \emph{Pushing the Limits of Contemporary Statistics: Contributions
  in Honor of Jayanta K. Ghosh}, pages 200--222. Institute of Mathematical
  Statistics, 2008.

\bibitem[Wu et~al.(2015)Wu, Song, Khosla, Yu, Zhang, Tang, and
  Xiao]{wu20153modelnet}
Zhirong Wu, Shuran Song, Aditya Khosla, Fisher Yu, Linguang Zhang, Xiaoou Tang,
  and Jianxiong Xiao.
\newblock {3D} {ShapeNets}: A deep representation for volumetric shapes.
\newblock In \emph{Proceedings of the IEEE conference on computer vision and
  pattern recognition}, pages 1912--1920, 2015.

\bibitem[Xiao et~al.(2017)Xiao, Rasul, and Vollgraf]{xiao2017fashion}
Han Xiao, Kashif Rasul, and Roland Vollgraf.
\newblock {Fashion-MNIST}: a novel image dataset for benchmarking machine
  learning algorithms.
\newblock \emph{arXiv preprint arXiv:1708.07747}, 2017.

\bibitem[Yang et~al.(2019)Yang, Huang, Hao, Liu, Belongie, and
  Hariharan]{yang2019pointflow}
Guandao Yang, Xun Huang, Zekun Hao, Ming-Yu Liu, Serge Belongie, and Bharath
  Hariharan.
\newblock {PointFlow}: {3D} point cloud generation with continuous normalizing
  flows.
\newblock In \emph{2019 IEEE/CVF International Conference on Computer Vision
  (ICCV)}, pages 4540--4549, 2019.
\newblock \doi{10.1109/ICCV.2019.00464}.

\bibitem[Yurochkin et~al.(2019)Yurochkin, Claici, Chien, Mirzazadeh, and
  Solomon]{yurochkin2019hierarchical}
Mikhail Yurochkin, Sebastian Claici, Edward Chien, Farzaneh Mirzazadeh, and
  Justin Solomon.
\newblock Hierarchical optimal transport for document representation.
\newblock In \emph{Proceedings of the 33rd International Conference on Neural
  Information Processing Systems}, Red Hook, NY, USA, 2019. Curran Associates
  Inc.

\bibitem[Zhou et~al.(2021)Zhou, Du, and Wu]{zhou20213pvd}
Linqi Zhou, Yilun Du, and Jiajun Wu.
\newblock {3D} shape generation and completion through point-voxel diffusion.
\newblock In \emph{2021 IEEE/CVF International Conference on Computer Vision
  (ICCV)}, pages 5806--5815. IEEE, 2021.

\end{thebibliography}

\clearpage
\appendix
\section{Background}
\label{app:background_app}

\begin{proof}[Proof of Lemma~\ref{lem:algebraic}]
Both identities follow from linearity of the integral, in the integrand and in the measure respectively. For the first, $af+bg\in\Cont$ and
$$
P_{af+bg}(\mu)=\int_\Om (af+bg)\dd\mu=a\int_\Om f\dd\mu+b\int_\Om g\dd\mu=a\,P_f(\mu)+b\,P_g(\mu).
$$
For the second, $\mu_t:=(1-t)\mu_0+t\mu_1\in\PO$, and for every Borel $B\subseteq\Om$ we have $\mu_t(B)=(1-t)\mu_0(B)+t\mu_1(B)$; the identity $\int_\Om f\dd\mu_t=(1-t)\int_\Om f\dd\mu_0+t\int_\Om f\dd\mu_1$ therefore holds for simple $f$ and extends to all $f\in\Cont$ by uniform approximation and boundedness of $f$ on the compact $\Om$. Hence $P_f(\mu_t)=(1-t)P_f(\mu_0)+tP_f(\mu_1)$.
\end{proof}

\noindent\textbf{Details on the Slicing Kernel Assumption}
\label{app:kernel_assumption}

We provide additional details for Assumption~\ref{ass:kernel}. Let
$f\sim\sigma=\mathrm{GP}(0,\kappa)$, where $\kappa$ is continuous and
universal on the compact space $\Om$, i.e., its RKHS $\Hk$ is dense in
$\Cont$. A standard example is the Gaussian radial basis function (RBF)
kernel on compact $\Om\subset\R^d$
\citep{steinwart2001influence,micchelli2006universal}.

The canonical metric induced by the Gaussian process is
\[
d_\kappa(x,y)
:=
\left(
\mathbb{E}_{f\sim\sigma}
\big[(f(x)-f(y))^2\big]
\right)^{1/2}
=
\big(
\kappa(x,x)-2\kappa(x,y)+\kappa(y,y)
\big)^{1/2}.
\]
For a metric $d$, the covering number is defined as
\[
N(\Om,d,\varepsilon)
:=
\min\left\{
n\in\mathbb{N}:
\exists x_1,\ldots,x_n\in\Om
\text{ such that }
\Om=\bigcup_{i=1}^n B_d(x_i,\varepsilon)
\right\}.
\]
The sample-path continuity required by Assumption~\ref{ass:kernel}
is guaranteed by Dudley's entropy condition
\[
\int_0^1
\sqrt{\log N(\Om,d_\kappa,\varepsilon)}
\,\dd\varepsilon
<\infty,
\]
where $N$ denotes the covering number. 
Under this condition, the Gaussian process admits a version with almost
surely continuous sample paths
\citep{dudley1967sizes,adler2007random}. Universality remains a separate
requirement on $\kappa$.

\begin{lemma}%
\label{lem:gaussian-facts}
Under Assumption~\ref{ass:kernel}, (i) there is $\varepsilon>0$ with $\E_\sigma\exp(\varepsilon\norm{f}_\infty^2)<\infty$; in particular $\Lambda_r:=\E_\sigma\norm{f}_\infty^r<\infty$ for all $r\ge1$ \citep{fernique1970integrabilite,ledoux1991probability}; (ii) $\supp(\sigma)=\overline{\Hk}^{\norm{\cdot}_\infty}=\Cont$ \citep[Thm.~3.6.1]{bogachev1998gaussian}, see also \citep{vaart2008reproducing}.
\end{lemma}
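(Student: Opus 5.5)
The plan is to realize $\sigma$ as a centered Gaussian Borel measure on the separable Banach space $(\Cont,\norm{\cdot}_\infty)$. Both claims then reduce to two classical facts for such measures: Fernique's theorem and the characterization of the support as the closure of the Cameron--Martin space.

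\textbf{Step 1: $\sigma$ is a Gaussian Borel measure on $\Cont$.}
By Assumption~\ref{ass:kernel}, the process admits a version with continuous paths. This gives a map $\omega\mapsto f(\cdot,\omega)\in\Cont$.

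Because $\Cont$ is separable, its Borel $\sigma$-algebra is generated by the evaluations $f\mapsto f(x)$, $x\in\Om$. Hence this map is measurable, and $\sigma$ is its law.

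Next I show that every continuous linear functional is Gaussian. By Riesz--Markov, such a functional has the form $\ell_\nu(f)=\int f\dd\nu$ with $\nu\in\mathcal{M}(\Om)$.
\begin{itemize}
\item Choose finitely supported $\nu_k\to\nu$ weak$^*$ with $\norm{\nu_k}_{TV}\le\norm{\nu}_{TV}$, for instance by partition-of-unity discretization on the compact $\Om$.
\item Then $\ell_{\nu_k}(f)\to\ell_\nu(f)$ for every continuous path $f$.
\item Each $\ell_{\nu_k}(f)$ is a finite combination of the Gaussian vector $(f(x_1),\dots,f(x_n))\sim\mathcal N(0,K)$, so it is centered Gaussian.
\item An almost sure limit of centered Gaussians is centered Gaussian, possibly degenerate. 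This follows by comparing characteristic functions, with the variances $\iint\kappa\dd\nu_k\dd\nu_k$ converging to $\iint\kappa\dd\nu\dd\nu$ by continuity of $\kappa$.
\end{itemize}
Thus $\sigma$ is a centered Radon Gaussian measure on $\Cont$.

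\textbf{Step 2: part (i).}
Fernique's theorem \citep{fernique1970integrabilite,ledoux1991probability} applied to the norm $\norm{\cdot}_\infty$ yields $\varepsilon>0$ with $\E_\sigma\exp(\varepsilon\norm{f}_\infty^2)<\infty$.

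For the moments, note that for each $r\ge1$ there is $C_{r,\varepsilon}$ with $t^r\le C_{r,\varepsilon}e^{\varepsilon t^2}$ for all $t\ge0$. Therefore $\Lambda_r<\infty$.

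\textbf{Step 3: part (ii).}
I will identify the Cameron--Martin space $H_\sigma$ with $\Hk$. The covariance operator is $R_\sigma:\mathcal{M}(\Om)\to\Cont$, given by
\[
R_\sigma\nu(x)=\E[f(x)\ell_\nu(f)]=\int\kappa(x,y)\dd\nu(y),
\]
using Fubini and $\Lambda_2<\infty$ from Step 2.

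On the range of $R_\sigma$, the Cameron--Martin inner product is $\ip{R_\sigma\nu}{R_\sigma\nu'}=\iint\kappa\dd\nu\dd\nu'$. This coincides with the $\Hk$ inner product of the embeddings $\int\kappa(\cdot,y)\dd\nu(y)$; for finitely supported $\nu$ it is exactly the reproducing property. The range contains the span of the $\kappa(\cdot,x)$, which is dense in $\Hk$. Taking completions, and noting that both spaces embed continuously into $\Cont$, gives $H_\sigma=\Hk$ with equal norms.

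Bogachev's Thm.~3.6.1 then gives $\supp(\sigma)=\overline{H_\sigma}^{\norm{\cdot}_\infty}=\overline{\Hk}^{\norm{\cdot}_\infty}$. By universality in Assumption~\ref{ass:kernel}, this closure is all of $\Cont$.

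\textbf{Main obstacle.}
I expect the delicate points to be measure-theoretic: showing that all of $\Cont^*$, not only the evaluations, acts by Gaussian variables, and matching the two completions in Step 3. If the approximation argument in Step 1 proves awkward, my fallback is the characterization of $\Hk$ as the closure in $L^2(\sigma)$ of the span of the evaluations, as in \citet{vaart2008reproducing}. Since the $\ell_\nu$ lie in that $L^2$-closure, this handles both issues at once.
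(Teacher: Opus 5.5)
Your proposal is correct and follows the same route as the paper, which proves the lemma purely by citation. It uses Fernique's theorem for (i), and for (ii) Bogachev's support theorem for Radon Gaussian measures, the identification of the Cameron--Martin space with $\Hk$ (van der Vaart--van Zanten), and universality; you additionally spell out the two facts the paper takes from the references, namely that $\sigma$ is a centered Radon Gaussian measure on $\Cont$ and that its Cameron--Martin space coincides isometrically with $\Hk$.
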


\begin{remark}
\label{rem:facts}
More specifically, from (ii), we have that for any $f\in\Cont,\epsilon>0$, there exists finitely many $x_1,\ldots x_n\in\Omega, c_1,\ldots c_n\in\mathbb{R}$, with 
$\|f-\sum_{i=1}^nc_i\kappa(\cdot,x_i)\|_\infty<\epsilon.$
\end{remark}

\begin{assumption}%
\label{ass:smooth}
We assume that the underlying space $\Om\subset\R^d$ is compact and that $\kappa\in C^{2}(\Om\times\Om)$. Then the derivative process $\nabla f=(\partial_1f,\dots,\partial_df)$ is a centered Gaussian field with covariance
$$
\E_\sigma\big[\partial_i f(x)\,\partial_j f(y)\big]=\frac{\partial^2\kappa(x,y)}{\partial x_i\,\partial y_j},\qquad i,j=1,\dots,d,
$$
and we assume that its canonical metric
$$
d_\nabla(x,y):=\big(\E_\sigma\norm{\nabla f(x)-\nabla f(y)}^2\big)^{1/2}
$$
satisfies Dudley's entropy condition $\int_0^1\sqrt{\log N(\Om,d_\nabla,\varepsilon)}\,\dd\varepsilon<\infty$. Under these hypotheses $f\in C^1(\Om)$ for $\sigma$-a.e.\ $f$, i.e.\ $\sigma\big(C^1(\Om)\big)=1$, and the Borell--TIS inequality \citep{adler2007random} gives
$$
\E_\sigma\Big[\sup_{x\in\Om}\norm{\nabla f(x)}^{r}\Big]<\infty\qquad\text{for all }r\ge1 .
$$
For instance, the Gaussian RBF kernel is $C^\infty$ and satisfies these conditions, with $C^\infty$ sample paths.
\end{assumption}

\paragraph{Schoenberg theory.} A metric $d$ on a set $X$ is \emph{Hilbertian} if it is realized by an isometric feature map: there exist a Hilbert space $\mathcal{H}$ and a map $\phi:X\to\mathcal{H}$ with
$$
d(x,y)=\norm{\phi(x)-\phi(y)}_{\mathcal H}\qquad\text{for all }x,y\in X .
$$
A symmetric function $\psi:X\times X\to\R$ with $\psi(x,x)=0$ is \emph{conditionally negative definite} (CND) if
$$
\sum_{i,j=1}^{n}c_ic_j\,\psi(x_i,x_j)\;\le\;0
\qquad\text{whenever } n\in\N,\ x_1,\dots,x_n\in X,\ c\in\R^n,\ \sum_{i=1}^{n}c_i=0,
$$
that is, negative semi-definiteness restricted to the hyperplane of zero-sum coefficient vectors. Schoenberg's theorem states that $d$ is Hilbertian if and only if $d^2$ is CND, and that a nonnegative CND function stays CND under fractional powers: $\psi$ CND implies $\psi^{\alpha}$ CND for every $\alpha\in(0,1]$. Applying this with $\psi=d^2$ and $\alpha=\beta/2$ gives that $d^{\beta}$ is CND for every $\beta\in(0,2]$, and by the CND--PD correspondence
$$
d^\beta \text{ is CND}\iff \exp(-\gamma d^\beta)\text{ is positive definite for every }\gamma>0,
$$
we conclude that
$$
K_{\gamma,\beta}(x,y)=\exp\big(-\gamma\,d(x,y)^{\beta}\big)
$$
is a positive definite kernel on $X$ for all $\gamma>0$ and $\beta\in(0,2]$ \citep{schoenberg1938metric,berg1984harmonic}.

\section{Our method:
Position between kernel and transport lifts}
\label{sec:comparisons}

Writing $m_\mu:=\int\kappa(\cdot,x)\dd\mu(x)\in\Hk$, the Gaussian field
$\mu\mapsto P_f(\mu)$ has covariance
$\E_\sigma[P_f(\mu)P_f(\nu)]=\ip{m_\mu}{m_\nu}_{\Hk}$.
Thus VSW uses Gaussian linear probes of the embedded meta-measures, in the
sense of an isonormal Gaussian process on $\Hk$.
This connects the construction to Hilbert-space slicing
\citep{han2023sliced} and kernel expected sliced Wasserstein
\citep{naslidnyk2025kernel}. It does not identify their slicing laws:
our GP is $C(\Om)$-valued and need not be $\Hk$-valued, whereas a
Hilbert-valued Gaussian requires trace-class covariance.
Intrinsic sliced Wasserstein \citep{rustamov2023intrinsic} instead pushes
each inner measure through Laplacian eigenfunctions to one-dimensional laws;
VSW takes scalar expectations and then compares their outer distributions.

Define $\MMD_\kappa(\mu,\nu)=\norm{\int\kappa(\cdot,x)\dd(\mu-\nu)(x)}_{\Hk}$, which satisfies
\begin{equation}
\label{eq:mmd-var}
\MMD_\kappa^2(\mu,\nu)=\iint\kappa\,\dd(\mu-\nu)\otimes\dd(\mu-\nu)=\E_{f\sim\sigma}\big[\ip{f}{\mu-\nu}^2\big].
\end{equation}

Write $c_p:=\E|Z|^p=2^{p/2}\Gamma\big(\tfrac{p+1}{2}\big)/\sqrt{\pi}$ for $Z\sim\mathcal N(0,1)$ (so $c_2=1$, $c_1=\sqrt{2/\pi}$).

\begin{proposition}[Exact restriction to Diracs: $\VSW$ extends MMD]
\label{prop:dirac}
For all $\mu,\nu\in\PO$ and $p\in[1,\infty)$,
\[
\VSW_p\big(\delta_\mu,\delta_\nu\big)\;=\;c_p^{1/p}\,\MMD_\kappa(\mu,\nu).
\]
In particular the embedding $\mu\mapsto\delta_\mu$ is, up to the constant $c_p^{1/p}$, an isometry from $(\PO,\MMD_\kappa)$ into $(\PPO,\VSW_p)$.
\end{proposition}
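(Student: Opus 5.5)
The plan is to compute the pushforwards explicitly. For a Dirac meta-measure $\delta_\mu$ and a fixed $f\in\Cont$, Definition~\ref{def:vsw} gives $(P_f)_\pf\delta_\mu(B)=\delta_\mu(\{\mu':P_f(\mu')\in B\})=\1_B(P_f(\mu))$. Hence $(P_f)_\pf\delta_\mu=\delta_{P_f(\mu)}$, a Dirac on $\R$. Likewise $(P_f)_\pf\delta_\nu=\delta_{P_f(\nu)}$. Between two Diracs on $\R$, the only coupling is the product, so the quantile formula \eqref{eq:1dot} gives
\[
\W_p^p\big(\delta_{P_f(\mu)},\delta_{P_f(\nu)}\big)=|P_f(\mu)-P_f(\nu)|^p=|\ip{f}{\mu-\nu}|^p .
\]
For the last equality we use Lemma~\ref{lem:algebraic}, or simply linearity of the integral in the measure, extended to the signed measure $\mu-\nu$. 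Therefore $\VSW_p^p(\delta_\mu,\delta_\nu)=\E_{f\sim\sigma}|\ip{f}{\mu-\nu}|^p$.

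The key step is to identify the law of the scalar random variable $X:=\ip{f}{\mu-\nu}$ under $f\sim\mathrm{GP}(0,\kappa)$. I claim $X\sim\mathcal N(0,s^2)$ with $s^2=\MMD_\kappa^2(\mu,\nu)$. Once that holds, $\E|X|^p=s^p\,\E|Z|^p=c_p\,\MMD_\kappa^p(\mu,\nu)$, and taking $p$-th roots yields the statement.

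Gaussianity of $X$ follows from an approximation argument.
\begin{itemize}
\item Choose finitely supported measures $\mu_k\to\mu$ and $\nu_k\to\nu$ weakly, for instance by quantizing on finite partitions of the compact $\Om$.
\item Then $X_k:=\ip{f}{\mu_k-\nu_k}$ is a finite linear combination of the values $f(x_i)$, and is therefore centered Gaussian with variance $\iint\kappa\,\dd(\mu_k-\nu_k)^{\otimes2}$.
\item By Lemma~\ref{lem:regularity}(i), or directly since $f$ is continuous $\sigma$-a.s., $X_k\to X$ almost surely.
\item Almost-sure limits of centered Gaussians are centered Gaussian, with variance equal to the limit of the variances. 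That limit is $\iint\kappa\,\dd(\mu-\nu)^{\otimes 2}$, because $\kappa$ is continuous on the compact $\Om\times\Om$ and $(\mu_k-\nu_k)^{\otimes2}\to(\mu-\nu)^{\otimes2}$ weakly.
\item By \eqref{eq:mmd-var} this variance is $\MMD_\kappa^2(\mu,\nu)$.
\end{itemize}
Alternatively, one can justify Fubini, using $\Lambda_2<\infty$ from Lemma~\ref{lem:gaussian-facts}(i), to get $\E X^2=\iint\E[f(x)f(y)]\,\dd(\mu-\nu)^{\otimes2}$ directly. Gaussianity would still need the limiting argument above.

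The main obstacle is precisely this Gaussianity and variance identification, since everything else is bookkeeping. The degenerate case $\MMD_\kappa(\mu,\nu)=0$ is covered by the same argument: $X=0$ a.s. and both sides vanish. The isometry claim then follows immediately, because $\mu\mapsto\delta_\mu$ carries $\MMD_\kappa$ to $\VSW_p$ up to the factor $c_p^{1/p}$.
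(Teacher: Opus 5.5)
Your proposal is correct and follows the same route as the paper. You compute the Dirac pushforwards so that each slice reduces to $|\ip{f}{\mu-\nu}|^p$, identify $\ip{f}{\mu-\nu}$ as a centered Gaussian with variance $\MMD_\kappa^2(\mu,\nu)$, and take $p$-th absolute moments. The only difference is that the paper simply asserts the Gaussianity and variance by appeal to \eqref{eq:mmd-var}, whereas your finite-support approximation argument supplies the justification the paper leaves implicit.
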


\begin{proof}[Proof of Proposition~\ref{prop:dirac}]
Under $\delta_\mu$ the projected law is the point mass at $\ip{f}{\mu}$, so the slice value is $|\ip{f}{\mu-\nu}|$. The scalar $\ip{f}{\mu-\nu}$ is centered Gaussian with variance $\MMD_\kappa^2(\mu,\nu)$ by \eqref{eq:mmd-var}; taking $p$-th absolute moments and $p$-th roots gives the claim.
\end{proof}

\begin{proposition}[Upper bound: Wasserstein over MMD, tight on Diracs]
\label{prop:upper}
For all $\bbP,\bbQ\in\PPO$ and $p\in[1,\infty)$,
\[
\VSW_p(\bbP,\bbQ)\;\le\;c_p^{1/p}\;\WoW_{p,\MMD_\kappa}(\bbP,\bbQ),
\]
with equality when $\bbP,\bbQ$ are Dirac meta-measures. If in addition Assumption~\ref{ass:smooth} holds and $L_p:=\E_\sigma[\Lip(f)^p]^{1/p}<\infty$, then also $\VSW_p(\bbP,\bbQ)\le L_p\,\WoW_{p,\W_1}(\bbP,\bbQ)\le L_p\,\WoW_{p,\W_p}(\bbP,\bbQ)$.
\end{proposition}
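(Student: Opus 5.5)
The plan is a coupling argument carried out slice by slice. Fix an arbitrary coupling $\Gamma\in\Pi(\bbP,\bbQ)$. For each $f\in\Cont$, the map $(\mu,\nu)\mapsto(P_f(\mu),P_f(\nu))$ is continuous by Lemma~\ref{lem:regularity}(i), so the pushforward $(P_f\times P_f)_\pf\Gamma$ is a well-defined coupling of $(P_f)_\pf\bbP$ and $(P_f)_\pf\bbQ$. The one-dimensional Wasserstein distance is an infimum over couplings, so
\[
\W_p^p\big((P_f)_\pf\bbP,(P_f)_\pf\bbQ\big)\le\int|\ip{f}{\mu-\nu}|^p\dd\Gamma(\mu,\nu).
\]
Integrating over $f\sim\sigma$ and applying Tonelli (the integrand is jointly measurable and nonnegative, again by joint continuity) gives
\[
\VSW_p^p(\bbP,\bbQ)\le\int\E_\sigma|\ip{f}{\mu-\nu}|^p\dd\Gamma(\mu,\nu).
\]

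For the first bound, observe that $\ip{f}{\mu-\nu}$ is a centered Gaussian under $\sigma$ with variance $\MMD_\kappa^2(\mu,\nu)$ by \eqref{eq:mmd-var}. Its $p$-th absolute moment is therefore $c_p\,\MMD_\kappa^p(\mu,\nu)$, exactly as in the proof of Proposition~\ref{prop:dirac}. The right-hand side thus becomes $c_p\int\MMD_\kappa^p\dd\Gamma$. Taking the infimum over $\Gamma$ and then $p$-th roots yields $\VSW_p\le c_p^{1/p}\WoW_{p,\MMD_\kappa}$.

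Equality on Diracs is immediate. When $\bbP=\delta_\mu$ and $\bbQ=\delta_\nu$, the only coupling is $\delta_{(\mu,\nu)}$, so $\WoW_{p,\MMD_\kappa}(\delta_\mu,\delta_\nu)=\MMD_\kappa(\mu,\nu)$. Proposition~\ref{prop:dirac} then gives the matching lower bound.

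For the Lipschitz bound, Kantorovich--Rubinstein duality gives, for $\sigma$-a.e.\ $f$ (a.s.\ $C^1$ under Assumption~\ref{ass:smooth}, hence Lipschitz on compact $\Om$ with Euclidean metric),
\[
|\ip{f}{\mu-\nu}|\le\Lip(f)\,\W_1(\mu,\nu).
\]
Hence $\E_\sigma|\ip{f}{\mu-\nu}|^p\le L_p^p\,\W_1^p(\mu,\nu)$. Inserting this into the coupling inequality and taking the infimum over $\Gamma$ gives $\VSW_p\le L_p\WoW_{p,\W_1}$. Finally, $\W_1\le\W_p$ pointwise by Jensen (or Hölder) applied to any coupling, and this inequality is monotone under $\int\cdot^p\dd\Gamma$ and the infimum, so $\WoW_{p,\W_1}\le\WoW_{p,\W_p}$.

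The main obstacle is measure-theoretic rather than conceptual. We need joint measurability of $(f,\mu,\nu)\mapsto|\ip{f}{\mu-\nu}|^p$ on $\Cont\times\PO\times\PO$ to justify Tonelli, which follows from Lemma~\ref{lem:regularity}(i) and separability. We also need $\MMD_\kappa$ and $\W_1$ to be Borel on $\PO\times\PO$ so that the $\WoW$ quantities make sense; both are continuous for the weak topology on compact $\Om$.
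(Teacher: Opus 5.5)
Your proposal is correct and follows the paper's proof. Like the paper, you push an arbitrary coupling $\Gamma$ forward by $P_f\times P_f$, integrate over $f$ with Fubini/Tonelli, use the Gaussian moment $c_p\,\MMD_\kappa^p$, and take the infimum over $\Gamma$. The paper's proof stops at the first inequality, and your arguments for the Dirac equality (uniqueness of the coupling $\delta_{(\mu,\nu)}$ together with Proposition~\ref{prop:dirac}) and for the Lipschitz bound (the easy direction of Kantorovich--Rubinstein plus $\W_1\le\W_p$) correctly fill in what it leaves implicit.
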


\begin{proof}[Proof of Proposition~\ref{prop:upper}]
Let $\Gamma\in\Pi(\bbP,\bbQ)$ be arbitrary. For each $f$, $(P_f\times P_f)_\pf\Gamma$ couples the projected laws, so
$\W_p^p\big((P_f)_\pf\bbP,(P_f)_\pf\bbQ\big)\le\int|\ip{f}{\mu-\nu}|^p\dd\Gamma(\mu,\nu)$.
Integrate in $f$ and use Fubini:
\[
\VSW_p^p(\bbP,\bbQ)\le\int\Big(\E_\sigma|\ip{f}{\mu-\nu}|^p\Big)\dd\Gamma(\mu,\nu)
=c_p\int \MMD_\kappa(\mu,\nu)^p\dd\Gamma(\mu,\nu).
\]
Taking the infimum over $\Gamma$ proves the bound.
\end{proof}

\begin{proposition}[Lower bound: intensities in MMD]
\label{prop:lower}
For all $\bbP,\bbQ\in\PPO$ and all $p\in[1,\infty)$,
\[
\VSW_p(\bbP,\bbQ)\;\ge\;c_p^{1/p}\,\MMD_\kappa\big(m_\bbP,\,m_\bbQ\big),
\]
where $m_\bbP,m_\bbQ$ are the intensity measures, in particular, $m_\bbP(B)=\mathbb{E}_{\mu\sim\bbP}[\mu(B)],\forall B\in\mathcal{B}(\Om)$.
\end{proposition}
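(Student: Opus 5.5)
The plan is to reduce, for each fixed slice $f$, the one-dimensional Wasserstein distance to a difference of means, and then to average over $f\sim\sigma$ using the Gaussian structure exactly as in the proof of Proposition~\ref{prop:dirac}.

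\textbf{Step 1: a first-moment lower bound on each slice.} Fix $f\in\Cont$, and let $\alpha_f:=(P_f)_\pf\bbP$ and $\beta_f:=(P_f)_\pf\bbQ$. Both are supported on $[-\norm{f}_\infty,\norm{f}_\infty]$. For any coupling $\gamma$ of $\alpha_f$ and $\beta_f$, Jensen's inequality applied to $|\cdot|^p$ gives
\[
\int|s-t|^p\dd\gamma\ge\Big|\int(s-t)\dd\gamma\Big|^p=\Big|\int s\dd\alpha_f-\int t\dd\beta_f\Big|^p .
\]
Taking the infimum over $\gamma$ yields
\[
\W_p^p(\alpha_f,\beta_f)\ge\Big|\int s\dd\alpha_f-\int t\dd\beta_f\Big|^p .
\]

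\textbf{Step 2: identify the means.} By the change of variables, $\int s\dd\alpha_f=\E_{\mu\sim\bbP}\ip{f}{\mu}$. We need this to equal $\ip{f}{m_\bbP}$. The identity holds for indicators $f=1_B$ by the definition of $m_\bbP$. It therefore holds for simple functions, and it extends to bounded continuous $f$ by uniform approximation, with dominated convergence justified since $|\ip{f}{\mu}|\le\norm{f}_\infty$. Measurability of $\mu\mapsto\mu(B)$ is standard; it follows from the generation of $\mathcal{B}(\PO)$ cited in Section~\ref{sec:background}. We also check that $m_\bbP$ is a Borel probability measure, by monotone convergence. Applying the same argument to $\bbQ$, the slice mean difference is $\ip{f}{m_\bbP-m_\bbQ}$.

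\textbf{Step 3: integrate over slices.} Integrating the bound of Step 1 over $f\sim\sigma$ gives
\[
\VSW_p^p(\bbP,\bbQ)\ge\E_\sigma|\ip{f}{m_\bbP-m_\bbQ}|^p .
\]
By \eqref{eq:mmd-var}, the random variable $\ip{f}{m_\bbP-m_\bbQ}$ is a centered Gaussian with variance $\MMD_\kappa^2(m_\bbP,m_\bbQ)$. It is Gaussian because it is a continuous linear functional of the Gaussian measure $\sigma$ on $\Cont$; alternatively, this is the same fact already used in Proposition~\ref{prop:dirac}. Its $p$-th absolute moment is therefore $c_p\,\MMD_\kappa^p(m_\bbP,m_\bbQ)$, and taking $p$-th roots gives the stated bound.

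\textbf{Main obstacle.} The only delicate point is Step 2: the interchange $\E_\bbP\ip{f}{\mu}=\ip{f}{m_\bbP}$ and the measurability facts behind it. Everything else is a direct combination of Jensen's inequality and the Gaussian moment computation already carried out for Proposition~\ref{prop:dirac}.
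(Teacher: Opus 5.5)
Your proposal is correct and follows essentially the same route as the paper's proof. Both arguments apply Jensen to an arbitrary coupling to get $\W_p\big((P_f)_\pf\bbP,(P_f)_\pf\bbQ\big)\ge|\ip{f}{m_\bbP}-\ip{f}{m_\bbQ}|$, identify the slice means by Fubini, and finish with the centered-Gaussian $p$-th moment from \eqref{eq:mmd-var}. Your Step~2 (simple-function approximation and measurability of $\mu\mapsto\mu(B)$) only spells out what the paper compresses into ``by Fubini.''
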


\begin{proof}[Proof of Proposition~\ref{prop:lower}]
For laws $\alpha,\beta$ on $\R$, any coupling $(X,Y)$ satisfies, by Jensen, $\E|X-Y|^p\ge|\E X-\E Y|^p$; hence $\W_p(\alpha,\beta)\ge|\operatorname{mean}(\alpha)-\operatorname{mean}(\beta)|$. The mean of $(P_f)_\pf\bbP$ is $\E_{\mu\sim\bbP}\ip{f}{\mu}=\ip{f}{m_\bbP}$ by Fubini. Therefore
$\VSW_p^p\ge\E_\sigma|\ip{f}{m_\bbP-m_\bbQ}|^p$. The scalar $\ip{f}{m_\bbP-m_\bbQ}$ is centered Gaussian with variance $\MMD_\kappa^2(m_\bbP,m_\bbQ)$ by \eqref{eq:mmd-var}.
\end{proof}

\begin{example}[Strictness: identical intensities, positive $\VSW$]
\label{ex:mixture}
Let $a\ne b\in\Om$ and consider
\[
\bbP=\delta_{\frac12(\delta_a+\delta_b)}\qquad\text{versus}\qquad
\bbQ=\tfrac12\,\delta_{\delta_a}+\tfrac12\,\delta_{\delta_b}.
\]
Both have intensity $m_\bbP=m_\bbQ=\frac12(\delta_a+\delta_b)$, so every comparison based on intensities---in particular the level-2 MMD of mean embeddings---vanishes, and Proposition~\ref{prop:lower} is uninformative. Fix $f\in\Cont$. Under $\bbP$ the projected law is the point mass at $s:=\frac12(f(a)+f(b))$; under $\bbQ$ it is $\frac12\delta_{f(a)}+\frac12\delta_{f(b)}$. The only coupling of a Dirac with another law is the product, and $f(a)-s=s-f(b)=\frac12(f(a)-f(b))$, so
\[
\W_p^p\big((P_f)_\pf\bbP,(P_f)_\pf\bbQ\big)=\tfrac12|f(a)-s|^p+\tfrac12|f(b)-s|^p=2^{-p}\,|f(a)-f(b)|^p .
\]
Since $f(a)-f(b)\sim\mathcal N\big(0,d_\kappa(a,b)^2\big)$, integrating in $f$ gives
\[
\VSW_p^p(\bbP,\bbQ)=2^{-p}\,\E_\sigma|f(a)-f(b)|^p=2^{-p}c_p\,d_\kappa(a,b)^p>0 ,
\]
the strict positivity because $d_\kappa(a,b)>0$ whenever $\kappa$ is strictly positive definite and $a\ne b$. Thus $\VSW$ sees the outer (transport) structure that intensity comparisons cannot; every slice with $f(a)\ne f(b)$ separates the two meta-measures.
\end{example}

\begin{remark}[Interpretation]
Propositions~\ref{prop:dirac}--\ref{prop:lower} identify $\VSW_p$ as \emph{the sliced Wassersteinization of MMD at the meta level}: its inner geometry is exactly $\MMD_\kappa$ (tight on Diracs), while its outer geometry is transport---strictly finer than the mean-embedding (intensity) comparison, as Example~\ref{ex:mixture} shows, yet never exceeding the full outer OT cost over the $\MMD_\kappa$ ground metric.
\end{remark}

\begin{corollary}[Universal characteristic kernels on meta-measures]
\label{cor:kernel}
For every $\gamma>0$ and $\beta\in(0,2]$, the kernel
\[
K_{\gamma,\beta}(\bbP,\bbQ)
=
\exp\big(-\gamma\,\VSW_2(\bbP,\bbQ)^\beta\big)
\]
is positive definite on $\PPO$. Moreover, it is continuous and
\emph{universal} on the compact metric space $(\PPO,\mathrm{weak})$,
hence characteristic. Consequently, the MMD induced by
$K_{\gamma,\beta}$ is a metric on the space of Borel probability
measures over $\PPO$.
\end{corollary}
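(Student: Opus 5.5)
Positive definiteness is immediate from Theorem~\ref{thm:hilbertian}. That theorem shows $\VSW_2^\beta$ is conditionally negative definite for $\beta\in(0,2]$. By the Schoenberg correspondence recalled in Appendix~\ref{app:background_app}, $K_{\gamma,\beta}=\exp(-\gamma\VSW_2^\beta)$ is then positive definite for every $\gamma>0$. Continuity also comes quickly. By Proposition~\ref{thm:topology}, $\VSW_2$ metrizes the weak topology on $\PPO$, so the map $(\bbP,\bbQ)\mapsto\VSW_2(\bbP,\bbQ)$ is continuous on $\PPO\times\PPO$. Composing with the continuous map $t\mapsto e^{-\gamma t^\beta}$ shows $K_{\gamma,\beta}$ is continuous.

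For universality I would use the standard result of \citet{christmann2010universal}. Let $X$ be a compact metric space with a continuous injective map $T:X\to H$ into a separable Hilbert space. Then kernels of the form $\exp(-\gamma\norm{T(x)-T(y)}_H^{2})$ are universal on $X$, and more generally so are Taylor-type kernels $\sum_n a_n\ip{T x}{T y}^n$ with $a_n>0$.

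We take $X=(\PPO,\mathrm{weak})$, which is compact by Proposition~\ref{thm:topology}. We take $T$ to be the quantile embedding of Theorem~\ref{thm:hilbertian}, which is an isometry from $(\PPO,\VSW_2)$, hence continuous and injective. This gives universality directly for $\beta=2$.

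For $\beta<2$ the kernel is not of this Gaussian form, and this is the step I expect to be the main obstacle. The route I would try first is a second application of Schoenberg. Since $\VSW_2^\beta$ is CND, there is a further Hilbert space $H_\beta$ and a map $T_\beta:\PPO\to H_\beta$ with $\VSW_2^\beta(\bbP,\bbQ)=\norm{T_\beta\bbP-T_\beta\bbQ}_{H_\beta}^2$. This $T_\beta$ is injective, and it is continuous because $\VSW_2^{\beta/2}$ induces the same topology. Applying Christmann--Steinwart to $T_\beta$ then gives universality of $\exp(-\gamma\norm{T_\beta\bbP-T_\beta\bbQ}^2)=K_{\gamma,\beta}$.

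Separability of $H_\beta$ needs a check. I would restrict $H_\beta$ to the closed span of $T_\beta(\PPO)$, which is separable as the continuous image of a compact metric space.

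Universality on a compact space implies the kernel is characteristic. The reason is that the RKHS is dense in $C(\PPO)$, and a finite signed measure annihilating a dense subset of $C(\PPO)$ is zero, by Riesz--Markov duality. Hence the induced MMD separates Borel probability measures on $\PPO$ and is a metric.
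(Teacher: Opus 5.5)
Your proposal is correct. For positive definiteness, continuity, the case $\beta=2$, and the step from universal to characteristic to MMD being a metric, it matches the paper.

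The genuine difference is the case $\beta\in(0,2)$.

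\emph{The paper's route.} It keeps the explicit quantile embedding $T$. By Bernstein's theorem it writes $\exp(-\gamma t^{\beta/2})=\int_0^\infty e^{-st}\dd\nu_\beta(s)$ with $\nu_\beta((0,\infty))>0$, so $K_{\gamma,\beta}$ is a mixture of the Gaussian kernels $\exp(-s\,\VSW_2^2)$. Suppose $\iint K_{\gamma,\beta}\dd\eta\otimes\dd\eta=0$ for a finite signed measure $\eta$. Fubini then forces the nonnegative Gaussian quadratic forms to vanish for some $s>0$. The Sriperumbudur--Fukumizu--Lanckriet equivalence between universality and integral strict positive definiteness on compact spaces then gives $\eta=0$, hence universality.

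\emph{Your route.} You apply Schoenberg a second time to obtain a Hilbertian embedding $T_\beta$ of the snowflake metric $\VSW_2^{\beta/2}$. This turns $K_{\gamma,\beta}$ into a literal Gaussian kernel $\exp(-\gamma\norm{T_\beta\bbP-T_\beta\bbQ}^2)$. The same Christmann--Steinwart theorem then covers every $\beta\in(0,2]$ at once. Your checks are exactly the hypotheses that theorem needs: injectivity from metricity of $\VSW_2$, continuity from Proposition~\ref{thm:topology}, and separability by passing to the closed span of the compact image $T_\beta(\PPO)$.

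\emph{Trade-off.} Your argument is shorter and avoids both Bernstein's representation and the integral-strict-positive-definiteness characterization. The price is a non-explicit feature space $H_\beta$. The paper's argument stays with the concrete quantile features. It also illustrates a reusable principle: a mixture of kernels whose mixing measure charges a universal component remains universal.
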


\begin{proof}[Proof of Corollary~\ref{cor:kernel}]
\emph{Positive definiteness.} By Theorem~\ref{thm:hilbertian}, there is an isometric embedding $T:(\PPO,\VSW_2)\to H$ into a Hilbert space $H$ with
$\VSW_2(\bbP,\bbQ)=\|T(\bbP)-T(\bbQ)\|_H$.
Hence $\VSW_2^2$ is conditionally negative definite (CND).
For every $\beta\in(0,2]$, we have $\beta/2\in(0,1]$, so
$r\mapsto r^{\beta/2}$ is a Bernstein function on $[0,\infty)$.
Applying it to the nonnegative CND function $\VSW_2^2$ shows that
$(\VSW_2^2)^{\beta/2}=\VSW_2^{\beta}$ is CND.
Schoenberg's theorem then yields that
$K_{\gamma,\beta}(\bbP,\bbQ)=\exp\big(-\gamma\,\VSW_2(\bbP,\bbQ)^{\beta}\big)$
is positive definite for all $\gamma>0$ and $\beta\in(0,2]$.

\emph{Continuity.} By Proposition~\ref{thm:topology}, $\VSW_2$ induces the weak topology on $\PPO$, hence $(\bbP,\bbQ)\mapsto\VSW_2(\bbP,\bbQ)$ is continuous for the weak topology. Composing with $t\mapsto\exp(-\gamma t^{\beta})$ gives continuity of $K_{\gamma,\beta}$.

\emph{Universality and characteristicness.} The space $(\PPO,\text{weak})$ is compact. For $\beta=2$, $K_{\gamma,2}(\bbP,\bbQ)=\exp(-\gamma\|T(\bbP)-T(\bbQ)\|_H^2)$ is the restriction of a Gaussian kernel on the Hilbert space $H$ to the compact set $T(\PPO)$. Such a Gaussian kernel is universal on compact subsets of a separable Hilbert space (e.g.\ \citealp{christmann2010universal}); therefore $K_{\gamma,2}$ is universal on $\PPO$.

For $\beta\in(0,2)$, the function $\psi(t):=\exp(-\gamma t^{\beta/2})$ is completely monotone on $[0,\infty)$, so by Bernstein's theorem there exists a probability measure $\nu_\beta$ on $[0,\infty)$ such that $\psi(t)=\int_0^\infty e^{-st}\dd\nu_\beta(s)$ and $\nu_\beta((0,\infty))>0$. Substituting $t=\VSW_2^2(\bbP,\bbQ)$ gives
\[
K_{\gamma,\beta}(\bbP,\bbQ)=\int_0^\infty \exp\big(-s\,\VSW_2^2(\bbP,\bbQ)\big)\dd\nu_\beta(s).
\]
If a finite signed Borel measure $\eta$ on $\PPO$ satisfies
$\iint K_{\gamma,\beta}\dd\eta\otimes\dd\eta=0$, Fubini applies because
the kernels are bounded and $\eta$ has finite total variation.
The nonnegative Gaussian-kernel quadratic forms therefore vanish for
$\nu_\beta$-a.e.\ $s$, hence for some $s>0$.
Universality of $\exp(-s\,\VSW_2^2)$ forces $\eta=0$ by the equivalence
between universality and integral strict positive definiteness on compact
spaces \citep[Prop.~4]{sriperumbudur2011universality}.
The same equivalence proves universality of $K_{\gamma,\beta}$;
universal continuous kernels on compact metric spaces are characteristic.
\end{proof}

\begin{remark}[Uses]
Corollary~\ref{cor:kernel} yields kernel two-sample tests between
collections of datasets, distribution regression whose inputs are
meta-measures (one level above \citealp{szabo2016learning}), and
Gaussian-process priors over functionals of meta-measures.
\end{remark}

\section{Proofs of the Statements in Section~\ref{sec:vsw}}
\label{sec:proofs:vsw}

\begin{proof}[Proof of Lemma~\ref{lem:regularity}]
(i) Let $(f_n,\mu_n)\to(f,\mu)$ in $\Cont\times\PO$. In particular:
\begin{itemize}
\item $f_n\to f$ in $(\Cont,\norm{\cdot}_\infty)$, i.e.\ $\norm{f_n-f}_\infty=\sup_{x\in\Om}|f_n(x)-f(x)|\to0$, which is uniform convergence on $\Om$;
\item $\mu_n\to\mu$ in $\PO$, i.e.\ $\mu_n\rightharpoonup\mu$ weakly: $\int_\Om g\dd\mu_n\to\int_\Om g\dd\mu$ for every $g\in\Cont$.
\end{itemize}
Then
\[
\begin{aligned}
\left|\ip{f_n}{\mu_n}-\ip{f}{\mu}\right|
&\le
\left|\ip{f_n-f}{\mu_n}\right|
+
\left|\ip{f}{\mu_n}-\ip{f}{\mu}\right| \le
\norm{f_n-f}_\infty
+
\left|\ip{f}{\mu_n}-\ip{f}{\mu}\right|.
\end{aligned}
\]
The first term tends to zero by uniform convergence, while the second tends to zero by weak convergence, since the fixed function $f\in\Cont$ is bounded and continuous. Hence $P$ is jointly continuous.

\noindent (ii) Choose fixed $f,g\in\Cont$, the deterministic map $\mu\mapsto(P_f(\mu),P_g(\mu))$ pushes $\bbP$ to a coupling between $(P_f)_\pf\bbP$ and $(P_g)_\pf\bbP$. We have:
\begin{align}
\E_{\mu\sim\bbP}|P_f(\mu)-P_g(\mu)|^p&=\mathbb{E}_{\mu\sim\mathbb{P}}|\int (f-g)d\mu|^p\nonumber\\
&\leq \mathbb{E}_{\mu\sim\mathbb{P}}\int|f-g|^p d\mu \nonumber\\
&\leq \sup_{x\in\Om}|f(x)-g(x)|^p\nonumber\\
&=\norm{f-g}_\infty^p;\nonumber
\end{align}
hence $\W_p\big((P_f)_\pf\bbP,(P_g)_\pf\bbP\big)\le\norm{f-g}_\infty$, and similarly for $\bbQ$. The triangle inequality for $\W_p$ on $\mathcal{P}_p(\R)$ then gives
$\big|\W_p((P_f)_\pf\bbP,(P_f)_\pf\bbQ)-\W_p((P_g)_\pf\bbP,(P_g)_\pf\bbQ)\big|\le2\norm{f-g}_\infty$. Thus $f\mapsto \W_p\big((P_f)_\pf\bbP,(P_f)_\pf\bbQ\big)$ is $2$-Lipschitz.

\emph{Finiteness.} Fix $f\in\Cont$. As noted after Definition~\ref{def:vsw}, both $(P_f)_\pf\bbP$ and $(P_f)_\pf\bbQ$ are supported in the interval $I_f:=[-\norm{f}_\infty,\norm{f}_\infty]$, of length $|I_f|=2\norm{f}_\infty$. Consequently every coupling $\pi\in\Pi\big((P_f)_\pf\bbP,(P_f)_\pf\bbQ\big)$ is supported in $I_f\times I_f$, on which $|x-y|\le2\norm{f}_\infty$, so
$$
\W_p^p\big((P_f)_\pf\bbP,(P_f)_\pf\bbQ\big)=\inf_{\pi}\int_{\R\times\R}|x-y|^p\dd\pi(x,y)\;\le\;\big(2\norm{f}_\infty\big)^p .
$$
Integrating this bound against $\sigma$ and using Lemma~\ref{lem:gaussian-facts}(i),
$$
\VSW_p^p(\bbP,\bbQ)=\int_{\Cont}\W_p^p\big((P_f)_\pf\bbP,(P_f)_\pf\bbQ\big)\dd\sigma(f)\;\le\;2^p\int_{\Cont}\norm{f}_\infty^p\dd\sigma(f)\;=\;2^p\Lambda_p\;<\;\infty .
$$
The integral in \eqref{eq:vsw} is therefore well defined.
\end{proof}

\begin{proof}[Proof of Theorem~\ref{thm:metric}]
We first show that  $\VSW_p$ is a pseudometric on $\PPO$: it is finite, symmetric, vanishes on the diagonal, and satisfies the triangle inequality.

\emph{Finiteness} was established in Lemma~\ref{lem:regularity}(ii): $\VSW_p^p(\bbP,\bbQ)\le2^p\Lambda_p<\infty$ for all $\bbP,\bbQ\in\PPO$.

\emph{Symmetry} is inherited slicewise: $\W_p$ is symmetric on $\mathcal{P}(\R)$.

\emph{Vanishing on the diagonal}: if $\bbP=\bbQ$ then $(P_f)_\pf\bbP=(P_f)_\pf\bbQ$ for every $f\in\Cont$.

\emph{Triangle inequality.} Let $\bbP,\bbQ,\mathbb{R}'\in\PPO$ and write $h_{\bbP\bbQ}(f):=\W_p\big((P_f)_\pf\bbP,(P_f)_\pf\bbQ\big)$, so that $\VSW_p(\bbP,\bbQ)=\norm{h_{\bbP\bbQ}}_{L^p(\sigma)}$ by \eqref{eq:vsw}. For each fixed $f$, the triangle inequality for $\W_p$ on $\mathcal{P}(\R)$ gives the pointwise bound
$$
h_{\bbP\bbQ}(f)\;\le\;h_{\bbP\mathbb{R}'}(f)+h_{\mathbb{R}'\bbQ}(f).
$$
Taking $L^p(\sigma)$ norms and applying Minkowski's inequality gives the result.

 The only property left to verify is identity of indiscernibles. Assume $\VSW_p(\bbP,\bbQ)=0$.

Define
\[
A:=\big\{f\in\Cont:\ (P_f)_\pf\bbP=(P_f)_\pf\bbQ\big\}
=\big\{f\in\Cont:\ \W_p\big((P_f)_\pf\bbP,(P_f)_\pf\bbQ\big)=0\big\}.
\]
The two descriptions agree because both projected laws are compactly supported, hence lie in $\mathcal{P}_p(\R)$, the subspace of $\mathcal{P}(\R)$ of measures with finite $p$-moment, on which $\W_p$ is a metric.

\emph{Step 0 ($\sigma(A)=1$).} By assumption and \eqref{eq:vsw},
\[
0=\VSW_p^p(\bbP,\bbQ)=\int_{\Cont}\W_p^p\big((P_f)_\pf\bbP,(P_f)_\pf\bbQ\big)\dd\sigma(f).
\]
The integrand is nonnegative and Borel (Lemma~\ref{lem:regularity}(ii)), so it vanishes $\sigma$-a.e., i.e.\ $\sigma(A)=1$.

\emph{Step 1 ($A=\Cont$).} The map $h(f):=\W_p\big((P_f)_\pf\bbP,(P_f)_\pf\bbQ\big)$ is continuous by Lemma~\ref{lem:regularity}(ii), hence $A=h^{-1}(\{0\})$ is closed. Since $\sigma(A)=1$, closedness implies $\supp\sigma\subseteq A$. By Lemma~\ref{lem:gaussian-facts}(ii) and universality (Assumption~\ref{ass:kernel}), $\supp\sigma=\Cont$, hence $A=\Cont$. Therefore
\begin{equation}
\label{eq:all-f-app}
(P_f)_\pf\bbP=(P_f)_\pf\bbQ\qquad\text{for every }f\in\Cont.
\end{equation}

\emph{Step 2 (finite-dimensional distributions agree).} Fix $k\in\N$ and $f_1,\dots,f_k\in\Cont$. Consider $X:\PO\to\R^k$, $X(\mu)=(P_{f_1}(\mu),\dots,P_{f_k}(\mu))$. For any $t\in\R^k$, Lemma~\ref{lem:algebraic} gives
\[
\ip{t}{X(\mu)}=\sum_{j=1}^k t_j P_{f_j}(\mu)=P_{g_t}(\mu),\qquad g_t:=\sum_{j=1}^k t_j f_j\in\Cont.
\]
Applying \eqref{eq:all-f-app} to $g_t$ shows that $\ip{t}{X}$ has the same law under $\mu\sim\bbP$ and under $\mu\sim\bbQ$. By the Cram\'er--Wold theorem, the laws of $X$ under $\bbP$ and $\bbQ$ coincide, i.e.
\begin{equation}
\label{eq:fdd-app}
\big(P_{f_1},\dots,P_{f_k}\big)_\pf\bbP=\big(P_{f_1},\dots,P_{f_k}\big)_\pf\bbQ\qquad\text{in }\mathcal{P}(\R^k).
\end{equation}

\emph{Step 3 ($\bbP=\bbQ$).} Let
\[
\mathcal{C}=\Big\{\{\mu\in\PO:(P_{f_1}(\mu),\dots,P_{f_k}(\mu))\in B\}:\ k\in\N,\ f_1,\dots,f_k\in\Cont,\ B\in\mathcal{B}(\R^k)\Big\}.
\]
Then $\mathcal{C}$ is a $\pi$-system (closed under finite intersections). Moreover $\sigma(\mathcal{C})=\mathcal{B}(\PO)$ because the weak Borel $\sigma$-algebra is generated by the maps $P_f$ (Section~\ref{sec:background}).

Define $\mathcal{L}:=\{C\in\mathcal{B}(\PO):\bbP(C)=\bbQ(C)\}$. This is a $\lambda$-system. By \eqref{eq:fdd-app}, we have $\mathcal{C}\subseteq\mathcal{L}$. Dynkin's $\pi$--$\lambda$ theorem yields $\sigma(\mathcal{C})\subseteq\mathcal{L}$, hence $\mathcal{B}(\PO)\subseteq\mathcal{L}$ and therefore $\bbP=\bbQ$.
\end{proof}

\begin{remark}%
\label{rem:universality}
Only Step~1 uses universality, and it does so through the support theorem: the closed agreement set $A$ must contain $\supp\sigma$. If $\kappa$ is not universal, Steps~0--3 still yield $(P_f)_\pf\bbP=(P_f)_\pf\bbQ$ for every $f\in\overline{\Hk}$, hence $\bbP=\bbQ$ on $\mathcal{F}_\kappa:=\sigma\big(P_f:f\in\overline{\Hk}\big)\subseteq\mathcal{B}(\PO)$. Thus $\VSW_p$ remains a pseudometric on $\PPO$ whose null relation is exactly agreement on $\mathcal{F}_\kappa$, and it is a metric on the corresponding quotient; when $\mathcal{F}_\kappa\subsetneq\mathcal{B}(\PO)$, distinct meta-measures can be at $\VSW_p$-distance zero. Conversely, any slicing measure with full support in $\Cont$ (not necessarily Gaussian) and finite $\Lambda_p$ yields a metric by the same proof; the Gaussian choice is what later gives exact MMD compatibility (Propositions~\ref{prop:dirac}--\ref{prop:lower}) and Borell--TIS moment control.
\end{remark}

\begin{proof}[Proof of Proposition~\ref{thm:topology}]
\emph{Weak convergence implies $\VSW_p\to0$.} Let $\bbP_m\rightharpoonup\bbP$ in the weak topology on $\PPO$, and fix $f\in\Cont$. Since $P_f:\PO\to\R$ is continuous (Lemma~\ref{lem:regularity}(i)), for every $g\in C_b(\R)$ the composition $g\circ P_f$ belongs to $C(\PO)$; hence
\[
\int_\R g\dd\big((P_f)_\pf\bbP_m\big)=\int_{\PO}g\circ P_f\dd\bbP_m\;\longrightarrow\;\int_{\PO}g\circ P_f\dd\bbP=\int_\R g\dd\big((P_f)_\pf\bbP\big),
\]
i.e.\ $(P_f)_\pf\bbP_m\rightharpoonup(P_f)_\pf\bbP$ in $\mathcal{P}(\R)$. Moreover $|P_f|\le\norm{f}_\infty$ on $\PO$, so all these laws are supported in the fixed compact interval $I_f=[-\norm{f}_\infty,\norm{f}_\infty]$. On probability measures supported in a common compact interval, $\W_p$ metrizes weak convergence, so
\[
\W_p^p\big((P_f)_\pf\bbP_m,(P_f)_\pf\bbP\big)\;\longrightarrow\;0\qquad\text{for each fixed }f\in\Cont.
\]
The integrand is dominated by $(2\norm{f}_\infty)^p$, which lies in $L^1(\sigma)$ by Lemma~\ref{lem:gaussian-facts}(i). Dominated convergence therefore yields $\VSW_p(\bbP_m,\bbP)\to0$.

\emph{$\VSW_p\to0$ implies weak convergence.} Consider the identity map
\[
\iota:(\PPO,\text{weak})\to(\PPO,\VSW_p),
\]
which is a bijection. The previous paragraph shows it is continuous. Since $(\PPO,\text{weak})$ is compact and $(\PPO,\VSW_p)$ is Hausdorff (being metric), a continuous bijection from a compact space onto a Hausdorff space is a homeomorphism. Hence $\VSW_p$ induces the weak topology.
\end{proof}

\begin{proof}[Proof of Theorem~\ref{thm:hilbertian}]
\emph{Measurability.} The evaluation map $(f,\mu)\mapsto P_f(\mu)$ is jointly continuous (Lemma~\ref{lem:regularity}(i)). Hence, for fixed $\bbP\in\PPO$, the function
\[
G(f,s):=\bbP\{\mu\in\PO:P_f(\mu)\le s\}
\]
is jointly Borel in $(f,s)$: the set $\{(f,s,\mu):P_f(\mu)-s\le0\}$ is closed in $\Cont\times\R\times\PO$, so its indicator is jointly Borel, and Tonelli (integrating out $\mu$ against $\bbP$) yields joint measurability of $G$. Define the quantile
\[
q(f,u):=\inf\{s\in\bbQ_{\mathrm{rat}}:G(f,s)\ge u\}.
\]
Since the infimum runs over a countable set and each subgraph $\{(f,u):u\le G(f,s)\}$ is Borel, $q$ is jointly Borel in $(f,u)$.

\emph{Integrability.} Since $(P_f)_\pf\bbP$ is supported in $[-\norm{f}_\infty,\norm{f}_\infty]$, we have $|q(f,u)|\le\norm{f}_\infty$ for all $(f,u)$. By Lemma~\ref{lem:gaussian-facts}(i), $\E_\sigma\norm{f}_\infty^2<\infty$, so $T(\bbP)\in L^2(\sigma\otimes\lambda)$.

\emph{Isometry.} For $p=2$, the one-dimensional OT formula \eqref{eq:1dot} gives, for each $f$,
\[
\W_2^2\big((P_f)_\pf\bbP,(P_f)_\pf\bbQ\big)=\int_0^1\big|q_{(P_f)_\pf\bbP}(u)-q_{(P_f)_\pf\bbQ}(u)\big|^2\dd u.
\]
Integrating in $f$ and applying Tonelli yields
\[
\VSW_2^2(\bbP,\bbQ)=\int_{\Cont}\!\int_0^1\big|T(\bbP)(f,u)-T(\bbQ)(f,u)\big|^2\dd u\dd\sigma(f)=\norm{T(\bbP)-T(\bbQ)}_{L^2(\sigma\otimes\lambda)}^2.
\]
This proves the claimed isometric embedding.
\end{proof}

\begin{proof}[Proof of Theorem \ref{thm:sample}]
By the triangle inequality (Theorem \ref{thm:metric}) and $(a+b)^p\le2^{p-1}(a^p+b^p)$,
\[
\E\,\VSW_p^p(\bbP,\widetilde\bbP)\le2^{p-1}\Big(\E\,\VSW_p^p(\bbP,\widehat\bbP_N)+\E\,\VSW_p^p(\widehat\bbP_N,\widetilde\bbP)\Big).
\]
\emph{Outer term.} Fix $f$ and let $\nu_f:=(P_f)_\pf\bbP\in\mathcal P(I_f)$, $I_f=[-\norm{f}_\infty,\norm{f}_\infty]$. Then $(P_f)_\pf\widehat\bbP_N$ is exactly the empirical measure of the $N$ i.i.d.\ scalars $P_f(\mu_i)\sim\nu_f$. Using $\W_p^p\le|I_f|^{p-1}\W_1$ on $I_f$ and \eqref{eq:1drate},
\[
\E\,\W_p^p\big((P_f)_\pf\widehat\bbP_N,\nu_f\big)\le(2\norm{f}_\infty)^{p-1}\cdot\frac{2\norm{f}_\infty}{2\sqrt N}=\frac{(2\norm{f}_\infty)^p}{2\sqrt N}.
\]
Integrating in $f$ (Tonelli; joint measurability as before) gives $$\E\,\VSW_p^p(\bbP,\widehat\bbP_N)\le \tfrac12 \cdot 2^p\Lambda_p N^{-1/2}.$$

\emph{Inner term.} Pair $\mu_i$ with $\widehat\mu_i$: the coupling of $(P_f)_\pf\widehat\bbP_N$ and $(P_f)_\pf\widetilde\bbP$ that matches index $i$ to index $i$ yields
\[
\W_p^p\big((P_f)_\pf\widehat\bbP_N,(P_f)_\pf\widetilde\bbP\big)\le\frac1N\sum_{i=1}^N\big|P_f(\mu_i)-P_f(\widehat\mu_i)\big|^p .
\]
Conditionally on $\mu_i$, $P_f(\widehat\mu_i)-P_f(\mu_i)=\frac1n\sum_j\big(f(x_{ij})-\E f(x_{ij})\big)$ is a centered average of i.i.d.\ variables bounded by $2\norm{f}_\infty$. For $p\in[1,2]$, Jensen and the variance bound give $\E\big|\cdot\big|^p\le\big(\tfrac{(2\norm{f}_\infty)^2}{4n}\big)^{p/2}\le(2\norm{f}_\infty)^p n^{-p/2}$; for $p>2$, the Marcinkiewicz--Zygmund inequality gives $\E|\cdot|^p\le C_p(2\norm{f}_\infty)^pn^{-p/2}$. Taking expectations, then the $\sigma$-integral (Tonelli), yields $$\E\,\VSW_p^p(\widehat\bbP_N,\widetilde\bbP)\le C_p\,2^p\Lambda_p\,n^{-p/2}.$$ 
Combining the outer and inner bounds proves the theorem. The distance rates
in Remark~\ref{rem:sample-rates} follow by Jensen
($\E\VSW_p\le(\E\VSW_p^p)^{1/p}$) and subadditivity of
$x\mapsto x^{1/p}$.
\end{proof}

\begin{remark}%
\label{rem:outer-rate}
For $p=2$, Theorem~\ref{thm:sample} gives
$\E\,\VSW_2(\bbP,\widetilde\bbP_{N,n})=O(N^{-1/4}+n^{-1/2})$,
whereas the hierarchical IPM of \citet{catalano2024hierarchical}
attains the parametric outer rate. The outer exponent is sharp in
the worst case, as a two-atom meta-measure shows. For distinct
$\mu_a,\mu_b\in\PO$, set
$\bbP=\tfrac12\delta_{\mu_a}+\tfrac12\delta_{\mu_b}$,
and let $\widehat w_N$ be the proportion of $\mu_a$ among the
$N$ outer samples. Matching the common mass in each projected law gives
\[
\VSW_2^2(\bbP,\widehat\bbP_N)
=\left|\widehat w_N-\tfrac12\right|
\MMD_\kappa^2(\mu_a,\mu_b).
\]
Since $N\widehat w_N\sim\mathrm{Binomial}(N,1/2)$, its centered
absolute moments satisfy
$\E|\widehat w_N-\tfrac12|\asymp N^{-1/2}$ and
$\E|\widehat w_N-\tfrac12|^{1/2}\asymp N^{-1/4}$.
Thus, for this fixed $\bbP$, the expected squared distance has
order $N^{-1/2}$ and the expected distance has order $N^{-1/4}$.
These are worst-case rates; faster outer convergence is possible
under additional regularity of the projected laws.
Specifically, if for $\sigma$-a.e.\ $f$ the law $(P_f)_\pf\bbP$
admits a density $\varrho_f$ with
\[
J_2(f):=\int_\R\frac{F_{(P_f)_\pf\bbP}(z)\big(1-F_{(P_f)_\pf\bbP}(z)\big)}{\varrho_f(z)}\dd z,
\qquad \E_\sigma\big[J_2(f)\big]<\infty,
\]
then $\E\,\W_2^2\big((P_f)_\pf\bbP,(P_f)_\pf\widehat\bbP_N\big)
\le 2J_2(f)/(N+1)$ \citep[Thm.~5.1]{bobkov2019one}.
Integrating in $f$ yields
$\E\,\VSW_2^2(\bbP,\widehat\bbP_N)
\le 2\E_\sigma[J_2]/(N+1)$, hence
$\E\,\VSW_2(\bbP,\widehat\bbP_N)=O(N^{-1/2})$.
Combining this with the unchanged inner-sampling bound gives
$\E\,\VSW_2(\bbP,\widetilde\bbP_{N,n})
=O(N^{-1/2}+n^{-1/2})$ under this additional regularity condition.
In the definition of $J_2$, we use the conventions $0/0=0$
and $c/0=+\infty$ for $c>0$.
\end{remark}

\begin{proof}[Proof of Proposition \ref{prop:mc}]
The summands are i.i.d.\ with mean $\VSW_p^p$ and second moment at most $\E_\sigma(2\norm f_\infty)^{2p}=2^{2p}\Lambda_{2p}$; apply $\E|X-\E X|\le\sqrt{\operatorname{Var}X}$ to the empirical mean. (Cf.\ the analogous bound for sliced divergences in \citealp{nadjahi2020statistical}.)
\end{proof}

\begin{remark}[Finite truncations are pseudometrics]
\label{rem:kl-pseudo}
For finite $R$ the support of $\sigma_R$ is the finite-dimensional subspace $\mathrm{span}\{\varphi_1,\dots,\varphi_R\}\subseteq\Cont$. When this subspace is proper, Step~1 of the proof of Theorem~\ref{thm:metric} does not apply; $\VSW_p^{(\sigma_R)}$ is in general only a pseudometric, with null relation given by agreement of the projected laws along that subspace (cf.\ Remark~\ref{rem:universality}). Lemma~\ref{lem:kl} therefore describes a sequence of pseudometrics converging uniformly to the metric $\VSW_p^{(\sigma)}$; separation is guaranteed in the limit $R\to\infty$, and may already hold at finite $R$ when the feature map separates inner measures.
\end{remark}

\begin{remark}[Random features are not covered by the coupling bound]
\label{rem:rff-coupling}
Proposition~\ref{prop:stability} requires a coupling of the two slicing measures. An independent random Fourier feature draw \eqref{eq:rff} is \emph{not} naturally coupled to the limiting process $f\sim\sigma$, so Lemma~\ref{lem:kl} does not apply to it; the covariance of \eqref{eq:rff} converges uniformly to $\kappa$ \citep{rahimi2007random}, but a quantitative transfer from covariance convergence to convergence of the sliced distances is left open, and we treat the induced bias empirically.
\end{remark}

\paragraph{The shared finite-feature implementation.}
\label{app:finite-features}
Condition on the sampled Fourier basis, let $\sigma_R$ be the conditional
law of $f_R$, and define
$\phi_R(x)=\sqrt{2/R}\big(\cos(\ip{\omega_r}{x}+b_r)\big)_{r=1}^R$
and $z_R(\mu)=\int\phi_R(x)\dd\mu(x)$.
Then $P_{f_R}(\mu)=a^\top z_R(\mu)$ for $a\sim\mathcal N(0,I_R)$.
Writing $\SW_p$ for classical slicing with uniform unit directions,
Gaussian radial symmetry gives the exact identity
\[
\VSW_p^{(\sigma_R)}(\bbP,\bbQ)^p
=\E\norm{a}^p\,
\SW_p^p\big((z_R)_\pf\bbP,(z_R)_\pf\bbQ\big).
\]
The distance itself has factor $(\E\norm{a}^p)^{1/p}$, equal to
$\sqrt R$ for $p=2$. The finite-feature map can send distinct measures to the same vector,
so this is generally a pseudometric on the original meta-measures.
Conditional on the basis, Proposition~\ref{prop:mc}'s variance argument
applies with $\sigma_R$ and its sup-norm moments in place of $\sigma$.
Increasing $L$ controls this conditional Monte Carlo error, while
no quantitative feature-error bound is provided here (Remark~\ref{rem:rff-coupling}).
For a direct finite-feature reference check, Proposition~\ref{prop:dirac}
gives $\VSW_2(\delta_\mu,\delta_\nu)=\MMD_\kappa(\mu,\nu)$,
whereas the conditional finite-feature value is
$\norm{z_R(\mu)-z_R(\nu)}_2$.

\paragraph{Gradient Flow}%

\begin{lemma}[No ties, almost surely]
\label{lem:ties}
Suppose $\kappa$ is characteristic and the clouds $\widehat\mu_1,\dots,\widehat\mu_N$ are pairwise distinct as measures (and similarly for the targets). Then for $\sigma$-a.e.\ $f$, the projected scalars $s_1(f),\dots,s_N(f)$ are pairwise distinct (and similarly $t_1,\dots,t_N$), so the sorting permutation $\pi_f$ matching $s_{(i)}$ to $t_{(i)}$ is locally constant in $X$ around any configuration with distinct clouds.
\end{lemma}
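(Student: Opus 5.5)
The plan is to show that for each pair $i\neq k$ the tie set $\{f: s_i(f)=s_k(f)\}$ is $\sigma$-null, and then take a finite union over the $\binom{N}{2}$ pairs (and likewise for the targets). By Lemma~\ref{lem:algebraic}, $s_i(f)-s_k(f)=P_f(\widehat\mu_i-\widehat\mu_k)=\ip{f}{\widehat\mu_i-\widehat\mu_k}$. This is a continuous linear functional of $f$ evaluated at a finite signed measure, namely a finite linear combination $\sum_j c_j f(z_j)$ of point evaluations. It is therefore a centered Gaussian random variable under $\sigma=\mathrm{GP}(0,\kappa)$, and its variance is $\MMD_\kappa^2(\widehat\mu_i,\widehat\mu_k)$ by \eqref{eq:mmd-var}.

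Since $\kappa$ is characteristic and $\widehat\mu_i\neq\widehat\mu_k$ as measures, the mean embeddings differ, so $\MMD_\kappa(\widehat\mu_i,\widehat\mu_k)>0$. A nondegenerate centered Gaussian has no atoms, so $\sigma(\{f:s_i(f)=s_k(f)\})=0$. The union over pairs is again null, which gives pairwise distinct $s_1(f),\dots,s_N(f)$ for $\sigma$-a.e.\ $f$; the same argument applies to $t_1,\dots,t_N$.

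For local constancy, fix such an $f$. Because all paths are a.s.\ continuous and $f$ is uniformly continuous on the compact $\Om$ (or $C^1$ under Assumption~\ref{ass:smooth}), each $s_i(f)=\frac1n\sum_j f(x_{ij})$ depends continuously on $X$. The gaps $|s_i(f)-s_k(f)|$ are strictly positive at $X$, so they stay positive on a neighborhood of $X$. Hence the increasing rearrangement of the $s_i$ is unchanged there. The $t_k$ do not depend on $X$, so their order is fixed, and the matching permutation $\pi_f$ is constant on that neighborhood.

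The main subtlety is the precise meaning of ``locally constant'', because the neighborhood depends on $f$. I would state the conclusion pointwise: for $\sigma$-a.e.\ $f$ there is a neighborhood $U_f$ of $X$ on which $\pi_f$ is constant. This is what the differentiability argument in Proposition~\ref{prop:gradient} requires, with dominated convergence handling the $f$-dependence of $U_f$ there. A minor additional point is checking that a finite combination of point evaluations is Gaussian under $\sigma$, which follows directly from the finite-dimensional distributions of the GP stated after Assumption~\ref{ass:kernel}.
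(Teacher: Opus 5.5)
Your proposal is correct and takes essentially the same approach as the paper: $s_i(f)-s_k(f)=\ip{f}{\widehat\mu_i-\widehat\mu_k}$ is a centered Gaussian with variance $\MMD_\kappa^2(\widehat\mu_i,\widehat\mu_k)$, which is positive because $\kappa$ is characteristic, so it vanishes with probability zero, and a finite union over pairs concludes. Your continuity argument for local constancy of $\pi_f$ on an $f$-dependent neighborhood of the fixed configuration is left implicit in the paper's proof of this lemma; the paper instead carries it out in Step~1 of the proof of Proposition~\ref{prop:gradient}.
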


\begin{proof}
For $i\ne i'$, $s_i(f)-s_{i'}(f)=\ip{f}{\widehat\mu_i-\widehat\mu_{i'}}$ is a centered Gaussian with variance $\MMD_\kappa^2(\widehat\mu_i,\widehat\mu_{i'})$, which is strictly positive because $\kappa$ is characteristic and the clouds are distinct; a nondegenerate Gaussian hits $0$ with probability zero. A union over the finitely many pairs concludes.
\end{proof}

\begin{algorithm}[t]
\caption{$\VSW$ flow for populations of point clouds (one step)}
\label{alg:flow}
\begin{algorithmic}[1]
\Require particles $X=(x_{ij})$, targets $\{y_{kl}\}$, kernel $\kappa$, slices $L$, features $R$, step $\eta$
\State sample fresh slices $f_1,\dots,f_L$ via \eqref{eq:rff}
\For{$l=1,\dots,L$}
  \State compute $s_i^{(l)},t_k^{(l)}$; sort both; obtain matching $\pi_l$
\EndFor
\State $g_{ij}\gets\frac{2}{NnL}\sum_{l}\big(s_i^{(l)}-t^{(l)}_{\pi_l(i)}\big)\nabla f_l(x_{ij})$
\State $x_{ij}\gets x_{ij}-\eta\,g_{ij}$ \Comment{or pass $g$ to Adam / reparametrized generator}
\end{algorithmic}
\end{algorithm}

\begin{remark}[Parametric generators.]
For a differentiable generator
$x_{ij}=g_\theta(z_i,\zeta_{ij})$, the particle gradients can
be backpropagated through the generator by the chain rule,
wherever the empirical objective is differentiable.
For a fixed kernel and empirical minibatch, independent
Gaussian-process slices give unbiased Monte Carlo estimates
of the empirical loss and, when differentiation and expectation
can be interchanged, its gradient. With finite random Fourier
features, the corresponding statement holds for the
finite-feature objective conditional on the sampled basis.
Under suitable differentiation and integrability assumptions,
averaging over random minibatches targets the gradient of
the expected empirical minibatch objective, which generally
differs from the population $\VSW_2^2$ objective.
Theorem~\ref{thm:sample} controls distance estimation error;
it does not establish convergence of the corresponding gradients.
\end{remark}

Finally, we prove Proposition \ref{prop:gradient} which holds under Assumptions~\ref{ass:kernel}--\ref{ass:smooth} and the hypotheses of Lemma~\ref{lem:ties}.

\begin{proof}[Proof of Proposition~\ref{prop:gradient}]

For $f\in C^1$ write
\[
F_f(X):=\min_{\pi\in S_N}\frac1N\sum_{i=1}^N\big(s_i(f)-t_{\pi(i)}(f)\big)^2,
\]
so that $\mathcal F=\E_\sigma[F_f]$ by \eqref{eq:1dot} (the increasing rearrangement realizes the minimum). Fix a coordinate $x_{ij}$ and let $\Om'\supset\Om$ be a fixed compact neighborhood containing all configurations under consideration.
In this proof, $\norm{f}_\infty$ denotes $\sup_{x\in\Om'}|f(x)|$.
We assume that the regularity and moment conditions of
Assumption~\ref{ass:smooth} hold on $\Om'$; then the field $\nabla f$ is defined and Borell--TIS applies on $\Om'$ (for the Gaussian RBF, $f$ extends to all of $\R^d$). The interchange of one partial derivative with $\E_\sigma$ proceeds in three steps.

\emph{Step 1 (pointwise differentiability for $\sigma$-a.e.\ $f$).} Let $A$ be the full-measure set of Lemma~\ref{lem:ties}. For $f\in A$ the projected scalars $s_1(f),\dots,s_N(f)$ (resp.\ $t_1(f),\dots,t_N(f)$) are pairwise distinct, so the minimizing permutation $\pi_f$ is unique and remains optimal on a ball $B(X,r_f)$ whose radius $r_f>0$ \emph{depends on $f$}, by continuity of $X\mapsto s_i(f)$. On this ball $F_f$ coincides with the smooth function $\frac1N\sum_i(s_i-t_{\pi_f(i)})^2$, whence
\[
\partial_{x_{ij}}F_f(X)=\tfrac{2}{Nn}\big(s_i(f)-t_{\pi_f(i)}(f)\big)\nabla f(x_{ij}),
\qquad \partial s_i/\partial x_{ij}=\tfrac1n\nabla f(x_{ij}).
\]

\emph{Step 2 (uniform integrable envelope for the difference quotients).} For every $f\in C^1$ (not only $f\in A$) and every fixed $\pi\in S_N$, the map $X\mapsto\frac1N\sum_i(s_i-t_{\pi(i)})^2$ is differentiable with partial gradient in $x_{ij}$ bounded in norm by
\[
E(f):=\tfrac{4}{Nn}\,\norm{f}_\infty\,\sup_{\Om'}\norm{\nabla f},
\]
since $|s_i-t_{\pi(i)}|\le2\norm{f}_\infty$. A pointwise minimum of finitely many functions that are $M$-Lipschitz along a segment is itself $M$-Lipschitz along that segment; hence every difference quotient of $F_f$ in the coordinate $x_{ij}$ is bounded by $E(f)$, \emph{uniformly in the step size}. Moreover $E\in L^1(\sigma)$: by Cauchy--Schwarz,
\[
\E_\sigma\big[\norm{f}_\infty\,\sup_{\Om'}\norm{\nabla f}\big]
\le\big(\E_\sigma\norm{f}_\infty^2\big)^{1/2}\big(\E_\sigma\sup_{\Om'}\norm{\nabla f}^2\big)^{1/2}<\infty
\]
by Lemma~\ref{lem:gaussian-facts}(i) and Assumption~\ref{ass:smooth}.

\emph{Step 3 (dominated convergence).} As $h\to0$, the difference quotients $h^{-1}\big(F_f(X+he)-F_f(X)\big)$ converge for $\sigma$-a.e.\ $f$ (Step~1) and are dominated by $E\in L^1(\sigma)$ (Step~2); dominated convergence yields the partial derivative of $\mathcal F$ and formula \eqref{eq:gradient}.
Finally, the right-hand side of \eqref{eq:gradient} is continuous at
each configuration $X$ with pairwise distinct clouds. Indeed, if
$X_r\to X$, then for the full-measure set of slices having no ties at
$X$, continuity of the projected scalars implies
$\pi_f(X_r)=\pi_f(X)$ for all sufficiently large $r$. Hence the
integrand in \eqref{eq:gradient} converges pointwise, and the same
integrable envelope $E$ allows another application of dominated
convergence. Thus, all partial derivatives of $\mathcal F$ exist and are continuous near $X$, and $\mathcal F$ is differentiable at $X$.
\end{proof}

\section{Statistical Properties}\label{sec: stats}
\begin{corollary}[Error of the fully empirical estimator]
\label{cor:two-sample}
Let $\bbP,\bbQ\in\PPO$ be observed through the two-stage scheme with parameters $(N,n)$ and $(M,m)$ respectively, and let $\widehat{\VSW}_{p,L}$ denote the $L$-slice Monte Carlo estimator of Proposition~\ref{prop:mc}. Then
\[
\begin{aligned}
&\E\Big|\widehat{\VSW}_{p,L}\big(\widetilde\bbP_{N,n},\widetilde\bbQ_{M,m}\big)-\VSW_p(\bbP,\bbQ)\Big|
\\
&\qquad\le C\Big(N^{-1/(2p)}+n^{-1/2}+M^{-1/(2p)}+m^{-1/2}+L^{-1/(2p)}\Big),
\end{aligned}
\]
with $C=C\big(p,\Lambda_{2p}\big)$.
\end{corollary}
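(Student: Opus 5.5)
The plan is to split the error by the triangle inequality into three pieces: a Monte Carlo piece, a sampling piece for $\bbP$, and a sampling piece for $\bbQ$. Write $\widetilde\bbP:=\widetilde\bbP_{N,n}$ and $\widetilde\bbQ:=\widetilde\bbQ_{M,m}$. Since $\widehat{\VSW}_{p,L}$ denotes the $p$-th root of $\widehat{\VSW}^p_L$, we get
\[
\big|\widehat{\VSW}_{p,L}(\widetilde\bbP,\widetilde\bbQ)-\VSW_p(\bbP,\bbQ)\big|
\le
\big|\widehat{\VSW}_{p,L}(\widetilde\bbP,\widetilde\bbQ)-\VSW_p(\widetilde\bbP,\widetilde\bbQ)\big|
+\big|\VSW_p(\widetilde\bbP,\widetilde\bbQ)-\VSW_p(\bbP,\bbQ)\big|.
\]
We take the slices $f_1,\dots,f_L$ independent of the data.

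\emph{Sampling term.} By the triangle inequality for $\VSW_p$ (Theorem~\ref{thm:metric}), this term is at most $\VSW_p(\bbP,\widetilde\bbP)+\VSW_p(\bbQ,\widetilde\bbQ)$. By Jensen and Theorem~\ref{thm:sample},
\[
\E\,\VSW_p(\bbP,\widetilde\bbP)\le\big(\E\,\VSW_p^p(\bbP,\widetilde\bbP)\big)^{1/p}.
\]
Using subadditivity of $x\mapsto x^{1/p}$, this is bounded by $C(p,\Lambda_p)\,(N^{-1/(2p)}+n^{-1/2})$, as in Remark~\ref{rem:sample-rates}. The same bound holds for $\bbQ$ with $(M,m)$. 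Since $\Lambda_p\le\Lambda_{2p}^{1/2}$ by Jensen, the constant depends only on $(p,\Lambda_{2p})$.

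\emph{Monte Carlo term.} Condition on the data, so that $\widetilde\bbP$ and $\widetilde\bbQ$ are fixed meta-measures. The key scalar fact is that $|a^{1/p}-b^{1/p}|\le|a-b|^{1/p}$ for $a,b\ge0$ and $p\ge1$. This reduces the term to $\E\,|\widehat{\VSW}^p_L-\VSW_p^p|^{1/p}$. By Jensen (concavity of $x^{1/p}$) this is at most $\big(\E|\widehat{\VSW}^p_L-\VSW_p^p|\big)^{1/p}$. Proposition~\ref{prop:mc}, applied to the fixed pair $(\widetilde\bbP,\widetilde\bbQ)$, bounds the inner expectation by $2^p\sqrt{\Lambda_{2p}/L}$. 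That bound is uniform in the pair, because the variance bound uses only $|P_f|\le\norm{f}_\infty$. Hence the conditional error is at most $2\Lambda_{2p}^{1/(2p)}L^{-1/(2p)}$. Taking expectation over the data (tower property) preserves this bound.

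Summing the two pieces gives the stated inequality with $C=C(p,\Lambda_{2p})$.

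The main obstacle is converting bounds on $p$-th powers into bounds on distances without losing the rate. The inequality $|a^{1/p}-b^{1/p}|\le|a-b|^{1/p}$ handles this. It is exactly what turns the $L^{-1/2}$ rate of Proposition~\ref{prop:mc} into the weaker $L^{-1/(2p)}$ in the statement. A secondary point is independence: we need the slices independent of the samples so that conditioning is legitimate, together with the joint measurability already used in the proof of Theorem~\ref{thm:sample}.
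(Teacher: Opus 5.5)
Your proposal is correct and follows the paper's proof essentially step for step. Both split the error into a slicing error and a sampling error. The slicing error is handled by Proposition~\ref{prop:mc} together with $|a^{1/p}-b^{1/p}|\le|a-b|^{1/p}$, and the sampling error by the $\VSW_p$ triangle inequality, Jensen, and Theorem~\ref{thm:sample}. You also spell out points the paper leaves implicit: the slices are independent of the data, the Monte Carlo bound is uniform over the pair, and $\Lambda_p\le\Lambda_{2p}^{1/2}$.
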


\begin{proof}
By the triangle inequality for $\VSW_p$ (Theorem~\ref{thm:metric}),
$\big|\VSW_p(\widetilde\bbP_{N,n},\widetilde\bbQ_{M,m})-\VSW_p(\bbP,\bbQ)\big|\le\VSW_p(\bbP,\widetilde\bbP_{N,n})+\VSW_p(\bbQ,\widetilde\bbQ_{M,m})$,
and Theorem~\ref{thm:sample} bounds the expectation of each term after taking
$p$-th roots and applying Jensen. For the slicing error, Proposition~\ref{prop:mc} gives $\E|\widehat{\VSW}_{p,L}^p-\VSW_p^p|\le 2^p(\Lambda_{2p}/L)^{1/2}$ at the empirical meta-measures, and $|a^{1/p}-b^{1/p}|\le|a-b|^{1/p}$ for $a,b\ge0$ converts this into a bound on the estimator itself. Combining the three contributions gives the claim.
\end{proof} 

\begin{proposition}[Stability in the slicing measure]
\label{prop:stability}
Let $\sigma,\sigma'$ be two slicing measures on $\Cont$ with finite sup-norm
moments of order $2(p-1)\vee 2$, and let $\VSW_p^{(\sigma)},\VSW_p^{(\sigma')}$ denote the corresponding distances. Then for any coupling $\Xi$ of $(\sigma,\sigma')$,
\[
\big|\VSW_p^{(\sigma)}(\bbP,\bbQ)^p-\VSW_p^{(\sigma')}(\bbP,\bbQ)^p\big|
\;\le\;2p\;\E_{(f,g)\sim\Xi}\Big[\big(2\max(\norm f_\infty,\norm g_\infty)\big)^{p-1}\,\norm{f-g}_\infty\Big].
\]
In particular, if a sequence of finite-rank approximations $f_R\sim\sigma_R$ can be coupled to $f\sim\sigma$ so that $\E\norm{f_R-f}_\infty^2\to0$ and $\sup_R\E\norm{f_R}_\infty^{2(p-1)}<\infty$ (both automatic from $L^2$ convergence when $p\le2$), then $\VSW_p^{(\sigma_R)}\to\VSW_p^{(\sigma)}$ uniformly on $\PPO\times\PPO$. Lemma~\ref{lem:kl} verifies this hypothesis for truncated Karhunen--Lo\`eve expansions.
\end{proposition}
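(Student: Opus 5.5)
The plan is to compare the two powered distances slice by slice under the coupling $\Xi$. Write $h(f):=\W_p^p\big((P_f)_\pf\bbP,(P_f)_\pf\bbQ\big)$ for the integrand in \eqref{eq:vsw}. Since $\Xi$ has marginals $\sigma$ and $\sigma'$,
\[
\VSW_p^{(\sigma)}(\bbP,\bbQ)^p-\VSW_p^{(\sigma')}(\bbP,\bbQ)^p=\E_{(f,g)\sim\Xi}\big[h(f)-h(g)\big].
\]
Here $h$ is Borel, being continuous by Lemma~\ref{lem:regularity}(ii). Everything therefore reduces to a pointwise bound on $|h(f)-h(g)|$ for fixed $f,g\in\Cont$. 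Integrating that bound against $\Xi$ then gives the stated inequality.

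For the pointwise bound, let $a:=\W_p((P_f)_\pf\bbP,(P_f)_\pf\bbQ)$ and $b:=\W_p((P_g)_\pf\bbP,(P_g)_\pf\bbQ)$.
\begin{itemize}
\item By the 2-Lipschitz estimate of Lemma~\ref{lem:regularity}(ii), $|a-b|\le 2\norm{f-g}_\infty$.
\item By the finiteness argument in the same proof, $a\le 2\norm f_\infty$ and $b\le 2\norm g_\infty$, so both lie in $[0,M]$ with $M:=2\max(\norm f_\infty,\norm g_\infty)$.
\item The mean value theorem for $t\mapsto t^p$ on $[0,M]$ gives $|a^p-b^p|\le pM^{p-1}|a-b|$.
\end{itemize}
Combining these, $|h(f)-h(g)|\le 2p\,M^{p-1}\norm{f-g}_\infty$. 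Taking expectations under $\Xi$ yields the displayed inequality. The right-hand side is finite whenever the moment hypotheses hold, by Cauchy--Schwarz: $\E[M^{p-1}\norm{f-g}_\infty]\le (\E M^{2(p-1)})^{1/2}(\E\norm{f-g}_\infty^2)^{1/2}$. This is exactly why moments of order $2(p-1)\vee2$ are assumed.

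For the "in particular" part, I would apply the same Cauchy--Schwarz split to the coupling of $f_R$ and $f$. This bounds the difference of $p$-th powers by
\[
2p\,\big(\E M_R^{2(p-1)}\big)^{1/2}\big(\E\norm{f_R-f}_\infty^2\big)^{1/2},
\]
where $M_R:=2\max(\norm{f_R}_\infty,\norm f_\infty)$. The bound is independent of $(\bbP,\bbQ)$. Moreover $\E M_R^{2(p-1)}\le 2^{2(p-1)}\big(\E\norm{f_R}_\infty^{2(p-1)}+\Lambda_{2(p-1)}\big)$, which is bounded in $R$ by hypothesis. Hence $\VSW_p^{(\sigma_R)p}\to\VSW_p^{(\sigma)p}$ uniformly on $\PPO\times\PPO$.

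To pass from $p$-th powers to the distances themselves, I would use $|x^{1/p}-y^{1/p}|\le|x-y|^{1/p}$, as in the proof of Corollary~\ref{cor:two-sample}. This preserves uniformity.

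When $p\le2$, the moment condition is automatic. Indeed $2(p-1)\le2$, so $L^2$ convergence of $f_R$ to $f$ bounds $\E\norm{f_R}_\infty^{2}$ uniformly in $R$, and Jensen handles the lower moment.

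The only delicate point is measurability of $(f,g)\mapsto h(f)-h(g)$ on the product space, which is what justifies the expectation under $\Xi$. This follows from continuity of $h$ on the separable Banach space $\Cont$.
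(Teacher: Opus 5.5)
Your proposal is correct and follows essentially the same route as the paper's proof. Both use the slicewise $2$-Lipschitz bound from Lemma~\ref{lem:regularity}(ii), the a priori bound $\W_p\le 2\norm{f}_\infty$, the estimate $|a^p-b^p|\le p\max(a,b)^{p-1}|a-b|$, integration against $\Xi$, and Cauchy--Schwarz for the uniform statement. Your explicit treatment of measurability, the $p$-th root step via $|x^{1/p}-y^{1/p}|\le|x-y|^{1/p}$, and the automatic moment bound for $p\le 2$ only fills in details the paper leaves implicit.
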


\begin{proof}
Write $h(f):=\W_p\big((P_f)_\pf\bbP,(P_f)_\pf\bbQ\big)\in[0,2\norm f_\infty]$. By Lemma~\ref{lem:regularity}(ii), $|h(f)-h(g)|\le2\norm{f-g}_\infty$, and $|h(f)^p-h(g)^p|\le p\max(h(f),h(g))^{p-1}|h(f)-h(g)|$. Take expectations under $\Xi$ and use $\max(h(f),h(g))\le2\max(\norm f_\infty,\norm g_\infty)$; the marginals reproduce the two sliced distances. The uniformity claim follows by Cauchy--Schwarz on the right-hand side.
\end{proof}

\begin{lemma}[KL truncations satisfy the coupling hypothesis]
\label{lem:kl}
Let $\kappa$ satisfy Assumption~\ref{ass:kernel}, let $(\lambda_k,\varphi_k)_{k\ge1}$ be the Mercer eigenpairs of $\kappa$ on $L^2(\Om,\iota)$ for a fully supported finite Borel measure $\iota$ on $\Om$, and set
\[
f_R\;=\;\sum_{k=1}^{R}\sqrt{\lambda_k}\,\xi_k\,\varphi_k,
\qquad \xi_k\stackrel{\text{iid}}{\sim}\mathcal N(0,1).
\]
Then $\{\sqrt{\lambda_k}\varphi_k\}_{k\ge1}$ is an orthonormal basis of the Cameron--Martin space $\Hk$ of $\sigma$, the series converges almost surely in $(\Cont,\norm{\cdot}_\infty)$ with limit $f$ distributed as $\sigma$, and $\E\norm{f-f_R}_\infty^2\to0$. Consequently $\VSW_p^{(\sigma_R)}\to\VSW_p^{(\sigma)}$ uniformly on $\PPO\times\PPO$ by Proposition~\ref{prop:stability}.
\end{lemma}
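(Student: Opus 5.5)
The plan is to read the statement as the classical Karhunen--Lo\`eve theorem for a continuous Gaussian process, and then feed the resulting coupling into Proposition~\ref{prop:stability}. The argument has four steps.

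\emph{Step 1: Mercer structure.} Since $\kappa$ is continuous on the compact $\Om$ and $\iota$ is finite, the integral operator $T_\kappa g=\int\kappa(\cdot,y)g(y)\dd\iota(y)$ is compact, self-adjoint and positive on $L^2(\Om,\iota)$. Full support of $\iota$ gives Mercer's theorem with continuous eigenfunctions $\varphi_k$ (for $\lambda_k>0$) and
\[
\kappa(x,y)=\sum_k\lambda_k\varphi_k(x)\varphi_k(y),
\]
with absolute and uniform convergence. The standard Mercer description of the RKHS then shows that $e_k:=\sqrt{\lambda_k}\varphi_k$ is orthonormal in $\Hk$. These $e_k$ are complete because any $h\in\Hk$ orthogonal to all $e_k$ satisfies $h(x)=\ip{h}{\kappa(\cdot,x)}_{\Hk}=\sum_k\lambda_k\varphi_k(x)\ip{h}{\varphi_k}_{\Hk}=0$. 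Since $\Hk$ is the Cameron--Martin space of $\sigma=\mathrm{GP}(0,\kappa)$ \citep{vaart2008reproducing,bogachev1998gaussian}, this gives the first claim.

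\emph{Step 2: almost sure uniform convergence.} Realize $f\sim\sigma$ as a continuous version (Assumption~\ref{ass:kernel}) and set $\xi_k:=\lambda_k^{-1/2}\int f\varphi_k\dd\iota$. These variables are jointly Gaussian, centered, and have covariance $\delta_{kl}$ by Fubini and the eigen-equation, so they are i.i.d.\ $\mathcal N(0,1)$. With this choice, $f_R$ is exactly the truncation in the statement, coupled to $f$. For each $x$, the $L^2(\bbP)$ error satisfies
\[
\E|f(x)-f_R(x)|^2=\kappa(x,x)-\sum_{k\le R}\lambda_k\varphi_k(x)^2\to0
\]
uniformly in $x$ by Mercer, so $f_R(x)\to f(x)$ in probability pointwise. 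Uniform convergence is the main obstacle, since pointwise $L^2$ convergence does not give it. I would apply the It\^o--Nisio theorem in the separable Banach space $\Cont$. The summands $\sqrt{\lambda_k}\xi_k\varphi_k$ are independent and symmetric, and for every $\eta\in\mathcal M(\Om)=\Cont^*$ the partial sums satisfy $\ip{f_R}{\eta}\to\ip{f}{\eta}$ in probability. This last fact comes from $\E|\ip{f-f_R}{\eta}|^2=\iint(\kappa-\kappa_R)\dd\eta\otimes\dd\eta\to0$, where $\kappa_R$ is the truncated Mercer sum, using uniform convergence $\kappa_R\to\kappa$. It\^o--Nisio then upgrades this to almost sure convergence $f_R\to f$ in $\norm{\cdot}_\infty$ \citep{ledoux1991probability}. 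As a result, the limit has law $\sigma$.

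\emph{Step 3: $L^2$ convergence.} The partial sums are a symmetric martingale in $\Cont$, so L\'evy's inequality gives $\sup_R\norm{f_R}_\infty$ the same Gaussian tail control as $\norm{f}_\infty$. Alternatively, the contraction principle gives $\E\sup_R\norm{f_R}_\infty^r\le C_r\E\norm{f}_\infty^r<\infty$, combined with Lemma~\ref{lem:gaussian-facts}(i). Either way, $\norm{f-f_R}_\infty^2\le 2\norm f_\infty^2+2\sup_R\norm{f_R}_\infty^2$ is integrable. Dominated convergence together with Step 2 then yields $\E\norm{f-f_R}_\infty^2\to0$, and also $\sup_R\E\norm{f_R}_\infty^{2(p-1)}<\infty$ for every $p$.

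\emph{Step 4: conclusion.} The coupling $(f_R,f)$ satisfies both hypotheses of Proposition~\ref{prop:stability}. Its bound is independent of $(\bbP,\bbQ)$, and Cauchy--Schwarz reduces it to $C\,(\E\norm{f-f_R}_\infty^2)^{1/2}\to0$. This gives uniform convergence of $\VSW_p^{(\sigma_R)}(\bbP,\bbQ)^p$ to $\VSW_p^{(\sigma)}(\bbP,\bbQ)^p$. Finally, $|a^{1/p}-b^{1/p}|\le|a-b|^{1/p}$ transfers this to uniform convergence of the distances themselves.
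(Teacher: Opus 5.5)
Your proposal is correct and follows the same route as the paper. Both arguments use Mercer's theorem and the identification of $\Hk$ with the Cameron--Martin space, an It\^o--Nisio argument for almost sure uniform convergence in $\Cont$, an upgrade to $L^2$ convergence, and then Proposition~\ref{prop:stability} with Cauchy--Schwarz. The only difference is that you prove by hand what the paper cites. You build the coupling explicitly via $\xi_k=\lambda_k^{-1/2}\int f\varphi_k\dd\iota$ and check the weak-convergence hypothesis of It\^o--Nisio directly, where the paper invokes Bogachev's Thm.~3.5.1. You also get $L^2$ convergence from L\'evy's maximal inequality, Fernique and dominated convergence, where the paper cites the general $L^r$ convergence of almost surely convergent Gaussian series.
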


\begin{proof}
By Mercer's theorem on the compact space $\Om$ (with $\kappa$ continuous and $\iota$ fully supported), the eigenfunctions $\varphi_k$ are continuous and $\{\sqrt{\lambda_k}\varphi_k\}$ is an orthonormal basis of $\Hk$, which coincides with the Cameron--Martin space of the Gaussian measure $\sigma$ on $\Cont$ \citep[Ch.~2--3]{bogachev1998gaussian}. For any orthonormal basis $\{e_k\}$ of the Cameron--Martin space of a centered Radon Gaussian measure on a separable Banach space, the series $\sum_k\xi_ke_k$ with i.i.d.\ standard Gaussian coefficients converges almost surely in the Banach norm, with limit distributed as the measure \citep[Thm.~3.5.1]{bogachev1998gaussian}; this is an It\^o--Nisio-type theorem. Finally, almost surely convergent Gaussian series in a separable Banach space converge in $L^r$ for every $r<\infty$ \citep[Ch.~3]{ledoux1991probability}, so $\E\norm{f-f_R}_\infty^2\to0$; the coupling $\Xi=\law(f_R,f)$ in Proposition~\ref{prop:stability} together with Cauchy--Schwarz gives the uniform convergence of the sliced distances.
\end{proof}

\section{Ablation}
\label{app:ablation}
\noindent\textbf{Ablation on $R$.} The number of random Fourier features $R$ is
specific to VSW. We
therefore sweep $R\in\{8,16,\dots,1024\}$ in the dataset-distance setting of
Figure~\ref{fig:corr} on FashionMNIST, at the two largest matched budgets
$L=S=10^3$ and $L=S=10^4$ (Table~\ref{tab:r_ablation}), with the baseline
configurations of
Appendix~\ref{app:impl}. Each entry is the
mean $\pm$ standard deviation over $5$ independent draws of the features and slices,
for VSW and both baselines.
Runtimes follow the protocol of Figure~\ref{fig:otdd_runtime} and are given on the
$30$ pairs at their own sizes, which is the cost of the correlation beside them, and
at fixed sizes $10^4$ and $6\times10^4$. Larger $R$ generally improves
correlation at increased runtime, with limited gains beyond $R=512$.
For slicing budget $L=S=1000$, all tested $R\ge16$ give higher mean
Pearson and Spearman correlations than DSW and s-OTDD; $R=8$ is below DSW.
At $L=S=10^4$, $R=128$ is competitive with DSW (Pearson
$0.8468\pm0.0080$ versus $0.8396\pm0.0125$), with overlapping variability
across random draws. We use $R=128$ for its balance of correlation and runtime.

\begin{table}[!ht]
\centering
\caption{Ablation on the number of random Fourier features $R$ on FashionMNIST, at
the two matched slicing budgets $L=S=10^3$ (top) and $L=S=10^4$ (bottom). Each entry
is the mean $\pm$ standard deviation over $5$ independent draws of the features and
slices. Times are per distance evaluation: on the $30$ pairs at their own sizes, at a
fixed size of $10^4$, and on the full training set of $6\times10^4$.}
\label{tab:r_ablation}
\footnotesize
\setlength{\tabcolsep}{5pt}
\renewcommand{\arraystretch}{1.0}
\begin{tabular}{l cc ccc}
\toprule
\multirow{2}{*}{Method}
& \multicolumn{2}{c}{Correlation with OTDD ($\uparrow$)}
& \multicolumn{3}{c}{Time per evaluation (ms, $\downarrow$)} \\
\cmidrule(lr){2-3} \cmidrule(lr){4-6}
& Pearson & Spearman & $5$--$10$k & $10^4$ & $6\times10^4$ \\
\midrule
\multicolumn{6}{l}{\textit{Matched slicing budget $L=S=10^3$}} \\
VSW, $R=8$             & $0.6030 \pm 0.0548$ & $0.6015 \pm 0.0695$ & $3.4$  & $3.2$  & $12.8$ \\
VSW, $R=16$            & $0.7931 \pm 0.0362$ & $0.7750 \pm 0.0519$ & $3.6$  & $3.8$  & $17.7$ \\
VSW, $R=32$            & $0.7698 \pm 0.0393$ & $0.7623 \pm 0.0653$ & $4.6$  & $4.8$  & $25.3$ \\
VSW, $R=64$            & $0.8044 \pm 0.0164$ & $0.8174 \pm 0.0309$ & $5.6$  & $6.5$  & $36.4$ \\
VSW, $R=128$  & $0.8413 \pm 0.0115$ & $0.8575 \pm 0.0151$ & $8.2$ & $10.4$ & $61.4$ \\
VSW, $R=256$           & $0.8525 \pm 0.0049$ & $0.8767 \pm 0.0194$ & $12.7$ & $15.0$ & $130.2$ \\
VSW, $R=512$           & $0.8730 \pm 0.0116$ & $0.8878 \pm 0.0051$ & $22.1$ & $25.9$ & $224.3$ \\
VSW, $R=1024$          & $0.8803 \pm 0.0049$ & $0.8897 \pm 0.0117$ & $54.9$ & $75.5$ & $401.0$ \\
\cmidrule(lr){1-6}
DSW                    & $0.7215 \pm 0.0373$ & $0.7356 \pm 0.0359$ & $46.9$  & $47.7$  & $141.9$ \\
s-OTDD                 & $0.2813 \pm 0.1932$ & $0.2948 \pm 0.1429$ & $291.7$ & $388.6$ & $2403.4$ \\
\midrule
\multicolumn{6}{l}{\textit{Matched slicing budget $L=S=10^4$}} \\
VSW, $R=8$             & $0.5971 \pm 0.0676$ & $0.6032 \pm 0.0817$ & $4.5$   & $4.9$   & $13.6$ \\
VSW, $R=16$            & $0.7891 \pm 0.0374$ & $0.7733 \pm 0.0577$ & $5.8$   & $6.3$   & $21.0$ \\
VSW, $R=32$            & $0.7723 \pm 0.0421$ & $0.7691 \pm 0.0734$ & $8.6$   & $9.4$   & $29.5$ \\
VSW, $R=64$            & $0.8056 \pm 0.0173$ & $0.8145 \pm 0.0322$ & $14.2$  & $15.2$  & $44.5$ \\
VSW, $R=128$  & $0.8468 \pm 0.0080$ & $0.8629 \pm 0.0095$ & $25.8$ & $27.2$ & $79.0$ \\
VSW, $R=256$           & $0.8610 \pm 0.0104$ & $0.8842 \pm 0.0203$ & $45.3$  & $43.8$  & $158.8$ \\
VSW, $R=512$           & $0.8768 \pm 0.0070$ & $0.8893 \pm 0.0116$ & $84.3$  & $82.3$  & $289.0$ \\
VSW, $R=1024$          & $0.8789 \pm 0.0031$ & $0.8938 \pm 0.0124$ & $158.9$ & $175.2$ & $527.5$ \\
\cmidrule(lr){1-6}
DSW                    & $0.8396 \pm 0.0125$ & $0.8385 \pm 0.0151$ & $90.6$   & $104.4$  & $438.2$ \\
s-OTDD                 & $0.4897 \pm 0.0670$ & $0.5305 \pm 0.0608$ & $2955.9$ & $3889.8$ & $24193.5$ \\
\bottomrule
\end{tabular}
\end{table}

\section{Implementation Details}
\label{app:impl}

Below, we provide experimental details for all of the reported experiments in the paper. All experiments are conducted on an AMD EPYC 7702P CPU, or NVIDIA RTX A5000 GPUs. Unless stated otherwise, VSW uses the Gaussian RBF kernel $\kappa_\ell(x,y)=\exp\!\big(-\|x-y\|^2/(2\ell^2)\big)$, whose spectral measure in \eqref{eq:rff} is $\tau_{\kappa_\ell}=\mathcal{N}(0,\ell^{-2}I_d)$. The bandwidth $\ell>0$ is set by the median heuristic~\citep{gretton2012kernel}. We pool the points of all inner measures of the two meta-measures being compared, draw $2000$ of them uniformly at random, and set $\ell$ to the median of their pairwise Euclidean distances. The bandwidth is computed once, before the slices are drawn.

\noindent\textbf{Statistical Properties.}
Each meta-measure is a mixture of $3$ Gaussian inner measures with uniform weights. The meta-measure is $\bbP=\frac13\sum_{k=1}^{3}\delta_{\mathcal{N}(\mu_k,I_d)}$, a uniform distribution over three Gaussian inner measures whose means $\mu_k$ are equally spaced on a circle of radius $4$ in the first two coordinates. To sample an empirical meta-measure, we draw $N$ labels $k_1,\dots,k_N$ independently and uniformly from $\{1,2,3\}$, and observe each inner measure $\mathcal{N}(\mu_{k_i},I_d)$ through $n$ i.i.d.\ points. Each empirical meta-measure has $N$ inner measures observed through $n$ points each. VSW uses $R=256$, a median-heuristic bandwidth, and every curve in Figure~\ref{fig:stats} is the mean over $200$ runs. For the Monte Carlo rate, we compare two independent samples with $N=n=1000$ and report $|\widehat{\VSW}_{2,L}-\widehat{\VSW}_{2,L_{\max}}|$ for $L\le 10^4$, where the reference $L_{\max}=10^5$ uses the same slices. For the outer and inner rates, both meta-measures are drawn from the same distribution, and we use $L=10^4$. The outer rate varies $N$ with $n=1000$. The inner rate varies $n$ with $N=1000$, and both meta-measures share the same $N$ component labels, so that only the inner sampling error remains.

\noindent\textbf{Optimal Transport Dataset Distance.}
Following \citet{sotdd}, we draw $30$ random pairs of disjoint sub-datasets from the training sets of MNIST, FashionMNIST, and CIFAR-10. For each pair, we sample a size uniformly from $\{5000,6000,\dots,10000\}$ and take the first and the last that many indices of a random permutation of the training set. Each sub-dataset is a meta-measure whose inner measures are its class-conditional distributions of flattened images. As a reference, we compute the exact OTDD~\citep{alvarez2020otdd} with its default implementation. Following \citet{dsw}, we set the weight of its feature cost to zero, so that OTDD reduces to a Wasserstein-over-Wasserstein distance between the class-conditional distributions. s-OTDD~\citep{sotdd} and DSW~\citep{dsw} are used with their default hyper-parameters. We use a vectorized implementation of DSW that returns the same values as the released code under identical random draws, while being optimized so that the runtime comparison remains fair. VSW uses $R=128$, with a bandwidth set once per pair of sub-datasets by the median heuristic.

The correlations in Figure~\ref{fig:corr} pool all $30$ pairs across the sampled
sizes. They are not conditioned on sample size, so shared sample-size
dependence can contribute to the reported agreement. This protocol introduces
no controlled class-proportion or domain shifts.

For Figure~\ref{fig:otdd_runtime}, we time one distance evaluation between two sub-datasets whose size ranges from $1000$ to the full training set in steps of $5000$, for $L=S\in\{10^3,5\times10^3,10^4\}$. All methods run on CPU. We report the mean and standard deviation over $10$ random draws of the sub-datasets.

\noindent\textbf{Comparing Distributions of Point Clouds.}
We follow the protocol and released code of \citet[Sec.~5.3]{dsw}. The reference meta-measure contains $N=10$ training shapes of one class, and the target contains $M$ test shapes of the same class. We vary one factor at a time: (i) the number of target shapes $M\in\{1,\dots,19\}$ (class \emph{bed}, $n=m=50$); (ii) the standard deviation $\sigma_{\mathrm{noise}}\in[0,0.5]$ of Gaussian noise added to the target points (class \emph{sofa}, $M=10$, $n=m=50$); and (iii) the target resolution $m\in\{50,100,\dots,600\}$ (class \emph{chair}, $M=10$, $n=500$). Each setting is repeated over $5$ random draws of shapes. VSW uses $L=2000$ slices with $R=128$, and bandwidth is set by the median heuristic. For OT-NNA~\citep{yang2019pointflow}, WoW, and DSW, we use the implementations and hyperparameters of \citet{dsw}. Our only change is a vectorized DSW implementation, which ensures optimal implementation of DSW and fair runtime comparison. Under identical random draws it returns the same values as the released code, so it reduces the runtime of DSW without changing its values. In Figure~\ref{fig:pc_toy}, each curve is divided by its mean over the sweep, and shaded regions show $\pm1$ standard deviation over the repetitions.

For Table~\ref{tab:pc_time}, every method compares the full sets of $662$ ShapeNet test chairs~\citep{Chang2015ShapeNetAI} and $662$ PVD samples~\citep{zhou20213pvd}, with $2048$ points per cloud. All methods use the settings above and run on one NVIDIA RTX A5000 GPU. For VSW and DSW, we report the mean $\pm$ standard deviation over $6$ timed batches, while OT-NNA and WoW are timed once.

\noindent\textbf{Comparing Image Distributions via Patch Distributions.}
We follow the protocol and released code of \citet[Sec.~5.4]{dsw}. Each meta-measure is a batch of $32$ grayscale $64\times64$ Perlin-noise images (scale $100$, pixel values rescaled to $[0,1]$), and each image is represented by the uniform empirical measure of its $3249$ overlapping $8\times8$ patches in $\mathbb{R}^{64}$. In the persistence sweep, the reference batch has persistence $0.5$ (lacunarity $2.5$, $5$ octaves) and the target persistence varies over $\{0.1,0.25,0.5,0.75,1\}$. In the lacunarity sweep, the reference has lacunarity $1.5$ (persistence $1$, $6$ octaves) and the target lacunarity varies over $\{1,1.25,1.5,1.75,2\}$. Reference and target batches are generated from different random seeds, each sweep is repeated $5$ times, and all methods are evaluated on the same pairs of batches. VSW uses $L=2000$ slices with $R=128$ random Fourier features of a Gaussian RBF kernel, whose bandwidth is set by the median heuristic on the pooled patches. WoW and DSW are implemented as in the point-cloud experiment above. Following the released code, DSW uses $100\times100$ slices, a $50$-point quantile grid, and length scale $0.1$, and we again use a vectorized implementation that returns the same values as the released code. In Figure~\ref{fig:patch}, each curve is divided by its maximum within a run before averaging, which preserves the depth of its minimum at the reference parameter. Shaded regions show $\pm1$ standard deviation over the $5$ runs. The runtimes in Figure~\ref{fig:patch} are for one distance evaluation between two batches. For VSW and DSW, each evaluation is repeated $5$ times, and the mean is averaged over all $50$ pairs of batches. All runtimes are measured on the CPU.

\noindent\textbf{Gradient Flow on Natural Images.}
The target meta-measure $\bbQ$ consists of $N=5$ ImageNet classes~\citep{imagenet} (goose, koala, Persian cat, monarch, and macaw), each represented by the uniform empirical measure of its first $20$ training images. Each image is resized and centre-cropped to $64\times64$, scaled to $[0,1]$, and flattened, so that $\Om\subset\R^{12288}$. The source meta-measure has the same shape, with $N=5$ clouds of $n=20$ particles drawn i.i.d.\ from $\mathcal{N}(0,I)$. We run the particle scheme of Section~\ref{sec:flows}  for $110{,}000$ steps. At every step, we draw fresh slices ($L=8000$ slices, $R=4096$ random Fourier features of a Gaussian RBF kernel) and a Monte Carlo estimate of the gradient in Eq.~\eqref{eq:gradient}  by automatic differentiation. Since each particle carries mass $1/(Nn)$, we multiply the gradient by $Nn$ so that the step size does not depend on the number of particles. The particles are updated by gradient descent with heavy-ball momentum $0.9$, and the momentum is reset when the bandwidth changes. The bandwidth follows a two-phase schedule. The large bandwidth first moves each cloud towards a target cloud, and the smaller one then resolves individual images. The full run takes about $6$ minutes on one GPU.

In Figure~\ref{fig:grad_flow} (left), each row follows one particle of one class, and the last column shows its nearest ground-truth image of the matched class in Euclidean distance. For the right panel, we clip all snapshots to $[0,1]$, embed them with CLIP ViT-L/14~\citep{radford2021learning} using its standard preprocessing, and $\ell_2$-normalize the embeddings. We then project them with a 2D PCA fitted on the $100$ ground-truth embeddings only, so that all snapshots share the same axes. Each line traces one particle through the snapshots at steps $0$, $1$k, $2$k, $3$k, $5$k, $8$k, $15$k, $25$k, and $110$k.

\noindent\textbf{Parametric Generative Modeling.}
DSW~\citep{dsw} uses $100\times100$ slices and a $50$-point quantile grid, with its GP length scale tuned per dataset ($0.03$ and $0.003$). MMDSW~\citep{bonet2025flowing} is the MMD with the sliced-Wasserstein Riesz kernel $-\mathrm{SW}_2^r$, with $128$ projections and $r=1$ and $r=0.5$, respectively. For DSW~\citep{dsw}, we again use a vectorized implementation that returns the same values as the reference one. We report the point-level $\W_2^2$ between $4096$ generated points, one from each of $4096$ generated clouds, and $4096$ points from the target, computed with POT~\citep{flamary2021pot} and averaged over $2$ draws. The ground-truth level is $\W_2^2$ between two independent target samples of the same size. Curves show the mean and standard deviation over $3$ seeds, and the scatter plots in Figure~\ref{fig:cb_gen} show seed $1$. The wall-clock is measured on GPU.

\noindent\textbf{Shape Classification.}
We follow the protocol of \citet[Sec.~5.1]{dsw} and use their released implementations of all baselines. Each shape is a metric-measure space given by its pairwise distance matrix, which we map to a meta-measure on $\Om\subset\R$: the outer measure is uniform over the points, and the inner measure of point $i$ is the empirical distribution of its distances to all points, i.e., row $i$ of the distance matrix. For every method, we compute the full matrix of distances between shapes and report the mean and standard deviation of the $3$-NN accuracy over $1000$ random stratified $25\%/75\%$ train/test splits. FAUST-500 and FAUST-1000 use the $100$ MPI-FAUST training registrations~\citep{faust}. For 2D Shapes~\citep{beier2022gromov} and Animals~\citep{sumner2004meshes}, we use the preprocessed data released by \citet{beier2022gromov}. For MNIST-2000, we follow the construction of \citet{dsw}: for each digit $0$--$4$, we build $20$ shapes by sampling $2000$ points from the mean training image proportionally to pixel intensity, adding Gaussian noise with standard deviation $3$ pixels, and applying a random rotation, with Euclidean distances.

VSW uses $L=100$ and $R=16$. SQW and STLB use $100$ projections and a $10$-point quantile grid (length scale $0.01$ for SQW), and TLB, AE, and GW are used as in the released code, without a feature term. To compare the methods fairly, VSW, SQW, and STLB each draw a single random basis for the whole distance matrix, so that every shape is projected once rather than once per pair. AE caches its per-shape energy self-terms, which changes its distances by less than $0.2\%$. TLB and GW are deterministic and admit no such amortization. Runtimes in Table~\ref{tab:shape_classification} are the mean time per distance on CPU.
\end{document}